\documentclass{article}

\usepackage[final]{microtype}
\usepackage{graphicx}
\usepackage{subcaption}
\usepackage{booktabs}
\usepackage{hyperref}

\usepackage[preprint]{icml2026}

\usepackage{amsmath,amssymb,amsfonts,amsthm,mathtools}
\usepackage{bm}
\usepackage{graphicx}
\usepackage{url}
\usepackage{mathrsfs}
\usepackage{enumitem}
\usepackage{bbm}

\newcommand{\mat}[1]{\ensuremath{\begin{bmatrix} #1 \end{bmatrix}}}
\newcommand{\ol}[1]{\overline{#1}}

\renewcommand{\rho}{\varrho}
\newcommand{\mean}[1]{\mathbb{E}[#1]}
\newcommand{\var}[1]{\mathbb{V}\left[#1\right]}
\newcommand{\prob}[1]{\mathbb{P}\left[#1\right]}
\newcommand{\norm}[1]{\left\lVert #1 \right\rVert}
\renewcommand{\exp}[1]{\mathrm{exp}\left(#1\right)}
\newcommand{\inprod}[1]{\left\langle #1 \right\rangle}

\usepackage[capitalize,noabbrev]{cleveref}

\theoremstyle{plain}
\newtheorem{theorem}{Theorem}[section]
\newtheorem{proposition}[theorem]{Proposition}
\newtheorem{lemma}[theorem]{Lemma}
\newtheorem{corollary}[theorem]{Corollary}
\theoremstyle{definition}

\newtheorem{assumption}[theorem]{Assumption}
\theoremstyle{remark}

\newenvironment{sketchproof}{\proof}{\endproof}

\let\emptyset\varnothing

\icmltitlerunning{On Uniform Error Bounds for Kernel Regression under Non-Gaussian Noise}

\begin{document}

\twocolumn[
  \icmltitle{On Uniform Error Bounds for Kernel Regression under Non-Gaussian Noise}

  \icmlsetsymbol{equal}{*}

  \begin{icmlauthorlist}
    \icmlauthor{Johannes Teutsch}{lsr}
    \icmlauthor{Oleksii Molodchyk}{ics}
    \icmlauthor{Marion Leibold}{lsr}
    \icmlauthor{Timm Faulwasser}{ics}
    \icmlauthor{Armin Lederer}{nus}
  \end{icmlauthorlist}

    \icmlaffiliation{lsr}{Chair of Automatic Control Engineering,
      Department of Computer Engineering,
      Technical University of Munich,
      80333 Munich, Germany}
    \icmlaffiliation{ics}{Institute of Control Systems,
      Hamburg University of Technology,
      21073 Hamburg, Germany}
    \icmlaffiliation{nus}{
      Department of Electrical and Computer Engineering,
      National University of Singapore,
      117583 Singapore}

  \icmlcorrespondingauthor{Johannes Teutsch}{johannes.teutsch@tum.de}

  \icmlkeywords{Kernel Regression, Error Bounds}

  \vskip 0.3in
]

\printAffiliationsAndNotice{} 

\begin{abstract}
    Providing non-conservative uncertainty quantification for function estimates derived from noisy observations remains a fundamental challenge in statistical machine learning, particularly for applications in safety-critical domains. In this work, we propose novel non-asymptotic probabilistic uniform error bounds for kernel-based regression. Compared to related bounds in the literature that are restricted to (conditionally) independent sub-Gaussian noise, our bounds allow to consider a broad class of non-Gaussian distributions, such as sub-Gaussian, bounded, sub-exponential, and variance/moment-bounded noise. Moreover, our results apply to correlated and uncorrelated noise. We compare our proposed error bounds with existing results in terms of the induced uncertainty region and their performance in safe control, demonstrating the tightness of the proposed bounds.
\end{abstract}

\section{Introduction}

Many machine learning tasks can be formulated as the estimation of an unknown function from input/output data obtained from a finite number of noisy function evaluations. Widely used non-parametric learning approaches for this purpose are kernel-based methods, such as kernel ridge regression (KRR) \cite{Schoelkopf2001} and Gaussian Process regression (GPR) \cite{Rasmussen2006}; see \cite{kanagawa2025gaussian} for a discussion on connections between KRR and GPR. 
These kernel-based methods provide not only estimates of the unknown function, but also bounds on the error between the unknown function and its estimate. When employing function estimates in safety-critical domains such as safe Bayesian optimization \cite{sui2015safe}, reinforcement learning \cite{chua2018deep}, or robot control \cite{berkenkamp2023bayesian}, the availability of error bounds is of utmost importance for safety certificates. 

For practical utility, such error bounds need to be non-asymptotic (i.e., valid for a finite amount of data) and non-conservative. For the case of bounded noise on the function evaluations, various deterministic error bounds have been proposed, e.g., \cite{maddalena2021deterministic,scharnhorst2022robust,hashimoto2022learning,reed2025error,lahr2025optimal}. However, such deterministic error bounds tend to be overly conservative. Conservatism can be reduced by leveraging probabilistic knowledge of the noise, resulting in probabilistic error bounds \cite{Srinivas2009,abbasi2013online,Chowdhury2017,Fiedler2021}. While probabilistic error bounds allow for (high-probability) safety certificates even in the case of unbounded noise (e.g., Gaussian), they suffer mainly from two issues: i) the uncertainty stemming from insufficient exploration of the hypothesis space and noise corruption of observed data is not treated separately, resulting in unnecessary conservatism, and ii) they are restricted to the class of sub-Gaussian distributions. Recent works \cite{reed2025error,molodchyk2025towards} attempt to overcome the former issue, but fail to provide error bounds that hold uniformly over the input domain, crucial for safety-critical applications~\cite{Lederer2019}. 
While Chowdhury \& Gopalan~\yrcite{Chowdhury2019bayesian} have examined error bounds for heavy‑tailed distributions, their results rely on output statistics instead of purely noise statistics, thereby requiring additional knowledge on the unknown function.\vspace{-2mm}
\paragraph{Contributions} 
In this work, we propose novel probabilistic uniform error bounds for kernel-based regression. The proposed bounds leverage a separation of the uncertainty into two components: one stemming from lack of exploration of the function space, and one induced by the noise corruption of data. This separation allows us to derive error bounds tailored to the statistical properties of the noise distribution. 
In particular, we propose uniform error bounds for (a)~sub-Gaussian, (b)~bounded, (c)~sub-exponential, and (d)~variance-bounded noise. Furthermore, we extend our analysis and results to the cases of correlated, i.e. non-i.i.d., sub-Gaussian noise, as well as moment-bounded outputs.
We draw upon numerical examples to demonstrate the tightness of the proposed error bounds relative to related results from the literature \cite{abbasi2013online,Fiedler2021,Chowdhury2019bayesian}. These results underpin that the proposed probabilistic uniform error bounds can significantly improve upon common baselines.

\paragraph{Outline}
The problem setup is specified in Section~\ref{sec:setup}. In Section~\ref{sec:main}, we present the proposed error bounds. Section~\ref{sec:discuss} conceptually discusses the proposed error bounds in relation to the literature. In Section~\ref{sec:eval}, we numerically evaluate the proposed bounds, before we conclude the work in Section~\ref{sec:conclusion}.

\paragraph{Notation} 
We denote $\mathbb{N}_{a} \doteq \{1, 2, \dots, a\}$ with $\mathbb{N}_\infty = \mathbb{N}$. 
For $n\in\mathbb{N}$, we denote the $n \times n$ identity matrix as $\mathbf{I}_n$.
The weighted 2-norm of a vector $\mathbf{s}$ is $\lVert \mathbf{s}\rVert_{\mathbf{S}} \coloneqq \sqrt{\mathbf{s}^{\top} \mathbf{S} \mathbf{s}}$ with $\mathbf{S} \succeq \bm{0}$.
Given a compact set $\mathcal{X}$, we write $Z(\zeta,\mathcal{X})$ for its covering number with grid constant $\zeta > 0$, i.e., 
$Z(\zeta,\mathcal{X}) = {\mathrm{min}}_{\mathcal{X}_\zeta \subset \mathcal{X}}\, \lvert \mathcal{X}_\zeta \rvert\,\mathrm{s.t.}\,\forall x \in \mathcal{X}\, \exists x' \in \mathcal{X}_\zeta: \lVert x - x' \rVert \le \zeta$,
where $\lvert \mathcal{X}_\zeta \rvert$ denotes the cardinality of $\mathcal{X}_\zeta$.
We define all random variables (RVs) $X : \Omega \to \mathbb{R}$ as measurable functions on a common probability space $(\Omega, \mathcal{F}, \mathbb{P})$, with sample space $\Omega$, $\sigma$-algebra $\mathcal{F}$, and probability measure $\mathbb{P}: \mathcal{F} \to [0,1]$. 

\section{Problem Setup} \label{sec:setup}
We consider an unknown function $f: \mathcal{X} \to \mathbb{R}$ with $n_x$-dimensional inputs $x \in \mathcal{X} \subset \mathbb{R}^{n_x}$ and (for ease of exposition) scalar outputs, satisfying the following assumption. 
\begin{assumption} \label{asm:system}
     The function $f$ lies in the reproducing kernel Hilbert space (RKHS) $\mathcal{H}_k$ corresponding to a known positive definite kernel function $k: \mathcal{X} \times \mathcal{X} \to \mathbb{R}$. The domain $\mathcal{X} \neq \emptyset$ is compact and a bound $B \in \mathbb{R}_{\ge0}$ on the RKHS norm $\norm{f}_{\mathcal{H}_k} \le B$ of $f$ is known.
\end{assumption}
The upper bound $B$ on the RKHS norm of the unknown function $f$ limits the complexity of the function in the (possibly infinite-dimensional) hypothesis space $\mathcal{H}_k$ and is commonly assumed in the literature, e.g., \cite{Srinivas2009,abbasi2013online,Chowdhury2017,Fiedler2021,reed2025error,lahr2025optimal}.

Although the function $f$ is unknown, data consisting of $t \in \mathbb{N}$ input-output pairs is available, i.e., 
\begin{equation} \label{eq:dataset}
    \mathcal{D}_t = \left\{(x_i,\, y_i) \in \mathcal{X} \times \mathbb{R} \mid i \in \mathbb{N}_t \right\}.
\end{equation}
The output data $(y_1, \dots,  y_t)$ are obtained from finitely many function evaluations affected by noise, i.e., 
\begin{equation} \label{eq:system}
    y_i = f(x_i) + M_i(\omega), \quad i \in \mathbb{N}_t,
\end{equation}
with the noise $M_i$ being an RV whose distribution class will be specified later (see  Section~\ref{sec:randvar}). 

Using the data from \eqref{eq:dataset}, we define the Gramian as $\mathbf{K}_t \doteq \left[k(x_i,x_j)\right]_{i,j \in \mathbb{N}_t}$ and the vector of kernels centered on $x_i$ as $\mathbf{k}_t(x) \doteq [k(x_1,x)~\ldots~k(x_t,x)]^\top$. Let us further denote the stacked data vectors as $\mathbf{x}_t~=~\mat{x_1 ~ \dots ~ x_t}^{\top}$, $\mathbf{y}_t = \mat{y_1 ~ \dots ~ y_t}^{\top}$, and the vector of noise realizations as $\mathbf{M}_t(\omega)$ with the corresponding vector of RVs $\mathbf{M}_t = \mat{M_1 ~ \dots ~ M_t}^{\top}$. We assume that the noise mean is known, and thus $\mean{\mathbf{M}_t}=\bm{0}$ without loss of generality.\footnote{The mean $\mean{\mathbf{M}_t}\neq\bm{0}$ can be integrated into the estimate~\eqref{eq:kermean}, shifting the error analysis towards the centered noise $\mathbf{M}_t - \mean{\mathbf{M}_t}$.}
We phrase the estimation of the unknown function $f$ as a KRR problem \cite{Schoelkopf2001,kanagawa2025gaussian}, i.e.,
\begin{equation} \label{eq:krr_problem}
    \mu_t \doteq \underset{g \in \mathcal{H}_k}{\mathrm{argmin}} \sum_{i=1}^t (g(x_i) - y_i)^2 + \rho^2 \norm{g}^2_{\mathcal{H}_k},
\end{equation}
with regularization parameter $\rho > 0$. Thus, for given $x \in \mathcal{X}$, we obtain the common kernel-based point-estimate
\begin{equation}
    \mu_t(x) \doteq \mathbf{y}_t^{\top}\left(\mathbf{K}_t+\rho^2\mathbf{I}_t\right)^{-1}\mathbf{k}_t(x), \label{eq:kermean}
\end{equation}
corresponding to the mean predictor from GPR \cite{kanagawa2025gaussian} with posterior variance
\begin{equation}
    \sigma^2_{t}(x) \doteq k(x,x) - \mathbf{k}_t(x)^{\top}\left(\mathbf{K}_t+\rho^2\mathbf{I}_t\right)^{-1} \mathbf{k}_t(x). \label{eq:kervar}
\end{equation}

To employ the estimate~\eqref{eq:kermean} in safety-critical contexts (e.g., safe Bayesian optimization~\citep{sui2015safe}), we aim to find probabilistic uniform error bounds for the deviation between the estimate $\mu_t$ and the unknown function $f$, i.e., a function $\eta_t: \mathcal{X} \to \mathbb{R}_{\ge 0}$ such that
\begin{equation} \label{eq:errorbound_uniform}
    \prob{\forall x \in \mathcal{X}, t \in \mathbb{N}:\lvert f(x) - \mu_t(x)\rvert \le {\eta}_t(x) } \ge 1-\delta.
\end{equation}
Achieving this uniformity over all $x\in \mathcal{X}$ is challenging whenever $\mathcal{X}$ is uncountable \cite{Srinivas2009}.
To overcome this and to derive uniform bounds, we rely on the following mild assumption on the kernel \citep{lederer2023gaussian}. 
\begin{assumption} \label{asm:hoelder}
    The kernel $k$ is Hölder continuous on $\mathcal{X}$ with order $p\in(0,\,1]$ and constant $L\in\mathbb{R}_{\ge0}$, i.e., ${\lvert k(x,x') - k(x,x'')\rvert} \le L \lVert x' - x''\rVert^p ~~\forall x, x', x'' \in \mathcal{X}$.
\end{assumption}
Note that Hölder continuity with $p=1$ corresponds to Lipschitz continuity of the kernel $k$, which most commonly used kernels satisfy. For a discussion on Hölder continuity of kernels and its implications, we refer to Fiedler~\yrcite{fiedler2023lipschitz}.

\subsection{Classes of Random Variables} \label{sec:randvar}
Handling general noise distributions is often non-trivial. Therefore, we focus on specific subsets of the probability space $(\Omega, \mathbb{P}; \mathbb{R})$, in which RVs admit favorable properties:    
\paragraph{(a) Sub-Gaussian ($\mathcal{SG}$)} 
A real-valued RV $M \in (\Omega, \mathbb{P}; \mathbb{R})$ is zero-mean sub-Gaussian with variance proxy $\sigma^2\in \mathbb{R}_{\ge0}$, i.e., $M \in \mathcal{SG}(\sigma^2)$, if \cite{vershynin2018high}
\begin{equation*}
    \forall \lambda \in \mathbb{R}:~~ \mean{\exp{\lambda M}} \le \exp{0.5{\lambda^2 \sigma^2}}.
\end{equation*}
Furthermore, a vector-valued RV $\mathbf{M} \in (\Omega, \mathbb{P}; \mathbb{R}^n)$ is zero-mean sub-Gaussian with matrix variance proxy $\bm{\Sigma} \succeq \bm{0}$, i.e., $\mathbf{M} \in \mathcal{SG}(\bm{\Sigma})$, if \cite{ao2025stochastic}
\begin{equation*}
    \forall \bm{\lambda} \in \mathbb{R}^n:\, \mean{\exp{\bm{\lambda}^{\top} \mathbf{M}}} \le \exp{{0.5\lVert \bm{\lambda}\rVert_{\bm{\Sigma}}}}.\hfill
\end{equation*}
\paragraph{(b) Bounded ($\mathcal{L}^{\infty}$)} 
A real-valued RV $M \in (\Omega, \mathbb{P};\mathbb{R})$ is bounded by $\overline{m}\in \mathbb{R}_{\ge0}$, i.e., $M \in \mathcal{L}^\infty( \overline{m})$, if $\lvert M\rvert \leq \overline{m}$ $\mathbb{P}$-almost everywhere \citep{Sullivan2015}.
\paragraph{(c) Sub-Exponential ($\mathcal{SE}$)} 
A real-valued RV $M \in (\Omega, \mathbb{P};\mathbb{R})$ is zero-mean sub-exponential with parameters $\nu,\alpha \in \mathbb{R}_{\ge0}$, i.e., $M \in \mathcal{SE}(\nu^2,\,\alpha)$, if \cite{wainwright2019high}
\begin{equation*}
    \forall \lvert\lambda\rvert <1/{\alpha}:~~ \mean{\exp{\lambda M}} \le \exp{0.5\lambda^2 \nu^2}.
\end{equation*}
\paragraph{(d) Variance-bounded ($\mathcal{L}^2$)} 
 A real-valued RV $M \in (\Omega, \mathbb{P};\mathbb{R})$ is variance-bounded with variance $\sigma^2\in \mathbb{R}_{\ge0}$, i.e., $M \in \mathcal{L}^2(\sigma^2)$, if $\mathbb{V}[M] \leq \sigma^2$ \citep{Sullivan2015}.

Note that on the probability space $(\Omega, \mathbb{P}; \mathbb{R})$, the considered classes admit a hierarchy $\mathcal{L}^\infty \subset \mathcal{SG} \subset \mathcal{SE} \subset \mathcal{L}^2$. Each of these sets entails practically relevant distributions. For instance, Gaussians ($\mathcal{SG}$) are well suited to describe thermal noise on resistors \cite{johnsonThermalAgitationElectricity1928}. Some hyperbolic distributions ($\mathcal{SE}$) are effective in quantifying errors in wind power forecasting~\cite{hodge2012wind}. On the other hand, in economic studies, log-normal distributions ($\mathcal{L}^2$) can be used to describe evolutions of company sizes \cite{sutton1996gibrat}.

\section{Main Results} \label{sec:main}
We present probabilistic uniform error bounds of the form \eqref{eq:errorbound_uniform} for the deviation between the unknown function $f$ from \eqref{eq:system} and the kernel-based estimate $\mu_t$ from \eqref{eq:kermean}, considering the distribution classes presented in Section~\ref{sec:randvar}. To this end, we leverage the following general result on probabilistic uniform error bounds, using $\tilde{\sigma}_{t}(x) \doteq \sqrt{\sigma^2_{t}(x) - \rho^2 \norm{\mathbf{h}_t(x)}_2^2}$ and $\mathbf{h}_t(x) \doteq \left(\mathbf{K}_t+\rho^2\mathbf{I}_t\right)^{-1}\mathbf{k}_t(x)$ as short-hand notations.

\begin{lemma} \label{lem:errorbound_general}
    Consider the data in \eqref{eq:dataset} generated via \eqref{eq:system} under Assumption~\ref{asm:system} with noise realizations $\mathbf{M}_t(\omega) = \mat{M_1(\omega) ~ \dots ~ M_t(\omega)}^{\top}$. Furthermore, let $\eta_t^M: \mathcal{X} \to \mathbb{R}_{\ge 0}$ be such that, for every $\rho > 0$ and $\delta \in (0,1)$,
    \begin{align}
       \mathbb{P}\left[ \forall x \in \mathcal{X},\,t\in\mathbb{N}:\,\lvert\mathbf{M}_t^{\top}\mathbf{h}_t(x)\rvert \le {\eta}^{M}_t(x)\right] \geq 1 - \delta.\label{eq:noisebound_uniform}
    \end{align}
    Then, $\lvert f(x) - \mu_t(x) \rvert$ is bounded by the probabilistic uniform error bound~\eqref{eq:errorbound_uniform} with ${\eta}_t(x) = B \tilde{\sigma}_t(x) + {\eta}^{M}_t(x)$.
\end{lemma}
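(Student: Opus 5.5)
The plan is to split the estimation error into a deterministic part that does not depend on the noise and a stochastic part driven only by the noise, then bound each separately.

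\textbf{Step 1: decomposition.} Write $\mathbf{f}_t \doteq [f(x_1)~\dots~f(x_t)]^\top$. By \eqref{eq:system}, $\mathbf{y}_t = \mathbf{f}_t + \mathbf{M}_t(\omega)$. Inserting this into \eqref{eq:kermean} and using the notation $\mathbf{h}_t(x)$ gives
\begin{equation*}
f(x)-\mu_t(x) = \bigl(f(x) - \mathbf{f}_t^\top \mathbf{h}_t(x)\bigr) - \mathbf{M}_t^\top \mathbf{h}_t(x).
\end{equation*}
The triangle inequality then yields $\lvert f(x)-\mu_t(x)\rvert \le \lvert f(x) - \mathbf{f}_t^\top\mathbf{h}_t(x)\rvert + \lvert \mathbf{M}_t^\top\mathbf{h}_t(x)\rvert$.

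\textbf{Step 2: the deterministic term.} By the reproducing property, $f(x)=\langle f, k(x,\cdot)\rangle_{\mathcal{H}_k}$ and $f(x_i)=\langle f,k(x_i,\cdot)\rangle_{\mathcal{H}_k}$. Hence
\begin{equation*}
f(x)-\mathbf{f}_t^\top\mathbf{h}_t(x) = \bigl\langle f,\; k(x,\cdot)-\textstyle\sum_i h_{t,i}(x)k(x_i,\cdot)\bigr\rangle_{\mathcal{H}_k}.
\end{equation*}
Cauchy--Schwarz together with $\norm{f}_{\mathcal{H}_k}\le B$ bounds this by $B$ times the RKHS norm of the function $k(x,\cdot)-\sum_i h_{t,i}(x)k(x_i,\cdot)$. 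Expanding the squared norm, again via the reproducing property, gives $k(x,x) - 2\mathbf{h}_t^\top\mathbf{k}_t + \mathbf{h}_t^\top\mathbf{K}_t\mathbf{h}_t$. Since $\mathbf{K}_t\mathbf{h}_t = \mathbf{k}_t - \rho^2\mathbf{h}_t$, this simplifies to
\begin{equation*}
k(x,x) - \mathbf{h}_t^\top\mathbf{k}_t - \rho^2\norm{\mathbf{h}_t}_2^2 = \sigma_t^2(x)-\rho^2\norm{\mathbf{h}_t(x)}_2^2 = \tilde\sigma_t^2(x).
\end{equation*}
The last expression is therefore automatically nonnegative, and the deterministic term is bounded by $B\tilde\sigma_t(x)$. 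This bound holds for every $x\in\mathcal{X}$, every $t$ and every $\omega$.

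\textbf{Step 3: combining.} Let $E$ be the event in \eqref{eq:noisebound_uniform}, so that $\mathbb{P}[E]\ge 1-\delta$. On $E$, Steps 1 and 2 give $\lvert f(x)-\mu_t(x)\rvert\le B\tilde\sigma_t(x)+\eta^M_t(x)$ simultaneously for all $x$ and $t$. The event in \eqref{eq:errorbound_uniform} therefore contains $E$, and its probability is at least $1-\delta$; if it is not known to be measurable, this should be read as an inner-probability statement.

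The only nonroutine step is the algebraic identification of the RKHS norm with $\tilde\sigma_t$ in Step 2. It follows from the identity $\mathbf{K}_t\mathbf{h}_t=\mathbf{k}_t-\rho^2\mathbf{h}_t$, which uses the symmetry of $(\mathbf{K}_t+\rho^2\mathbf{I}_t)^{-1}$. Everything else is immediate.
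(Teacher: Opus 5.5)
Your proof is correct and follows the paper's argument: you use the same triangle-inequality split via $\mathbf{y}_t=\mathbf{f}_t+\mathbf{M}_t(\omega)$, the reproducing property with Cauchy--Schwarz, the expansion of $\lVert k(\cdot,x)-\sum_i h_i(x)k(\cdot,x_i)\rVert_{\mathcal{H}_k}^2$ to $\sigma_t^2(x)-\rho^2\lVert\mathbf{h}_t(x)\rVert_2^2$, and the final intersection with the event in \eqref{eq:noisebound_uniform}. Your identity $\mathbf{K}_t\mathbf{h}_t(x)=\mathbf{k}_t(x)-\rho^2\mathbf{h}_t(x)$ is only a more compact form of the matrix manipulation in \eqref{eq:helper_rkhsbound_reform}; it needs no symmetry of $(\mathbf{K}_t+\rho^2\mathbf{I}_t)^{-1}$, only the symmetry of the kernel in the cross term.
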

\begin{sketchproof}
    The assertion follows from the triangle inequality and the bound \eqref{eq:noisebound_uniform}; see Appendix~\ref{app:proof_generalbound}.
\end{sketchproof}

The error bound ${\eta}_t(x)$ from Lemma~\ref{lem:errorbound_general} decomposes the overall uncertainty into two terms: $B\tilde{\sigma}_t(x)$, arising from lack of sufficient exploration of the RKHS, and ${\eta}^{M}_t(x)$, induced by the noise in the data. 
Crucially, a function ${\eta}_t^M$ satisfying \eqref{eq:noisebound_uniform} is not readily available. In our following main result, we propose such functions ${\eta}_t^M$ for a wide range of (joint) non-Gaussian distributions of the noise $M_i$, $i \in \mathbb{N}_{t}$ (see Section~\ref{sec:randvar}), leading to tight uniform error bounds~\eqref{eq:errorbound_uniform}.

\begin{table*}[!t]
    \renewcommand{\arraystretch}{1.5}
    \setlength{\tabcolsep}{8pt}
    \centering
    \caption{Parameters of the proposed probabilistic uniform error bounds from Theorem~\ref{thm:errorbounds_uniform}, with $\beta_{1,t}\doteq2\ln(4\pi_t Z(\zeta_{t},\mathcal{X})/\delta)$, $\beta_{2,t}\doteq2\ln(4\pi_t t/\delta)$, $a_{1,t} \doteq \sqrt{L/({2\rho^2})}\zeta_{t}^{p/2}$, $a_{2,t} \doteq t L \zeta_{t}^{p}$, and $\Delta_t(\sigma) = \sqrt{\beta_{1,t}(\zeta_{t})}\sigma a_{1,t} + \sqrt{\beta_{2,t}} \sigma a_{2,t}$ for $\sigma\in\mathbb{R}_{\ge0}$}
    \scalebox{1}{
    \begin{tabular}{l|ll} \hline
        Noise class & Scaling factor & Discretization term \\ \hline
        \textbf{(a)} $\mathcal{SG}$: ~\eqref{eq:errorbound_subG_uniform} & $\beta^{\mathcal{SG}}_{t} \doteq \sqrt{\beta_{1,t}}$ & $\Delta^{\mathcal{SG}}_{t} \doteq \Delta_{t} (\sigma_M)$ \\ 
        \textbf{(b)} $\mathcal{L}^{\infty}$: \eqref{eq:errorbound_bnd_hoeffding_uniform} &$\beta^{\mathrm{bnd}}_{1,t} \doteq \sqrt{\beta_{1,t}}$ & $\Delta^{\mathrm{bnd}}_{1,t} \doteq \Delta_{t} (\overline{m})$\\
        \phantom{\textbf{(b)} $\mathcal{L}^{\infty}$:} \eqref{eq:errorbound_bnd_bernstein_uniform} & $\beta^{\mathrm{bnd}}_{2,t} \doteq \frac{1}{3}\beta_{1,t}$ & $\Delta^{\mathrm{bnd}}_{2,t} \doteq (\sqrt{\beta_{1,t}}\,\overline{\sigma} +\frac{1}{3}\beta_{1,t} \,\overline{m})a_{1,t} + (\sqrt{\beta_{2,t}}\,\overline{\sigma} +\frac{1}{3}\beta_{2,t} \,\overline{m}) a_{2,t}$\\
        \textbf{(c)} $\mathcal{SE}$: ~\eqref{eq:errorbound_subE_uniform} & $\beta^{\mathcal{SE}}_{t} \doteq \beta_{1,t}$ & $\Delta^{\mathcal{SE}}_{t}\doteq \max\{ \beta_{1,t}\, \alpha_M,\,\sqrt{\beta_{1,t}}\, \nu_M\}a_{1,t} + \max\{ \beta_{2,t}\, \alpha_M,\,\sqrt{\beta_{2,t}}\, \nu_M\} a_{2,t}$ \\ 
        \textbf{(d)} $\mathcal{L}^2$: ~\eqref{eq:errorbound_L2_uniform}& $\beta^{\mathcal{L}2}_{t} \doteq \sqrt{\frac{1}{2}\exp{\frac{\beta_{1,t}}{2}}}$ & $\Delta^{\mathcal{L}2}_{t} \doteq \beta^{\mathcal{L}2}_{t}\sigma_M a_{1,t} + \sqrt{\frac{1}{2}\exp{\frac{\beta_{2,t}}{2}}} \sigma_M a_{2,t}$ \\[3mm] \hline
    \end{tabular}
    }
    \label{tab:bounds}
    \renewcommand{\arraystretch}{1}
\end{table*}

\begin{theorem}[Uniform error bounds] \label{thm:errorbounds_uniform}
    Consider data \eqref{eq:dataset} generated via \eqref{eq:system} under Assumption~\ref{asm:system} with realizations $M_1(\omega), \dots,  M_t(\omega)$ of i.i.d. noise $M_i$, $i \in \mathbb{N}_t$, some $\{\pi_t\}_{t=1}^{\infty}$ that satisfies $\sum_{t=1}^{\infty}\pi_t^{-1} = 1$ (e.g., $\pi_t = \pi^2 t^2 / 6$), and the covering number $Z(\cdot,\mathcal{X})$ of $\mathcal{X}$.\\     
    If the kernel $k(\cdot,\cdot)$ satisfies Assumption~\ref{asm:hoelder}, then, for every $\rho > 0$, $\delta \in (0,1)$, and grid constants $\zeta_t>0$, the regression error $\lvert f(x) - \mu_t(x) \rvert$ is bounded by~\eqref{eq:errorbound_uniform} with ${\eta}_t(x) = B \tilde{\sigma}_t(x) + {\eta}^{M}_t(x)$ and ${\eta}^{M}_t(x)$ defined as follows:
   
    \paragraph{(a)} If $M_i \in \mathcal{SG}(\sigma_M^2)$, then ${\eta}^{M}_t(x) = {\eta}^{\mathcal{SG}}_t(x)$ with
    \begin{equation} \label{eq:errorbound_subG_uniform}
        {\eta}^{\mathcal{SG}}_t(x) \doteq  \beta^{\mathcal{SG}}_{t}\,\sigma_M\norm{\mathbf{h}_t(x)}_{2} + \Delta^{\mathcal{SG}}_{t},
    \end{equation}
    and parameters $\beta^{\mathcal{SG}}_{t}$, $\Delta^{\mathcal{SG}}_{t}$ from Table~\ref{tab:bounds}(a).
    
    \paragraph{(b)} If $M_i \in \mathcal{L}^{\infty}(\overline{m})$ and $\var{M_i} \le \overline{\sigma}^2$, then ${\eta}^{M}_t(x) = \min_{j\in\{1,2\}}{\eta}^\mathrm{bnd}_{j,t}(x)$ with  
    \begin{align} 
        {\eta}^\mathrm{bnd}_{1,t}(x) &\doteq  \beta^\mathrm{bnd}_{1,t}\, \overline{m} \lVert \mathbf{h}_t(x) \rVert_2 + \Delta^\mathrm{bnd}_{1,t},\label{eq:errorbound_bnd_hoeffding_uniform}
        \\
        {\eta}^\mathrm{bnd}_{2,t}(x) &\doteq \beta^\mathrm{bnd}_{1,t}\,\overline{\sigma} \lVert \mathbf{h}_t(x) \rVert_2 + \notag 
        \\ &~~~~~\beta_{2,t}^{\mathrm{bnd}} \,\overline{m} \lVert\mathbf{h}_t(x)\rVert_\infty + \Delta^{\mathrm{bnd}}_{2,t},\label{eq:errorbound_bnd_bernstein_uniform}
    \end{align}
   and parameters $\beta^{\mathrm{bnd}}_{1,t}$, $\beta^{\mathrm{bnd}}_{2,t}$, $\Delta^{\mathrm{bnd}}_{1,t}$, $\Delta^{\mathrm{bnd}}_{2,t}$ from Table~\ref{tab:bounds}(b).
    
    \paragraph{(c)} If $M_i \in \mathcal{SE}(\nu_M^2,\alpha_M)$, then ${\eta}^{M}_t(x) =  {\eta}^{\mathcal{SE}}_t(x)$ with
    \begin{align} \label{eq:errorbound_subE_uniform}
        {\eta}^{\mathcal{SE}}_t(x) \doteq \max&\{\beta^{\mathcal{SE}}_{t}\, \alpha_M \lVert \mathbf{h}_t(x)\rVert_\infty ,\notag \\ &  \sqrt{\beta^{\mathcal{SE}}_{t}}\, \nu_M \norm{\mathbf{h}_t(x)}_{2}\} + \Delta^{\mathcal{SE}}_{t},
    \end{align}
    and parameters $\beta^{\mathcal{SE}}_{t}$, $\Delta^{\mathcal{SE}}_{t}$ from Table~\ref{tab:bounds}(c).
    
    \paragraph{(d)} If $M_i \in \mathcal{L}^2(\sigma_M^2)$, then ${\eta}^{M}_t(x) = {\eta}^{\mathcal{L}2}_{t}(x)$ with         
    \begin{equation} \label{eq:errorbound_L2_uniform}
        {\eta}^{\mathcal{L}2}_{t}(x) \doteq  \beta^{\mathcal{L}2}_{t}~ \sigma_M \lVert \mathbf{h}_t(x)\rVert_2 + \Delta^{\mathcal{L}2}_{t},
    \end{equation}
    and parameters $\beta^{\mathcal{L}2}_{t}$, $\Delta^{\mathcal{L}2}_{t}$ from Table~\ref{tab:bounds}(d).
 
\end{theorem}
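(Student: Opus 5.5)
By Lemma~\ref{lem:errorbound_general}, it suffices to show that each proposed $\eta^M_t$ satisfies \eqref{eq:noisebound_uniform}. The plan has three steps: a pointwise concentration bound for the weighted sum $\mathbf{M}_t^\top\mathbf{h}_t(x)$ at finitely many grid points, a deterministic Hölder-type bound transferring this to all of $\mathcal{X}$, and a union bound over the grid and over $t$ using $\sum_t\pi_t^{-1}=1$.

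\textbf{Grid step.} Fix $t$ and a minimal $\zeta_t$-cover $\mathcal{X}_{\zeta_t}$ of size $Z(\zeta_t,\mathcal{X})$. For a fixed $x'$, the vector $\mathbf{h}_t(x')$ is deterministic, since the inputs are given. Hence $\mathbf{M}_t^\top\mathbf{h}_t(x')=\sum_i h_i M_i$ is a weighted sum of i.i.d. zero-mean variables, and we apply the class-specific concentration inequality at confidence level $\delta/(4\pi_t Z(\zeta_t,\mathcal{X}))$, which produces $\beta_{1,t}=2\ln(4\pi_t Z/\delta)$:
\begin{itemize}[leftmargin=*]
\item (a) Sub-Gaussian: the mgf factorises to give $\mathcal{SG}(\sigma_M^2\|\mathbf{h}\|_2^2)$, and Chernoff yields deviation $\sqrt{\beta_{1,t}}\sigma_M\|\mathbf{h}\|_2$ after a two-sided union bound.
\item (b) Bounded: Hoeffding gives the same form with $\overline m$. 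Bernstein, with variance $\overline\sigma^2\|\mathbf{h}\|_2^2$ and range $\overline m\|\mathbf{h}\|_\infty$, gives $\sqrt{\beta}\,\overline\sigma\|\mathbf{h}\|_2+\tfrac{\beta}{3}\overline m\|\mathbf{h}\|_\infty$. Allocating the confidence budget so that both events hold jointly justifies taking the minimum over $j$.
\item (c) Sub-exponential: $\sum h_iM_i\in\mathcal{SE}(\nu_M^2\|\mathbf{h}\|_2^2,\alpha_M\|\mathbf{h}\|_\infty)$, and the standard two-regime tail gives $\max\{\beta\alpha_M\|\mathbf{h}\|_\infty,\sqrt\beta\,\nu_M\|\mathbf{h}\|_2\}$.
\item (d) Variance-bounded: Chebyshev with variance $\sigma_M^2\|\mathbf{h}\|_2^2$ gives deviation $\sigma_M\|\mathbf{h}\|_2/\sqrt{\delta'}$. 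Substituting $\delta'=\delta/(4\pi_tZ)$ gives $\sqrt{\tfrac12\exp(\beta_{1,t}/2)}$, which matches the table.
\end{itemize}

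\textbf{Discretization step, expected to be the main obstacle.} For $x\in\mathcal{X}$ with nearest grid point $x'$, write $|\mathbf{M}_t^\top\mathbf{h}_t(x)|\le|\mathbf{M}_t^\top\mathbf{h}_t(x')|+|\mathbf{M}_t^\top(\mathbf{h}_t(x)-\mathbf{h}_t(x'))|$. Two estimates are needed:
\begin{itemize}[leftmargin=*]
\item $\|\mathbf{h}_t(x)\|_2\le\|\mathbf{h}_t(x')\|_2+\|\mathbf{h}_t(x)-\mathbf{h}_t(x')\|_2$, so that the grid bound transfers.
\item A bound on the increment term.
\end{itemize}
There are two natural ways to bound the increment. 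The first uses $\|(\mathbf{K}+\rho^2\mathbf{I})^{-1}\|\le\rho^{-2}$ and $\|\mathbf{k}_t(x)-\mathbf{k}_t(x')\|_\infty\le L\zeta_t^p$, which gives $\|\mathbf{h}(x)-\mathbf{h}(x')\|_1\le tL\zeta^p/\rho^2$-type bounds. The second is the RKHS argument: $\mathbf{h}(x)-\mathbf{h}(x')$ is the coefficient vector of the regularised interpolant of $k(\cdot,x)-k(\cdot,x')$, whose RKHS norm is at most $\sqrt{2L\zeta^p}$. This yields $\|\mathbf{h}(x)-\mathbf{h}(x')\|_2\lesssim\sqrt{L/(2\rho^2)}\zeta^{p/2}=a_{1,t}$. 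The term $a_{1,t}$ multiplies the grid-level constant (for example $\sqrt{\beta_{1,t}}\sigma_M$), which covers the growth of the $\|\mathbf{h}\|_2$ factor. The term $a_{2,t}=tL\zeta_t^p$ arises from bounding $|\mathbf{M}_t^\top\Delta\mathbf{h}|\le\|\mathbf{M}_t\|_\infty\|\Delta\mathbf{h}\|_1$ with $\|\Delta\mathbf{h}\|_1\le t\|\Delta\mathbf{h}\|_\infty$. In turn, $\|\mathbf{M}_t\|_\infty$ is controlled by applying the same scalar tail bound to each $M_i$ with a union bound over $i\le t$ at level $\delta/(4\pi_t t)$, which is where $\beta_{2,t}$ comes from. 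Verifying the exact constants, in particular whether a $\rho^{-2}$ factor is absorbed, is the delicate part; I would first try the RKHS representer bound. The $\Delta$ term in each case then has the shape "scalar tail quantile times $a_{1,t}$ plus scalar tail quantile at level $\beta_{2,t}$ times $a_{2,t}$", matching the table.

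\textbf{Union step.} For each $t$, two events are used: the grid event and the sup-noise event, each with failure probability at most $\delta/(2\pi_t)$; two-sidedness is already accounted for in the factor $4$. Summing over $t$ with $\sum_t\pi_t^{-1}=1$ gives total failure probability at most $\delta$, so \eqref{eq:noisebound_uniform} holds. Lemma~\ref{lem:errorbound_general} then concludes the proof.
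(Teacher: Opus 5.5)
Your outline matches the paper's proof. It uses Lemma~\ref{lem:errorbound_general}, class-specific pointwise bounds at each point of a minimal $\zeta_t$-cover with per-point failure probability $\delta/(2\pi_t Z(\zeta_t,\mathcal{X}))$, and the triangle inequality at the nearest grid point. It also uses the H\"older transfer of $\lVert\mathbf{h}_t\rVert_2$ and $\lVert\mathbf{h}_t\rVert_\infty$ (Lemma~\ref{lem:hoelder}) and a second event of budget $\delta/2$ for the increment, which produces $\beta_{2,t}$.

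The step you flag, however, is a genuine gap, and the closure you sketch fails. Write $\Delta\mathbf{h}=\mathbf{h}_t(x)-\mathbf{h}_t(x')$. Your bound $\lvert\mathbf{M}_t^\top\Delta\mathbf{h}\rvert\le\lVert\mathbf{M}_t\rVert_\infty\,t\,\lVert\Delta\mathbf{h}\rVert_\infty$ needs $\lVert\Delta\mathbf{h}\rVert_\infty\le L\zeta_t^p$, and nothing provides this. What does hold is $\lVert\Delta\mathbf{h}\rVert_1\le\sqrt{t}\,\lVert(\mathbf{K}_t+\rho^2\mathbf{I}_t)^{-1}\rVert_2\,\lVert\mathbf{k}_t(x)-\mathbf{k}_t(x')\rVert_2\le tL\zeta_t^p/\rho^2$. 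The $\rho^{-2}$ is real: take the unit-lengthscale squared-exponential kernel, $t=2$ inputs at distance $\rho$, $x$ their midpoint and $x'=x+\zeta_t$. Then $\lVert\Delta\mathbf{h}\rVert_1\approx\zeta_t/\rho$, far above $2L\zeta_t$ for small $\rho$.

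The paper splits the term differently. It bounds $\lvert\mathbf{M}_t^\top\Delta\mathbf{h}\rvert\le L\zeta_t^p\sum_i\lvert\mathbf{M}_t^\top\mathbf{a}_i\rvert$, with $\mathbf{a}_i$ the columns of $(\mathbf{K}_t+\rho^2\mathbf{I}_t)^{-1}$. It concentrates each $\mathbf{M}_t^\top\mathbf{a}_i$ at level $\delta/(2\pi_t t)$ and then uses $\lVert\mathbf{a}_i\rVert_2\le 1$, on the grounds that the singular values of $(\mathbf{K}_t+\rho^2\mathbf{I}_t)^{-1}$ are at most $1$. Those singular values are $1/(\lambda_j(\mathbf{K}_t)+\rho^2)$, so this holds only for $\rho\ge 1$; with $x_1=x_2$ one gets $\lVert\mathbf{a}_1\rVert_2\ge 1/(\sqrt{2}\rho^2)$.

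So your suspicion is right. Completed correctly, either route proves the theorem with $a_{2,t}$ replaced by $tL\zeta_t^p/\rho^2$, which agrees with Table~\ref{tab:bounds} only for $\rho\ge 1$. The step you could not close reflects a defect in the stated constant, not in your strategy. Your RKHS idea gives yet another valid increment bound, $\lvert\mathbf{M}_t^\top\Delta\mathbf{h}\rvert\le\lVert\mathbf{M}_t\rVert_2\,a_{1,t}$, but not the tabulated form.

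Smaller points:
\begin{itemize}
\item To obtain exactly $a_{1,t}$ you need the spectral bound behind Lemma~\ref{lem:hoelder}, $\rho^2\lambda/(\lambda+\rho^2)^2\le 1/4$. Comparing the KRR objective at its minimiser with the candidate $k(\cdot,x)-k(\cdot,x')$ only gives $\rho\lVert\Delta\mathbf{h}\rVert_2\le\lVert k(\cdot,x)-k(\cdot,x')\rVert_{\mathcal{H}_k}$, which is a factor $2$ too weak.
\item In (b), splitting the budget so that the Hoeffding and Bernstein events hold jointly would turn $4\pi_t$ into $8\pi_t$ inside $\beta_{1,t}$ and $\beta_{2,t}$. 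To keep the tabulated constants, choose at each grid point, and in each term of the increment bound, the deterministically smaller of the two radii. Each event then still costs $\delta/2$, and you conclude with $\min_j A_j+\min_j B_j\le\min_j(A_j+B_j)$. The paper simply asserts the minimum, which as argued there only gives $1-2\delta$.
\item In (d), Chebyshev has no two-sided factor. The per-point level must therefore be $\delta/(2\pi_t Z(\zeta_t,\mathcal{X}))$, not $\delta/(4\pi_t Z(\zeta_t,\mathcal{X}))$, to obtain $\sqrt{\tfrac12\exp{\beta_{1,t}/2}}$.
\end{itemize}
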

\begin{sketchproof}
    For all cases \textbf{(a)}--\textbf{(d)}, the assertions follow from Lemma~\ref{lem:errorbound_general} by deriving suitable functions ${\eta}_t^M$ satisfying \eqref{eq:noisebound_uniform} based on the statistics of the noise $M_i$, $i\in\mathbb{N}_t$. First, we derive nonuniform probabilistic error bounds of the form 
    \begin{equation} \label{eq:errorbound_nonuniform}
        \forall x \in \mathcal{X}, t \in \mathbb{N}:\prob{\lvert f(x) - \mu_t(x)\rvert \le \tilde{\eta}_t(x) } \ge 1-\delta
    \end{equation}
    using concentration inequalities for the respective distribution class \cite{ao2025stochastic,vershynin2018high,wainwright2019high}. Then, we obtain time-uniform versions of these nonuniform bounds by setting $\delta \leftarrow \delta/\pi_t$ and applying the union bound over all times \cite{Srinivas2009}.
    To obtain uniform bounds over all $x \in \mathcal{X}$, we leverage ideas from Lederer et al.~\yrcite{Lederer2019}: Consider a discretized version $\mathcal{X}_\zeta$ of the input domain $\mathcal{X}$ with $\lvert \mathcal{X}_\zeta \rvert = Z(\zeta_t,\mathcal{X})$ grid points and grid width $\zeta_t$, i.e., ${\max_{x \in \mathcal{X}} \min_{x'\in\mathcal{X}_\zeta} \lVert x - x' \rVert} \le \zeta_t$. Using the triangle inequality, we obtain $\lvert \mathbf{M}_t^{\top}\mathbf{h}_t(x) \rvert \le \lvert \mathbf{M}_t^{\top}\mathbf{h}_t([x]_\zeta) \rvert + \lvert \mathbf{M}_t^{\top}\left(\mathbf{h}_t(x) - \mathbf{h}_t([x]_\zeta)\right)\rvert$, with $[x]_\zeta \in \mathcal{X}_\zeta$ denoting the closest point in $\mathcal{X}_\zeta$ to $x$. A uniform bound for $\lvert \mathbf{M}_t^{\top}\mathbf{h}_t([x]_\zeta) \rvert$ results from applying the bounds~\eqref{eq:errorbound_nonuniform} and the union bound over all $Z(\zeta_t,\mathcal{X})$ grid points of $\mathcal{X}_\zeta$, whereas a uniform bound for the term $\lvert \mathbf{M}_t^{\top}\left(\mathbf{h}_t(x) - \mathbf{h}_t([x]_\zeta)\right)\rvert$ results from the Hölder inequality and Assumption~\ref{asm:hoelder}. A detailed proof is presented in Appendix~\ref{app:proof_uniform}.
\end{sketchproof}
Note that for Theorem~\ref{thm:errorbounds_uniform}(b), one can employ independent grid constants $\zeta_t$ for \eqref{eq:errorbound_bnd_hoeffding_uniform} and \eqref{eq:errorbound_bnd_bernstein_uniform} respectively to further sharpen the bound, and also leverage $\overline{\sigma}^2 \le \overline{m}^2$ if $\var{M_i}$ is not available~\cite{wainwright2019high}. 
Furthermore, the sub-exponential bound~\eqref{eq:errorbound_subE_uniform} in Theorem~\ref{thm:errorbounds_uniform}(c) recovers the sub-Gaussian bound~\eqref{eq:errorbound_subG_uniform} for $\alpha_M \to 0$ and $\nu_M = \sigma_M$.

Crucially, the uniform bounds in Theorem~\ref{thm:errorbounds_uniform} depend on grid constants $\zeta_t$ (respectively, $\zeta_{t}$), which are an additional tuning parameter for the bound. With the choice of $\zeta_t$, we can balance the discretization error and the noise-induced uncertainty, e.g., measured by the term $\norm{\mathbf{h}_t(x)}_{2}$ in~\eqref{eq:errorbound_subG_uniform}: A larger choice of $\zeta_t$ increases the discretization error $\Delta^{\mathcal{SG}}_t$, while a smaller choice of $\zeta_t$ leads to an increase in the covering number $Z(\zeta_t,\mathcal{X})$, thus increasing the scaling $\beta^{\mathcal{SG}}_t(\zeta_t,\delta)$ of $\norm{\mathbf{h}_t(x)}_{2}$, cf. \citep{Lederer2019}. Exact computation of the $\zeta$-covering number $Z(\zeta, \mathcal{X})$ is often intractable; however, by considering an over-approximation of the set $\mathcal{X}$ in form of an $n_x$-dimensional hypercube with edge length $r = \max_{x,x' \in \mathcal{X}}\lVert x - x' \rVert_{\infty}$, one can employ the upper bound \citep{shalev2014understanding,omainska2023rigid}
\begin{equation} \label{eq:covering_ub}
    Z(\zeta,\mathcal{X}) \le \left( 1 + \sqrt{n_x} r/(2\zeta)\right)^{n_x}.
\end{equation}

Uniformity of the error bounds over all times $t\in \mathbb{N}$ and inputs $x\in\mathcal{X}$ is especially interesting for safe Bayesian optimization and for the analysis of regret bounds~\cite{Srinivas2009,Chowdhury2017}. In applications where only uniformity in the input $x$ is sufficient (e.g., as in \cite{Lederer2019}), one can simply set $\pi_t = 1$ in the proposed bounds from Theorem~\ref{thm:errorbounds_uniform}. Likewise, when uniformity over a finite time horizon $t \in \mathbb{N}_T$ is sufficient, one can set $\pi_t = T$, with $T \in \mathbb{N}$ and $T < \infty$. In addition, when only subsets $\tilde{\mathcal{X}} \subset \mathcal{X}$ of the input domain $\mathcal{X}$ are of relevance in practice, the employed discretization approach can be adapted to the subset $\tilde{\mathcal{X}}$ of interest, thus reducing conservatism. This is especially relevant, e.g., if a specific rule is used at each $t \in \mathbb{N}$ to select the new input $x_t$, potentially rendering the subsets of $\mathcal{X}\backslash\tilde{\mathcal{X}}$ inadmissible.

\subsection{Non-i.i.d. Noise}
Theorem \ref{thm:errorbounds_uniform} provides probabilistic uniform error bounds under i.i.d. noise. 
The following result generalizes the sub-Gaussian bound~\eqref{eq:errorbound_subG_uniform} to potentially correlated noise, focusing solely on the sub-Gaussian case for ease of exposition.

\begin{proposition} \label{prop:cor_subG}
    Consider the setting of Theorem~\ref{thm:errorbounds_uniform}(a) with $\beta^{\mathcal{SG}}_{t}$ and $\Delta_t(\cdot)$ from Table~\ref{tab:bounds} and $\mathbf{M}_t \in \mathcal{SG}(\mathbf{C}_t)$ for some $\mathbf{C}_t\succeq\bm{0}$ with maximum singular value $\varsigma_{\mathbf{C}_t} = \lVert \mathbf{C}_t\rVert_2$.\\ Then, $\lvert f(x) - \mu_t(x) \rvert$ is bounded by the probabilistic uniform error bound \eqref{eq:errorbound_uniform} with ${\eta}_t(x) = B \tilde{\sigma}_t(x)  + \overline{\eta}^{\mathcal{SG}}_t(x)$ where
    \begin{equation} \label{eq:errorbound_subG_uniform_corr}
        \overline{\eta}^{\mathcal{SG}}_t(x) \doteq  \beta^{\mathcal{SG}}_{t}\, \norm{\mathbf{h}_t(x)}_{\mathbf{C}_t} + \Delta_{t}(\sqrt{\varsigma_{\mathbf{C}_t}}).
    \end{equation}
\end{proposition}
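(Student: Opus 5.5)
By Lemma~\ref{lem:errorbound_general}, it suffices to show that $\overline{\eta}^{\mathcal{SG}}_t$ satisfies \eqref{eq:noisebound_uniform} when $\mathbf{M}_t \in \mathcal{SG}(\mathbf{C}_t)$. I would reuse the three-step structure from the sketch of Theorem~\ref{thm:errorbounds_uniform}: a pointwise concentration bound, a union bound over a $\zeta_t$-grid and over time, and a Hölder-continuity argument for the discretization remainder. The only change is that scalar i.i.d.\ variance proxies are replaced by the matrix proxy $\mathbf{C}_t$.

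\textbf{Pointwise step.} Fix $t$ and $x$; the vector $\mathbf{h}_t(x)$ is deterministic given the inputs. For any $\lambda\in\mathbb{R}$, set $\bm{\lambda}=\lambda\mathbf{h}_t(x)$ in the definition of $\mathcal{SG}(\mathbf{C}_t)$. This gives $\mathbb{E}[\exp(\lambda \mathbf{M}_t^\top \mathbf{h}_t(x))]\le \exp(\tfrac12\lambda^2\norm{\mathbf{h}_t(x)}_{\mathbf{C}_t}^2)$, where I read the definition with the squared weighted norm. Hence the scalar $\mathbf{M}_t^\top\mathbf{h}_t(x)$ is sub-Gaussian with proxy $\norm{\mathbf{h}_t(x)}^2_{\mathbf{C}_t}$. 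The Chernoff bound then yields $\lvert\mathbf{M}_t^\top\mathbf{h}_t(x)\rvert\le\sqrt{2\ln(2/\delta')}\,\norm{\mathbf{h}_t(x)}_{\mathbf{C}_t}$ with probability at least $1-\delta'$.

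\textbf{Grid and time step.} Take $\delta'=\delta/(2\pi_t Z(\zeta_t,\mathcal{X}))$ and union over the grid points and over $t$. This gives $\sqrt{\beta_{1,t}}=\beta^{\mathcal{SG}}_t$ on an event of probability at least $1-\delta/2$. The remaining $\delta/2$ budget is reserved for the discretization term.

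\textbf{Discretization step.} This is the main obstacle, because the i.i.d.\ argument presumably bounded $\norm{\mathbf{M}_t}$-type quantities coordinatewise using $\sigma_M$. I would write
\[
\lvert \mathbf{M}_t^\top(\mathbf{h}_t(x)-\mathbf{h}_t([x]_\zeta))\rvert
\le \lvert \mathbf{M}_t^\top(\mathbf{K}_t+\rho^2\mathbf{I}_t)^{-1}(\mathbf{k}_t(x)-\mathbf{k}_t([x]_\zeta))\rvert
\]
and then mirror whatever splitting produced $\Delta_t(\sigma)=\sqrt{\beta_{1,t}}\sigma a_{1,t}+\sqrt{\beta_{2,t}}\sigma a_{2,t}$.

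\emph{The $a_{2,t}$ term.} This term comes from bounding each coordinate via $\lvert k(x_i,x)-k(x_i,x')\rvert\le L\zeta_t^p$ together with $\lvert M_i\rvert\le\sqrt{\beta_{2,t}}\sigma$ for all $i\le t$, obtained by a union bound over the $t$ coordinates and all times. For correlated noise, each marginal $M_i=\mathbf{e}_i^\top\mathbf{M}_t$ is sub-Gaussian with proxy $(\mathbf{C}_t)_{ii}\le\varsigma_{\mathbf{C}_t}$. So the same bound holds with $\sigma$ replaced by $\sqrt{\varsigma_{\mathbf{C}_t}}$, and no independence is needed.

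\emph{The $a_{1,t}$ term.} This term presumably arises from bounding the difference vector in a norm via Hölder continuity, e.g.\ $\norm{\mathbf{h}_t(x)-\mathbf{h}_t(x')}^2\le L\zeta_t^p/(2\rho^2)$-type estimates from the RKHS geometry. It is then multiplied by a concentration bound on a grid-indexed sub-Gaussian quantity with scalar proxy. For correlated noise, I would use $\norm{\mathbf{v}}_{\mathbf{C}_t}^2\le\varsigma_{\mathbf{C}_t}\norm{\mathbf{v}}_2^2$, so every occurrence of $\sigma_M^2\norm{\mathbf{v}}_2^2$ becomes at most $\varsigma_{\mathbf{C}_t}\norm{\mathbf{v}}_2^2$. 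This gives $\Delta_t(\sqrt{\varsigma_{\mathbf{C}_t}})$.

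\textbf{Conclusion.} On the intersection of the events, which has probability at least $1-\delta$, the triangle inequality gives \eqref{eq:noisebound_uniform} with $\overline{\eta}^{\mathcal{SG}}_t$. Lemma~\ref{lem:errorbound_general} then yields the claim.
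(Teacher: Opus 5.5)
Your overall route is the paper's: reduce to Lemma~\ref{lem:errorbound_general}, use $\mathbf{M}_t^\top\mathbf{h}_t(x)\in\mathcal{SG}(\lVert\mathbf{h}_t(x)\rVert_{\mathbf{C}_t}^2)$, union-bound over the $\zeta_t$-grid and over time with budget $\delta/2$, and replace $\sigma_M$ by $\sqrt{\varsigma_{\mathbf{C}_t}}$ in the discretization terms via $\lVert\mathbf{v}\rVert_{\mathbf{C}_t}\le\sqrt{\varsigma_{\mathbf{C}_t}}\lVert\mathbf{v}\rVert_2$. No independence is needed for the first step, and you are right to read the vector definition with the squared weighted norm. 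However, your discretization step, which you reconstruct by guessing, is mis-assembled, so your final triangle inequality does not yet give $\overline{\eta}^{\mathcal{SG}}_t(x)$. The $a_{1,t}$ term does not come from $\mathbf{M}_t^\top(\mathbf{h}_t(x)-\mathbf{h}_t([x]_\zeta))$. That weight vector varies continuously with $x$, so no grid-indexed concentration bound controls it. In the paper this difference term is handled entirely by H\"older's inequality \eqref{eq:bound_helper} and produces only the $a_{2,t}$ part. The $a_{1,t}$ part comes from a step your conclusion omits. The grid event gives $\lvert\mathbf{M}_t^\top\mathbf{h}_t([x]_\zeta)\rvert\le\beta^{\mathcal{SG}}_t\lVert\mathbf{h}_t([x]_\zeta)\rVert_{\mathbf{C}_t}$, which is evaluated at the grid point. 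You must move it to $x$ using H\"older continuity of $x\mapsto\rho\lVert\mathbf{h}_t(x)\rVert_{\mathbf{C}_t}$ (order $p/2$, constant $\sqrt{\varsigma_{\mathbf{C}_t}L/2}$; Lemma~\ref{lem:hoelder}, \eqref{eq:hoelder_2norm_weight}). This gives $\lVert\mathbf{h}_t([x]_\zeta)\rVert_{\mathbf{C}_t}\le\lVert\mathbf{h}_t(x)\rVert_{\mathbf{C}_t}+\sqrt{\varsigma_{\mathbf{C}_t}}\,a_{1,t}$. Your estimate $\rho^2\lVert\mathbf{h}_t(x)-\mathbf{h}_t(x')\rVert_2^2\le\frac{L}{2}\lVert x-x'\rVert^p$ is the right ingredient, but it belongs in this leading term, not in the noise difference. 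It is also not automatic: the paper derives it by writing the left side as $\langle\phi(x)-\phi(x'),\tilde{\mathbf{A}}_t(\phi(x)-\phi(x'))\rangle_{\mathcal{H}}$ with $\mathrm{spec}(\tilde{\mathbf{A}}_t)\subset[0,1/4]$.

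For the $a_{2,t}$ term, your marginal-based variant is a legitimate alternative to the paper's. You use $(\mathbf{C}_t)_{ii}\le\varsigma_{\mathbf{C}_t}$, hence $\max_{i\le t}\lvert M_i\rvert\le\sqrt{\beta_{2,t}\varsigma_{\mathbf{C}_t}}$. The paper instead bounds the difference by $\lVert\mathbf{M}_t^\top(\mathbf{K}_t+\rho^2\mathbf{I}_t)^{-1}\rVert_1 L\zeta_t^p$ and concentrates $\mathbf{M}_t^\top\mathbf{a}_i$ for each column $\mathbf{a}_i$, with proxy $\lVert\mathbf{a}_i\rVert_{\mathbf{C}_t}^2\le\varsigma_{\mathbf{C}_t}\lVert\mathbf{a}_i\rVert_2^2$. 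But your coordinatewise kernel bound applies to $\mathbf{d}=\mathbf{k}_t(x)-\mathbf{k}_t([x]_\zeta)$, not to $\mathbf{h}_t(x)-\mathbf{h}_t([x]_\zeta)=(\mathbf{K}_t+\rho^2\mathbf{I}_t)^{-1}\mathbf{d}$. You still need a step such as $\lVert(\mathbf{K}_t+\rho^2\mathbf{I}_t)^{-1}\mathbf{d}\rVert_1\le t\,\lVert(\mathbf{K}_t+\rho^2\mathbf{I}_t)^{-1}\rVert_2\,L\zeta_t^p$. Here $\lVert(\mathbf{K}_t+\rho^2\mathbf{I}_t)^{-1}\rVert_2\le\rho^{-2}$, not $1$. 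The paper obtains $a_{2,t}=tL\zeta_t^p$ by asserting that the singular values of this inverse are at most one, which holds only for $\rho\ge1$. So if you fill in this step correctly, for $\rho<1$ you get the $a_{2,t}$ contribution with an extra factor $\rho^{-2}$, not the tabulated $\Delta_t(\sqrt{\varsigma_{\mathbf{C}_t}})$.
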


\begin{sketchproof}
    With $\mathbf{M}_t \in \mathcal{SG}(\mathbf{C}_t)$, \citep[Theorem~1a]{ao2025stochastic} yields $\mathbf{M}_t^{\top}\mathbf{h}_t(x) \in \mathcal{SG}(\lVert\mathbf{h}_t(x)\rVert_{\mathbf{C}_t}^2)$. The claims then follow from the proof of Theorem~\ref{thm:errorbounds_uniform}(a). A detailed proof is presented in Appendix~\ref{app:proof_cor-subG}.
\end{sketchproof}
In Proposition~\ref{prop:cor_subG}, potential correlations are expressed through the matrix variance proxy $\mathbf{C}_t$, recovering the i.i.d. result for $\mathbf{C}_t = \sigma_M^2 \mathbf{I}_t$. Note that, due to the properties of sub-Gaussian RVs, the variance proxy $\mathbf{C}_t$ can be replaced by an upper-bound $\overline{\mathbf{C}}_t \succeq \mathbf{C}_t$ if exact correlations are not known. Lastly, we remark that Theorem~\ref{thm:errorbounds_uniform}(b)--(d) can be adapted to potentially correlated noise accordingly.

While general noise correlations have not been considered in the literature to the best of our knowledge, multiple results for the special case of conditionally independent $\sigma_M$-sub-Gaussian noise exist \cite{abbasi2013online,Chowdhury2017}, i.e., $\forall t \in \mathbb{N}$, $\forall \lambda \in \mathbb{R}$: $\mean{\exp{\lambda M_t} | \mathcal{F}_{t-1}} \le \exp{\lambda^2 \sigma_M^2/2}$, where $\mathcal{F}_{t-1}$ is the $\sigma$-algebra generated by $M_1, \dots, M_{t-1}$.\footnote{Different to \cite{abbasi2013online,Chowdhury2017}, we treat the input $x_i \in \mathcal{X}$, $i \in \mathbb{N}_t$, as a \textit{deterministic} variable, i.e., $x_i$ only generates the trivial $\sigma$-algebra $\{\emptyset, \Omega\}$.} Our results can be straightforwardly adapted to this special case.
\begin{corollary} \label{cor:cond_subG}
    Consider the setting of Theorem~\ref{thm:errorbounds_uniform}(a) with $M_1, \dots, M_t$ being conditionally $\sigma_M$-sub-Gaussian. Then, $\lvert f(x) - \mu_t(x) \rvert$ is bounded by the probabilistic uniform error bound \eqref{eq:errorbound_uniform} with ${\eta}_t(x) = B\tilde{\sigma}_t + {\eta}^{\mathcal{SG}}_t(x)$ via \eqref{eq:errorbound_subG_uniform}.
\end{corollary}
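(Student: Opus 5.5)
The plan is to reuse the proof of Theorem~\ref{thm:errorbounds_uniform}(a) verbatim, replacing only the step where independence enters. That step is the claim that, for a fixed deterministic weight vector $\mathbf{a}\in\mathbb{R}^t$, the scalar $\mathbf{M}_t^{\top}\mathbf{a}$ is sub-Gaussian with variance proxy $\sigma_M^2\lVert\mathbf{a}\rVert_2^2$. By Lemma~\ref{lem:errorbound_general}, it then suffices to show that $\eta^{\mathcal{SG}}_t$ from \eqref{eq:errorbound_subG_uniform} satisfies \eqref{eq:noisebound_uniform}.

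First, for fixed $\mathbf{a}=\mathbf{h}_t(x)$ (deterministic, since the inputs $x_i$ are deterministic), I bound the moment generating function by the tower property. Conditioning on $\mathcal{F}_{t-1}$,
\begin{equation*}
\mathbb{E}\Big[\exp\Big(\lambda\sum_{i=1}^t a_iM_i\Big)\Big]=\mathbb{E}\Big[\exp\Big(\lambda\sum_{i=1}^{t-1} a_iM_i\Big)\,\mathbb{E}\big[\exp(\lambda a_tM_t)\mid\mathcal{F}_{t-1}\big]\Big].
\end{equation*}
The inner factor is at most $\exp(\lambda^2a_t^2\sigma_M^2/2)$. Inducting backwards over $i$ gives $\mathbb{E}[\exp(\lambda\mathbf{M}_t^{\top}\mathbf{a})]\le\exp(\lambda^2\sigma_M^2\lVert\mathbf{a}\rVert_2^2/2)$. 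Equivalently, $\mathbf{M}_t\in\mathcal{SG}(\sigma_M^2\mathbf{I}_t)$ in the vector sense. The corollary could therefore also be read off directly from Proposition~\ref{prop:cor_subG} with $\mathbf{C}_t=\sigma_M^2\mathbf{I}_t$, since then $\lVert\mathbf{h}_t(x)\rVert_{\mathbf{C}_t}=\sigma_M\lVert\mathbf{h}_t(x)\rVert_2$ and $\Delta_t(\sqrt{\varsigma_{\mathbf{C}_t}})=\Delta_t(\sigma_M)=\Delta^{\mathcal{SG}}_t$. That reduction is the cleanest route, and I would present it this way.

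Second, for completeness I check that the remaining steps of Theorem~\ref{thm:errorbounds_uniform}(a) do not use independence.
\begin{itemize}
\item The Chernoff tail $\mathbb{P}[\lvert\mathbf{M}_t^{\top}\mathbf{a}\rvert>\sqrt{2\ln(2/\delta')}\,\sigma_M\lVert\mathbf{a}\rVert_2]\le\delta'$ uses only the MGF bound.
\item The union bound over the $Z(\zeta_t,\mathcal{X})$ grid points and over $t$ with weights $\pi_t^{-1}$ holds for arbitrary dependence.
\item The discretization term is handled via H\"older: $\lvert\mathbf{M}_t^{\top}(\mathbf{h}_t(x)-\mathbf{h}_t([x]_\zeta))\rvert\le\lVert\mathbf{M}_t\rVert_2\lVert\mathbf{h}_t(x)-\mathbf{h}_t([x]_\zeta)\rVert_2$ or the analogous $\ell_1/\ell_\infty$ pairing. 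The Lipschitz-type bound on $\mathbf{h}_t$ from Assumption~\ref{asm:hoelder} is deterministic. This step also needs a high-probability bound on $\lVert\mathbf{M}_t\rVert$, or on $\max_i\lvert M_i\rvert$ (the source of $\beta_{2,t}$ and the factor $t$ in $a_{2,t}$). That bound came from a union bound over the $t$ coordinates, each $M_i$ being marginally $\sigma_M$-sub-Gaussian. Marginal sub-Gaussianity follows from the conditional one by taking expectations, so this step survives unchanged.
\end{itemize}

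The only potential obstacle is whether any part of the original proof used a bound such as $\lVert\mathbf{M}_t\rVert_2$ concentrating via independence (e.g., a Hanson--Wright or $\chi^2$-type argument) rather than coordinatewise union bounds. If it did, I would replace it by the coordinatewise union bound, which yields the same $\beta_{2,t}=2\ln(4\pi_t t/\delta)$ and the $\ell_1/\ell_\infty$ H\"older pairing that produces $a_{2,t}$. Given the form of $\Delta_t$, this is evidently what was done, so the constants match.
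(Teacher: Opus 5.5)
Your proposal is correct and is essentially the paper's proof. The paper obtains $\mathbf{M}_t^{\top}\mathbf{h}_t(x)\in\mathcal{SG}(\sigma_M^2\lVert\mathbf{h}_t(x)\rVert_2^2)$ by applying Ao et al.'s Theorem~1 recursively to the partial sums, which is exactly your tower-property induction, and then reruns the proof of Theorem~\ref{thm:errorbounds_uniform}(a). Your phrasing as $\mathbf{M}_t\in\mathcal{SG}(\sigma_M^2\mathbf{I}_t)$ followed by Proposition~\ref{prop:cor_subG} is only a repackaging of this. It has the small merit of visibly covering the projections onto the columns $\mathbf{a}_i$ of $(\mathbf{K}_t+\rho^2\mathbf{I}_t)^{-1}$, which the discretization step also needs and which the paper's proof does not mention.

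Two minor remarks on your checklist. First, that discretization step is a union bound over the $t$ projections $\mathbf{M}_t^{\top}\mathbf{a}_i$, not over the marginals $M_i$; this is harmless, since your first step covers every deterministic $\mathbf{a}$. Second, the matching of constants you infer rests on the paper's claim $\lVert\mathbf{a}_i\rVert_2\le 1$, which is only guaranteed for $\rho\ge1$; in general one only has $\lVert\mathbf{a}_i\rVert_2\le\rho^{-2}$. This caveat affects the paper's argument in exactly the same way.
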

\begin{sketchproof}
    The claim follows from  \citep[Theorem~1]{ao2025stochastic} and the proof of Theorem~\ref{thm:errorbounds_uniform}(a). A detailed proof is presented in Appendix~\ref{app:proof_cond-subG}.
\end{sketchproof}

\subsection{Heavy-tailed Noise}
While Theorem~\ref{thm:errorbounds_uniform}(a)--(c) provides probabilistic uniform error bounds under light-tailed noise, Theorem~\ref{thm:errorbounds_uniform}(d) extends to variance-bounded distributions, leveraging the Chebyshev-based bound~\eqref{eq:errorbound_L2_uniform}.
In contrast to the bounds of Theorem~\ref{thm:errorbounds_uniform}(a)--(c), the bound~\eqref{eq:errorbound_L2_uniform} of Theorem~\ref{thm:errorbounds_uniform}(d) does not show a logarithmic dependence on the covering number $Z(\zeta_t,\mathcal{X})$, but instead grows with $\sqrt{Z(\zeta_t,\mathcal{X})}$ due to the employed Chebyshev inequality. Specifically for large $t$ and small $\zeta_t$, this easily leads to a blow-up in the scaling factor in~\eqref{eq:errorbound_L2_uniform} (see Table~\ref{tab:bounds}(d)), thus limiting the applicability. Hence, for practical feasibility, one alternative is to employ a nonuniform version of~\eqref{eq:errorbound_L2_uniform} as in~\eqref{eq:errorbound_nonuniform} (see Theorem~\ref{thm:errorbounds_nonuniform}(d)), at the cost of weakening the safety guarantees. 

Another possibility to obtain uniform error bounds under heavy-tailed noise is by following a truncation approach \cite{Chowdhury2019bayesian}. Instead of  \eqref{eq:kermean}, consider the truncated estimate $\hat{\mu}_t(x) \doteq \hat{\mathbf{y}}_t^\top\left(\mathbf{K}_t+\rho^2\mathbf{I}_t\right)^{-1}\mathbf{k}_t(x)$ using the truncated outputs $\hat{\mathbf{y}}_t \doteq [\hat{y}_1,\dots,\hat{y}_t]^{\top}$, with $\hat{y}_i \doteq y_i \mathbbm{1}_{\lvert y_i\rvert \le b_i}$ for $i\in\mathbb{N}_t$, the indicator function $\mathbbm{1}_{(\cdot)}$, and some truncation level $b_i \in\mathbb{R}_{\ge 0}$.

\begin{proposition} \label{prop:errorbound_HT}
    Consider data \eqref{eq:dataset} generated via \eqref{eq:system} under Assumption~\ref{asm:system} with realizations $M_1(\omega), \dots,  M_t(\omega)$ of i.i.d. noise $M_i$, $i \in \mathbb{N}_t$, some $\{\pi_t\}_{t=1}^{\infty}$ that satisfies $\sum_{t=1}^{\infty}\pi_t^{-1} = 1$ (e.g., $\pi_t = \pi^2 t^2 / 6$), and the covering number $Z(\cdot,\mathcal{X})$ of~$\mathcal{X}$. Moreover, for all $i\in\mathbb{N}_t$, let $Y_i = f(x_i) + M_i$ be such that $\mean{\lvert Y_i\rvert^{1+a}} \le \ol{v}$ with some $a\in\mathbb{R}_{>0}$ and $\ol{v}\in\mathbb{R}_{\ge 0}$.\\     
    If the kernel $k(\cdot,\cdot)$ satisfies Assumption~\ref{asm:hoelder}, then, for every $\rho > 0$, $\delta \in (0,1)$, and grid constants $\zeta_t>0$, it holds that
    \begin{equation*} 
        \prob{\forall x \in \mathcal{X},~ t \in \mathbb{N}:~\lvert f(x) - \hat{\mu}_t(x)\rvert \le \eta^{\mathrm{HT}}_t(x)} \ge 1-\delta
    \end{equation*}
     with $\eta^{\mathrm{HT}}_t(x) \doteq B\tilde{\sigma}_t(x) + \beta^{\mathrm{HT}}_{t}b_t\,\norm{\mathbf{h}_t(x)}_{2} + \Delta^{\mathrm{HT}}_{t}$, scaling $b_t = \ol{v}^{\frac{1}{1+a}} t^{\frac{1}{2(1+a)}}$ and parameters $\beta^{\mathrm{HT}}_{t} \doteq 2 \beta^{\mathcal{SG}}_{t} + \sqrt{1+a}$ and $\Delta^{\mathrm{HT}}_{t} \doteq 2 b_t \Delta^{\mathcal{SG}}_{t}$ (see Table~\ref{tab:bounds}(a)).
\end{proposition}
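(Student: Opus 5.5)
We decompose the error of the truncated estimate as
\[
f(x)-\hat{\mu}_t(x) = \bigl(f(x)-\mathbf{f}_t^\top\mathbf{h}_t(x)\bigr) - \bigl(\hat{\mathbf{y}}_t-\mathbb{E}[\hat{\mathbf{y}}_t]\bigr)^\top\mathbf{h}_t(x) - \bigl(\mathbb{E}[\hat{\mathbf{y}}_t]-\mathbf{f}_t\bigr)^\top\mathbf{h}_t(x),
\]
where $\mathbf{f}_t=[f(x_1),\dots,f(x_t)]^\top$.

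\textbf{First term (deterministic).} This is bounded by $B\tilde{\sigma}_t(x)$, exactly as in the proof of Lemma~\ref{lem:errorbound_general}; that argument uses only Assumption~\ref{asm:system} and the RKHS structure, not the noise.

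\textbf{Second term (stochastic part).} The centered truncated outputs $\hat{Y}_i-\mathbb{E}[\hat{Y}_i]$ are independent, zero-mean, and take values in an interval of length $2b_i$. By Hoeffding's lemma they belong to $\mathcal{SG}(b_i^2)$. To get a single variance proxy we choose the truncation levels at time $t$ as $b_i=b_t$ for all $i\le t$ (or use $b_i\le b_t$, since $b_t$ is nondecreasing in $t$). Each entry is then $\mathcal{SG}(b_t^2)$, and we apply the proof of Theorem~\ref{thm:errorbounds_uniform}(a) verbatim with $\sigma_M\leftarrow b_t$: the grid union bound over $Z(\zeta_t,\mathcal{X})$ points, the time union bound with $\delta/\pi_t$, and the Hölder step for the discretization error. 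For that Hölder step we need the a.s. bound $|\hat{Y}_i-\mathbb{E}\hat{Y}_i|\le 2b_t$, which is presumably where the factor $2$ in both $2\beta^{\mathcal{SG}}_t$ and $\Delta^{\mathrm{HT}}_t=2b_t\Delta^{\mathcal{SG}}_t$ comes from. Indeed, using range bound $2b_t$ (i.e. variance proxy $(2b_t)^2$ if one does not use the sharper Hoeffding constant) gives
\[
|(\hat{\mathbf{y}}_t-\mathbb{E}\hat{\mathbf{y}}_t)^\top\mathbf{h}_t(x)|\le 2\beta^{\mathcal{SG}}_t b_t\|\mathbf{h}_t(x)\|_2+2b_t\Delta^{\mathcal{SG}}_t
\]
uniformly in $x$ and $t$ with probability at least $1-\delta$.

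\textbf{Third term (truncation bias).} We have
\[
\mathbb{E}[\hat{Y}_i]-f(x_i) = \mathbb{E}[Y_i]-f(x_i) - \mathbb{E}\bigl[Y_i\mathbbm{1}_{|Y_i|>b_t}\bigr] = -\mathbb{E}\bigl[Y_i\mathbbm{1}_{|Y_i|>b_t}\bigr],
\]
since the noise is zero-mean. Using $|Y_i|\mathbbm{1}_{|Y_i|>b_t}\le |Y_i|^{1+a}/b_t^{a}$, this gives $|\mathbb{E}[\hat{Y}_i]-f(x_i)|\le \ol{v}/b_t^{a}$. By Cauchy--Schwarz,
\[
\bigl|(\mathbb{E}\hat{\mathbf{y}}_t-\mathbf{f}_t)^\top\mathbf{h}_t(x)\bigr|\le \sqrt{t}\,\ol{v}\,b_t^{-a}\|\mathbf{h}_t(x)\|_2 .
\]
Substituting $b_t^{1+a}=\ol{v}\,t^{1/2}$ gives $\sqrt{t}\,\ol{v}\,b_t^{-a}=b_t$. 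This yields a bias term $b_t\|\mathbf{h}_t(x)\|_2$, which is deterministic and hence holds for all $x,t$ with no extra probability cost.

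\textbf{Main obstacle.} The stated coefficient is $\sqrt{1+a}$ rather than $1$. The plan is to refine the bias estimate, for instance by optimizing the Markov-type bound or by including the $a$-dependent constant from the layer-cake formula $\mathbb{E}[|Y|\mathbbm{1}_{|Y|>b}]=b\,\mathbb{P}(|Y|>b)+\int_b^\infty \mathbb{P}(|Y|>s)\,ds$. If that does not produce $\sqrt{1+a}$, the fallback is to note that $1\le\sqrt{1+a}$, so the crude bound already implies the statement.

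Summing the three bounds gives $\eta^{\mathrm{HT}}_t(x)$, with total failure probability at most $\delta$ coming only from the second term.
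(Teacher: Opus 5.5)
There is a genuine gap in the bias step. Your decomposition (RKHS term, centered truncated noise via Theorem~\ref{thm:errorbounds_uniform}(a) with proxy $2b_t$, and a Cauchy--Schwarz/Markov bound on the truncation bias) is the same as the paper's. However, the bias step only works because you changed the estimator. In the statement, $\hat{\mu}_t$ uses $\hat{y}_i = y_i\mathbbm{1}_{|y_i|\le b_i}$ with a per-sample level $b_i=\overline{v}^{\frac{1}{1+a}}\, i^{\frac{1}{2(1+a)}}$, fixed once sample $i$ is collected, as in Chowdhury \& Gopalan. It does not re-truncate all past samples at the current level $b_t$; in your version $\hat{y}_i$ would depend on $t$. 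Your alternative ``use $b_i\le b_t$'' is fine for the stochastic term, since $|\hat{Y}_i-\mathbb{E}[\hat{Y}_i]|\le 2b_i\le 2b_t$. In the bias term, though, it goes the wrong way. There $|\mathbb{E}[\hat{Y}_i]-f(x_i)|\le \overline{v}\,b_i^{-a}$ and $b_i^{-a}\ge b_t^{-a}$. So neither the per-coordinate bound $\overline{v}\,b_t^{-a}$ nor the conclusion $\sqrt{t}\,\overline{v}\,b_t^{-a}=b_t$ is justified for the estimator in the proposition.

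The missing computation is elementary, and it is exactly where $\sqrt{1+a}$ comes from. Since $\tau\mapsto\tau^{-a/(1+a)}$ is decreasing,
\[
\bigl\|\mathbb{E}[\hat{\mathbf{y}}_t]-\mathbf{f}_t\bigr\|_2^2 \le \sum_{i=1}^t \overline{v}^2 b_i^{-2a} = \overline{v}^{\frac{2}{1+a}}\sum_{i=1}^t i^{-\frac{a}{1+a}} \le \overline{v}^{\frac{2}{1+a}}\int_0^t \tau^{-\frac{a}{1+a}}\,\mathrm{d}\tau = (1+a)\,\overline{v}^{\frac{2}{1+a}}\,t^{\frac{1}{1+a}} = (1+a)\,b_t^2 .
\]
Cauchy--Schwarz then gives the bias term $\sqrt{1+a}\,b_t\|\mathbf{h}_t(x)\|_2$. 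No layer-cake refinement is needed. Your fallback ``$1\le\sqrt{1+a}$'' only rescues the proof for the re-truncated estimator, not for the $\hat{\mu}_t$ in the statement. It is a legitimate side remark that this modified estimator satisfies the bound with coefficient $1$, but it is a different algorithm.

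A minor point on the factor $2$. It does not come from needing an almost-sure bound in the Hölder step. In the proof of Theorem~\ref{thm:errorbounds_uniform}(a), $\sum_i|\mathbf{M}_t^\top\mathbf{a}_i|$ is controlled by sub-Gaussian concentration plus a union bound. The $2$ comes from the crude proxy $\mathcal{SG}(4b_t^2)$ based on $|\hat{Y}_i-\mathbb{E}[\hat{Y}_i]|\le 2b_t$, which scales both the $\beta$ and $\Delta$ terms.
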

\begin{sketchproof}
    The claim follows from the proofs of \citep[Lemma~8]{Chowdhury2019bayesian} and Theorem~\ref{thm:errorbounds_uniform}(a); see Appendix~\ref{app:proof_heavytail} for a detailed proof.
\end{sketchproof}

Due to the truncation approach, the error bound from Proposition~\ref{prop:errorbound_HT} recovers the logarithmic dependence on $Z(\zeta_t,\mathcal{X})$ from the proposed sub-Gaussian bound (Theorem~\ref{thm:errorbounds_uniform}(a)). However, compared to Theorem~\ref{thm:errorbounds_uniform}(d), more knowledge is required: In particular, for $a = 1$, Proposition~\ref{prop:errorbound_HT} requires a bound on the second moment of $Y_i$, i.e., $\mean{\lvert Y_i\rvert^{2}} \le \ol{v}$. Due to $Y_i = f(x_i) + M_i$, such a bound can be obtained via
\begin{align} \label{eq:momentbound}
    \mean{\lvert Y_i\rvert^{2}} \le 2 \ol{f}^2 + 2\mean{\lvert M_i\rvert^2},
\end{align}
thus requiring an upper-bound $\ol{f}\ge f(x_i)~\forall x_i \in \mathcal{X}$ on the unknown function, as well as the second noise moment $\mean{\lvert M_i\rvert^2}$. Alternatively, if (an upper-bound on) the first noise moment $\mean{\lvert M_i\rvert}$ is known, one can rewrite inequality~\eqref{eq:momentbound} to obtain $\mean{\lvert Y_i\rvert^{2}} \le \ol{f}^2+ 2\ol{f}~\mean{\lvert M_i\rvert} + \mean{\lvert M_i\rvert^2}$.

\subsection{Remarks}

Thus far, we have presented error bounds considering specific noise distribution classes, i.e., $\mathcal{L}^{\infty}$, $\mathcal{SG}$, $\mathcal{SE}$, and $\mathcal{L}^2$.
While we have focused on these distribution classes, the proposed framework straightforwardly extends to other classes (e.g., sub-$\psi$ distributions \cite{howard2020time}), merely requiring the derivation of a concentration inequality for the noise term $\lvert\mathbf{M}_t^{\top}\mathbf{h}_t(x)\rvert$ in \eqref{eq:bound_deterministic}. Such concentration inequalities allow for the derivation of the noise bound \eqref{eq:noisebound_uniform} in Lemma~\ref{lem:errorbound_general}, which we leverage in Theorem~\ref{thm:errorbounds_uniform}. 

The proposed error bounds rely on distributional parameters from the considered noise classes in Section 2.1, see Table~\ref{tab:bounds}. In case these parameters are not known exactly, the proposed bounds are still valid when \textit{upper-bounds} on the parameters are employed (e.g., some $\bar{\sigma}_M^2 \ge \sigma_M^2$ for sub-Gaussian noise), due to the general definition of the distribution classes. Although this renders the proposed bounds more conservative, the theoretical guarantees are preserved. Moreover, note that due to the hierarchy $\mathcal{L}^\infty \subset \mathcal{SG} \subset \mathcal{SE} \subset \mathcal{L}^2$ of the distribution classes discussed in Section 2.1, the provided bounds can be chosen depending on the available knowledge on (upper-bounds of) distributional parameters, yielding better bounds the more knowledge is available.

Lastly, we highlight that the proposed error bounds are valid for general functions $f$ lying in a possibly infinite dimensional hypothesis space $\mathcal{H}_k$, see Assumption~\ref{asm:system}. For the case that a finite-dimensional representation $f(x) = \bm{\theta}^\top \bm{\phi}(x)$ exists, with unknown parameter vector $\bm{\theta} \in \mathbb{R}^{n_\phi}$ and known vector of basis functions $\bm{\phi}: \mathcal{X} \to \mathbb{R}^{n_\phi}$, one might be interested in parameter estimation error bounds corresponding to \eqref{eq:errorbound_uniform}. In Appendix~\ref{app:parambounds}, we analyze this setting and provide such bounds as corresponding to Theorem~\ref{thm:errorbounds_uniform}. 
For an asymptotic analysis of the proposed error bounds, we remark that the approach of Lederer et al.~\yrcite{Lederer2019} can be adapted due to $\rho\lVert \mathbf{h}_t(x)\rVert_\infty \le \rho\lVert \mathbf{h}_t(x)\rVert_2\le\sigma_t(x)$.

\section{Discussion and Related Work} \label{sec:discuss}
In this section, we discuss and conceptually compare the proposed distribution class-specific uniform error bounds from Section~\ref{sec:main} with related bounds from the literature.

\paragraph{General Remarks} 
The proposed error bounds suffer from the common limitations of kernel regression, namely potential misspecifications of the kernel $k(\cdot,\cdot)$ and the RKHS-norm bound $B$, as discussed in \citep[Section 4.4]{lahr2025optimal}. Note that, as for related bounds from the literature, the parameter $\rho$ can be freely chosen, e.g., to minimize the proposed error bounds. However, the choice of $\rho$ not only affects the error bounds but also the estimate $\mu_t(x)$ via \eqref{eq:kermean}.

\paragraph{Bounded/Sub-Gaussian Noise}
In the literature, various uniform error bounds have been presented for the setting of bounded/sub-Gaussian noise; we refer to Fiedler et al.~\yrcite{fiedler2024safety} for an overview. Notably, the bound from \citep[Theorem~3.11]{abbasi2013online} is often used in this setting since it provides a valid error bound under (conditionally independent) sub-Gaussian noise, tighter than many related bounds, e.g., \citep[Theorem~6]{Srinivas2009}, \citep[Theorem~2]{Chowdhury2017}, \citep[Theorem~1]{Fiedler2021}. Like many related bounds, Abbasi-Yadkori's bound is solely based on a scaled version of the posterior variance $\sigma_t(x)$ \eqref{eq:kervar}, similar to Bayesian uniform error bounds for GPR \cite{Lederer2019}. 
In contrast, the proposed bounds from Theorem~\ref{thm:errorbounds_uniform} leverage the kernel- and data-dependent terms $\rho^2 \lVert \mathbf{h}_t(x)\rVert^2_2$ and $\rho^2 \lVert \mathbf{h}_t(x)\rVert^2_{\infty}$ to bound the uncertainty induced by the noise in the data. 
Figure~\ref{fig:variance_comparison} visualizes the different functional behavior of these terms.
\begin{figure*}[!t]
    \includegraphics[page=4, clip, trim=2cm 18.75cm 2cm 6.2cm, width=\textwidth]{plots.pdf}
    \caption{Comparison of the posterior variance $\sigma_t^2(x)$ from~\eqref{eq:kervar} and the kernel- and data-dependent terms $\rho^2\lVert \mathbf{h}_t(x)\rVert^2_2$ and $\rho^2\lVert \mathbf{h}_t(x)\rVert^2_\infty$, leveraged in the proposed bounds from Theorem~\ref{thm:errorbounds_uniform} using $k(x,x') = \exp{-(x-x')^2}$ and $\rho=0.1$.}
    \label{fig:variance_comparison}
\end{figure*}
\begin{figure*}[!t]
    \includegraphics[page=4, clip, trim=1.9cm 6.55cm 2.1cm 17.1cm, width=\textwidth]{plots.pdf}
    \caption{Comparison of confidence regions and error bands in a regression example under sub-Gaussian noise using $k(x,x') = \exp{-(x-x')^2}$, $\sigma_{M} = \rho=0.1$, $\delta = 0.001$, and $B = 5$. The black crosses represent the collected data points $(x_1,y_1),\dots,(x_t,y_t)$. 
    }
    \label{fig:regression}
\end{figure*}
As can be seen, the posterior variance $\sigma_t(x)$ is large in yet unexplored regions, whereas the terms $\rho^2 \lVert \mathbf{h}_t(x)\rVert^2_2$ and $\rho^2 \lVert \mathbf{h}_t(x)\rVert^2_{\infty}$ are comparably small. When the input domain is densely sampled, the variance $\sigma_t(x)$ approaches $\rho^2 \lVert \mathbf{h}_t(x)\rVert^2_2$. Moreover, Figure~\ref{fig:regression} depicts the corresponding confidence regions and error bands in a regression example under sub-Gaussian noise. There, it can be seen that the $1-\delta$ naive confidence region $\mu_t(x) \pm \sigma_t(x)$ is fully contained in the $1-\delta$ uniform error band $\mu_t(x) \pm \eta^{\mathcal{SG}}_t(x)$, and that $\mu_t(x) \pm \sigma_t(x)$ does not necessarily cover the ground-truth function $f(x)$.

The term $\rho^2 \lVert \mathbf{h}_t(x)\rVert^2_2$ is also found in the nonuniform error bounds by Reed et al.~\yrcite{reed2025error} and Molodchyk et al.~\yrcite{molodchyk2025towards}, as well as in the uniform error bound from \citep[Propositon 2]{Fiedler2021}, valid for i.i.d. sub-Gaussian noise. However, the scaling factor of Fiedler et al.'s bound increases with the square root of the number $t$ of data points, in contrast to the favorable logarithmic growth in the scaling factors of the proposed bounds in Theorem~\ref{thm:errorbounds_uniform}(a)--(b). Furthermore, in contrast to Fiedler et al.'s bound, the proposed sub-Gaussian bound from Proposition~\ref{prop:cor_subG} extends beyond the independence assumption and allows for correlated noise. 

A crucial difference between the proposed uniform bounds from Theorem~\ref{thm:errorbounds_uniform} and the related bounds mentioned above is that we rely on a discretization approach and Hölder continuous kernels, similar to Bayesian error bounds for GPR \cite{lederer2023gaussian}. The reason for this is that, to obtain a bound on the noise term $\lvert \mathbf{M}_t^{\top}\mathbf{h}_t(x)\rvert$ in \eqref{eq:noisebound_uniform} (see Lemma~\ref{lem:errorbound_general}), related works such as \cite{abbasi2013online,Fiedler2021} employ the Cauchy-Schwarz inequality to separate the decision-dependent term $\mathbf{h}_t(x)$ from the noise $\mathbf{M}_t$. Despite introducing conservatism, this readily ensures that the error bounds hold uniformly over $x \in \mathcal{X}$ after applying concentration inequalities for the noise $\mathbf{M}_t$. In contrast, in Theorem~\ref{thm:errorbounds_uniform}, we directly apply concentration inequalities to the decision-dependent noise term $\lvert \mathbf{M}_t^{\top}\mathbf{h}_t(x)\rvert$. This avoids the use of the conservative Cauchy-Schwarz inequality, but renders the obtained error bound nonuniform as in~\eqref{eq:errorbound_nonuniform}.
Thus, we employ a discretization approach to derive uniform bounds as in \eqref{eq:errorbound_uniform}. We refer to Appendix~\ref{app:compare_errorbounds} for a more technical discussion of the issue. Numerical experiments suggest that the proposed approach introduces less conservatism than approaches that leverage the Cauchy-Schwarz inequality to separate the noise term; see Section~\ref{sec:eval}.

\paragraph{Beyond Sub-Gaussian Noise}
Theorem~\ref{thm:errorbounds_uniform}(c)--(d) and Proposition~\ref{prop:errorbound_HT} provide error bounds under more general noise distributions, widening the field of application. 
In the context of nonparametric regression with heavy-tailed noise distributions, nonasymptotic bounds on the expected excess risk have been proposed by, e.g., Lederer~\yrcite{lederer2020risk}, Fan et al.~\yrcite{fan2024noise}, and Mollenhauer et al.~\yrcite{mollenhauer2025regularized}; however, regression error bounds of the form \eqref{eq:errorbound_uniform} have not been investigated. Chowdhury \& Gopalan~\yrcite{Chowdhury2019bayesian} and Proposition~\ref{prop:errorbound_HT} provide alternative error bounds under heavy‑tailed distributions, including variance-bounded RVs. However, these results rely on output statistics rather than purely noise statistics, leveraging a truncation approach to exploit sub-Gaussian bounds as from \cite{Chowdhury2017} and Theorem~\ref{thm:errorbounds_uniform}(a). In order to obtain the output statistics from the noise statistics, additional bounds on the unknown function $f$ are required, see \eqref{eq:momentbound}. In contrast, the proposed bounds from Theorem~\ref{thm:errorbounds_uniform}(c)--(d) rely solely on the noise statistics.

Notably, for the considered error bounds of the form~\eqref{eq:errorbound_uniform}, we focus on the estimate \eqref{eq:kermean} that results from the KRR problem~\eqref{eq:krr_problem}. The focus on this widely employed estimate \eqref{eq:kermean} allows us to directly compare the proposed bounds with bounds from the literature; see the discussion above. However, under heavy-tailed noise, the square loss in \eqref{eq:krr_problem} can render the estimate~\eqref{eq:kermean} sensitive to outliers~\cite{huber1981robuststats}, although proper regularization can provide a certain degree of robustness~\cite{mollenhauer2025regularized}. An investigation of more general KRR problems and alternative estimates, e.g., resulting from a medians-of-means approach~\cite{kanakeri2025outlier}, is out of the scope of this work.

\section{Numerical Experiments} \label{sec:eval}
Here, we numerically evaluate the proposed uniform error bounds. In particular, we compare our bounds from Theorem~\ref{thm:errorbounds_uniform}(a)--(b) with the bounds from \citep[Theorem~3.11]{abbasi2013online} and \citep[Proposition~2]{Fiedler2021} in Sections~\ref{sec:eval_confidence}--\ref{sec:eval_control}, and we compare our bounds from Theorem~\ref{thm:errorbounds_uniform}(c)--(d) and Proposition~\ref{prop:errorbound_HT} with the bound from \citep[Lemma 8]{Chowdhury2019bayesian} in Section~\ref{sec:eval_subE}.

\subsection{Size of Uncertainty Region} \label{sec:eval_confidence}
We first compare the size of the uncertainty region corresponding to a probabilistic uniform error bound ${\eta}_t$ as in \eqref{eq:errorbound_uniform} for an increasing number of data points $t \in \{1,\dots,1\,000\}$. We consider i.i.d. bounded noise $M_i\in\mathcal{L}^{\infty}(\overline{m})$, $i \in \mathbb{N}_t$, with bound $\overline{m} \in \{0.01,\,0.1,\,1\}$, standard deviation $\overline{\sigma} = \overline{m}/5$, and variance proxy $\sigma_M = \overline{m}$ \cite{wainwright2019high}. The grid constants $\zeta_t$ in Theorem~\ref{thm:errorbounds_uniform}(a)--\ref{thm:errorbounds_uniform}(b) are chosen such that the discretization terms are constant, i.e., $\Delta_t^{\mathcal{SG}} = \Delta_{1,t}^{\mathrm{bnd}} = \Delta_{2,t}^{\mathrm{bnd}} = 0.001$ for all $t \in \mathbb{N}_{1\,000}$. We use the confidence parameter $\delta = 0.001$ and the common choice $\rho = \sigma_M$ of the regularization parameter \cite{kanagawa2025gaussian} for all considered methods. For the unknown function $f$, we consider the RKHS-norm bound $B=5$ and the squared exponential kernel $k(x,x') = \exp{-\lVert x-x'\rVert_2^2/l_{\mathrm{SE}}^2}$ with lengthscale $l_{\mathrm{SE}}=1$ and input domain $\mathcal{X} = [0, \,r]^{n_x}$ with $r=10$. 
Since our proposed bounds from Theorem~\ref{thm:errorbounds_uniform}(a)--(b) depend on the dimension $n_x$ of the input domain via \eqref{eq:covering_ub}, we evaluate the bounds for $n_x\in \{1,2,3\}$. In $100$ Monte Carlo runs of the learning process, we draw $1\,000$ input data points $x_1, \dots, x_{1\,000}$ uniformly over the domain $\mathcal{X}$ and evaluate the size of the uncertainty region of every method. 

\begin{figure*}[!t]
    \includegraphics[page=2, clip, trim=2cm 6.7cm 2cm 17.9cm, width=\textwidth]{plots.pdf}
    \caption{Comparison of the size of the uncertainty region via the integral of the probabilistic uniform error bound ${\eta}_t$ \eqref{eq:errorbound_uniform} over the input domain $\mathcal{X} =[0,\,10]$ for an increasing number of data points $t$, with input dimension $n_x = 1$, noise level $\overline{m}\in\{0.01,\,0.1,\,1\}$, and kernel $k(x,x') = \exp{-\lVert x-x'\rVert_2^2}$. The shaded areas show the $5\%$--$95\%$ percentile range over $100$ Monte Carlo data collection runs.}
    \label{fig:size_errbnd_sigma}
\end{figure*}

\begin{figure*}[!t]
    \includegraphics[page=2, clip, trim=2cm 18.3cm 2cm 6.35cm, width=\textwidth]{plots.pdf}
    \caption{Comparison of the size of the uncertainty region via the integral of the probabilistic uniform error bound ${\eta}_t$ \eqref{eq:errorbound_uniform} over the input domain $\mathcal{X} = [0,\,10]^{n_x}$ for an increasing number of data points $t$, with input dimension $n_x\in\{1,\,2,\,3\}$, noise level $\overline{m}=0.1$ and kernel $k(x,x') = \exp{-(x-x')^2}$. The shaded areas show the $5\%$--$95\%$ percentile range over $100$ Monte Carlo data collection runs.}
    \label{fig:size_errbnd_nx}
\end{figure*}

Figure~\ref{fig:size_errbnd_sigma} depicts the results for varying noise magnitude $\overline{m}$, whereas the results for varying input dimension $n_x$ are shown in Figure~\ref{fig:size_errbnd_nx}. In the considered setting, the proposed uniform bounds outperform the bound from Abbasi-Yadkori~\yrcite{abbasi2013online} in all scenarios. The relative improvement increases with the input dimension, despite the explicit dependence of the parameters of the proposed bounds on the input dimension (see Table~\ref{tab:bounds}). The reason for this is that Abbasi-Yadkori's bound expresses the noise-induced uncertainty solely via the GP posterior variance $\sigma^2_t(x)$, which is
overly conservative in quantifying this uncertainty in yet unexplored regions of the input domain.
In contrast, the proposed bounds and the bound from Fiedler et al.~\yrcite{Fiedler2021} leverage the favorable term $\rho^2 \lVert \mathbf{h}_t\rVert^2_2$; see Appendix~\ref{app:addeval} for additional discussions and experiments on varying input size $r$, discretization terms $\Delta_t$, lengthscale $l_{\mathrm{SE}}$, and kernels.

It is also evident that there are instances, specifically when the number of data points is low, where the bound from Fiedler et al.~\yrcite{Fiedler2021} slightly outperforms the proposed bounds by up to a $7.5\%$ reduction in the mean uncertainty size. However, since the scaling factor of Fiedler et al.'s bound increases according to $\sqrt{t}$, performance rapidly deteriorates as the number of data points grows.

For most learning iterations in the considered scenarios, the proposed uniform bound~\eqref{eq:errorbound_bnd_bernstein_uniform}  tailored to bounded noise (Theorem~\ref{thm:errorbounds_uniform}(b)) coincides with the sub-Gaussian bound~\eqref{eq:errorbound_subG_uniform}  from Theorem~\ref{thm:errorbounds_uniform}(a). However, specifically when $n_x = 1$ and the number of data points is large, the bound~\eqref{eq:errorbound_bnd_bernstein_uniform} dominates and thus yields a tighter bound than~\eqref{eq:errorbound_subG_uniform}. 

\subsection{Safe Control of Uncertain Systems} \label{sec:eval_control}
We now evaluate the proposed error bound from Theorem~\ref{thm:errorbounds_uniform}(a) in the context of safe control, adopting the benchmark from Lahr et al.~\yrcite{lahr2025optimal}. Specifically, consider the dynamical system $x_{t+1} = f^{\mathrm{nom}}(x_t,u_t) + f(x_t)$ with state $x_t \in \mathbb{R}$, control input $u_t \in \mathbb{R}$, known nominal dynamics $f^{\mathrm{nom}}(x_t,u_t) \doteq 0.5 x_t + u_t - 1$, and unknown residual dynamics $f(x_t) \doteq  \exp{-x_t^2}\sin(10x_t)$. Given the current state $x_t$ at time $t \in \mathbb{N}_{1\,000}$, the goal is to find an optimal control input $u_t^*$ that minimizes the cost $J(x_t,u_t) \doteq (f^{\mathrm{nom}}(x_t,u_t) + f(x_t))^2 + u_t^2$ subject to the safety-critical constraint $f^{\mathrm{nom}}(x_t,u_t) + f(x_t) \ge 0.05x_t$ with the input constraint $\lvert u_t\rvert \le 2.05$. Since the function $f(x_t)$ is unknown, we approximate it using the mean predictor $\mu_t(x_t)$ from \eqref{eq:kermean} and the kernel $k(x,x') = \exp{-(x - x')^2/l_{\mathrm{SE}}^2}$ with $l_{\mathrm{SE}} = \sqrt{2}/20$. Data of $f$ is obtained according to \eqref{eq:system} with i.i.d. sub-Gaussian noise $M_i\in\mathcal{SG}(0.1)$.
To ensure satisfaction of the safety constraint with high probability despite the uncertainty in $f(x)$, we employ the reformulation $f^{\mathrm{nom}}(x_t,u_t) + \mu_t(x_t) - {\eta}_t(x_t) \ge 0.05x_t$ using a probabilistic uniform error bound ${\eta}_t$ of the form \eqref{eq:errorbound_uniform} with $\delta = 0.001$. Specifically, we compare the success rate and the inferred costs of this safe control algorithm for an increasing amount of data when implementing the proposed bound from Theorem~\ref{thm:errorbounds_uniform}(a) and the bounds from Abbasi-Yadkori~\yrcite{abbasi2013online} and Fiedler et al.~\yrcite{Fiedler2021} using $B=1$. The success rate is measured in terms of the ratio of feasible problems on a grid of $1\,000$ test points in the domain $\lvert x_t \rvert \le 2$.

Figure~\ref{fig:successrate} depicts the evolution of the success rate over the learning steps for the considered error bounds. Using Abbasi-Yadkori's bound for the control algorithm yields the worst success rate. In contrast, both the proposed bound and Fiedler et al.'s bound yield a substantially higher success rate even for a low amount of data. Although the success rate via Fiedler et al.'s bound is slightly higher ($<1.1\%$) than that of the proposed bound in the low-data regime, our result quickly outperforms all comparison methods when many data points are available. This is again due to the $\sqrt{t}$-like growth of the scaling factor in Fiedler et al.'s bound.

In Figure~\ref{fig:cost}, we compare the inferred costs of the corresponding optimal safe control inputs $u_t^*(x_t)$ over the state domain at $t = 1\,000$. As depicted, using the proposed bound does not only lead to a larger feasible region (i.e., higher success rate), but also to lower inferred costs, close to the optimal costs that result from perfect model knowledge.

\begin{figure}[!t]
    \includegraphics[page=3, clip, trim=6.6cm 18.25cm 6.9cm 6.45cm, width=0.5\textwidth]{plots.pdf}
    \caption{Comparison of the success rate of the safe optimal control algorithm over the number of data points under bounded noise. 
    Specifically when lots of data is available, the proposed bound from Theorem~\ref{thm:errorbounds_uniform} outperforms the comparison methods.
    }
    \label{fig:successrate}
\end{figure}

\begin{figure}[!t]
    \includegraphics[page=3, clip, trim=6.6cm 6.85cm 7cm 18.3cm, width=0.5\textwidth]{plots.pdf}
    \caption{Comparison of the inferred cost over the state domain at the final learning instant ($t=1\,000$), resulting from the optimal safe control $u_t^*$ based on various error bounds. 
    The proposed method yields the largest feasible region and lowest inferred costs.
    }
    \label{fig:cost}
\end{figure}

\subsection{Beyond Sub-Gaussian Noise} \label{sec:eval_subE}
We consider the setting from Section~\ref{sec:eval_control} again, but under i.i.d. chi-squared noise $M_i~\in \mathcal{SE}(4\sigma_M^2,\, 4\sigma_M^2)$ with $\var{M_i} = 2\sigma_M^4$ and $\sigma_M=0.1$. Since the previously discussed error bounds cannot be applied, we employ the proposed bounds from Theorem~\ref{thm:errorbounds_uniform}(c)--(d), Proposition~\ref{prop:errorbound_HT}, and \citep[Lemma~8]{Chowdhury2019bayesian}, where the latter leverage the additional information $\lvert f(x_t)\rvert <2$ (cf.~\eqref{eq:momentbound}); see Appendix~\ref{app:addeval} for implementation details.

Figure~\ref{fig:size_errbnd_subE} depicts the size of the uncertainty region corresponding to the respective error bound over the learning steps (see Section~\ref{sec:eval_confidence}). The uncertainty regions corresponding to both Theorem~\ref{thm:errorbounds_uniform}(d) and Chowdhury \& Gopalan's bound blow up for an increasing amount of data. Although the bound from Theorem~\ref{thm:errorbounds_uniform}(d) blows up more rapidly, it outperforms Chowdhury \& Gopalan's bound in the low data regime. In contrast, the uncertainty region corresponding to the bound from Proposition~\ref{prop:errorbound_HT} stagnates for an increasing amount of data and also outperforms both Theorem~\ref{thm:errorbounds_uniform}(d) and Chowdhury \& Gopalan's bound over all learning iterations. Nevertheless, leveraging the proposed bound from Theorem~\ref{thm:errorbounds_uniform}(c) yields a reduction in uncertainty size of over two orders of magnitude compared to  Chowdhury \& Gopalan's bound, and over an order of magnitude compared to Proposition~\ref{prop:errorbound_HT}. Crucially, note that Theorem~\ref{thm:errorbounds_uniform}(d) assumes much less knowledge than the comparison bounds (merely an upper bound on the noise variance $\var{M_i}$), and is thus applicable in more general cases than related bounds.

Figure~\ref{fig:successrate_subE} shows the evolution of the success rate of the safe control algorithm (see Section~\ref{sec:eval_control}) over the learning steps for the proposed bounds from Theorem~\ref{thm:errorbounds_uniform}(c) and Proposition~\ref{prop:errorbound_HT}. Due to their conservatism, no successful run is achieved when using the bounds from Theorem~\ref{thm:errorbounds_uniform}(d) and Chowdhury \& Gopalan~\yrcite{Chowdhury2019bayesian}. For the bound from Proposition~\ref{prop:errorbound_HT}, the success rate drops to $0\%$ for $t > 50$ due to increasing conservatism. In contrast, employing the proposed bound from Theorem~\ref{thm:errorbounds_uniform}(c) yields a non-zero success rate for all learning steps, terminating at $71.6\%$. 

\begin{figure}[!t]
    \includegraphics[page=8, clip, trim=6.6cm 6.75cm 6.9cm 17.85cm, width=0.5\textwidth]{plots.pdf}
    \caption{Comparison of the size of the uncertainty region via the integral of the error bound ${\eta}_t$ \eqref{eq:errorbound_uniform} under sub-exponential noise. 
    Leveraging the proposed sub-exponential bound from Theorem~\ref{thm:errorbounds_uniform}(c) yields a significant reduction of uncertainty.
    }
    \label{fig:size_errbnd_subE}
\end{figure}

\begin{figure}[!t]
    \includegraphics[page=8, clip, trim=6.6cm 18.45cm 6.9cm 6.4cm, width=0.5\textwidth]{plots.pdf}
    \caption{Success rate of the safe optimal control algorithm over the number of data points under sub-exponential noise. 
    No successful run is achieved when using the bounds from Theorem~\ref{thm:errorbounds_uniform}(d) and Chowdhury \& Gopalan~\yrcite{Chowdhury2019bayesian}.
    }
    \label{fig:successrate_subE}
\end{figure}

\section{Conclusions} \label{sec:conclusion}
We have presented novel probabilistic uniform error bounds for kernel-based regression tailored to specific classes of noise distributions. The proposed framework enables the derivation of uniform error bounds for various distribution classes, including potentially correlated and non-Gaussian noise, thereby widening the field of applications. For the commonly considered case of bounded/sub-Gaussian noise as well as for sub-exponential noise, the numerical results demonstrate the competitiveness of the proposed uniform error bounds in safety-critical contexts, specifically when a large amount of data is available.

\clearpage

\section*{Acknowledgements}
This work has been partially funded by Deutsche Forschungsgemeinschaft (DFG, German Research Foundation) via the TRR 391 Spatio-temporal Statistics for the Transition of Energy and Transport (project 520388526) and the research unit Active Learning for Systems and Control (ALeSCo) (project 535860958), and by a start-up grant of the National University of Singapore (A-0010215-00-00). Johannes Teutsch gratefully acknowledges financial support from the German Academic Scholarship
Foundation.

\section*{Impact Statement}
This paper presents work whose goal is to advance the field
of Machine Learning. There are many potential societal
consequences of our work, none which we feel must be
specifically highlighted here.

\bibliography{references.bib}
\bibliographystyle{icml2026}

\newpage
\appendix
\onecolumn

\section{Proofs}
This appendix provides detailed proofs of our proposed results.

\subsection{Proof of Lemma~\ref{lem:errorbound_general}} \label{app:proof_generalbound}
\begin{proof}
    Let us first decompose the available output data $\mathbf{y}_t$ (generated via \eqref{eq:system}) into noise-free component $\mathbf{f}_t$ and noise component $\mathbf{M}_t(\omega)$, i.e., $\mathbf{y}_t = \mathbf{f}_t + \mathbf{M}_t(\omega)$. By the triangle inequality, the regression error is bounded by
    \begin{align}
         \lvert f(x) - \mu_t(x) \rvert &=  \lvert f(x)  - \left(\mathbf{f}_t + \mathbf{M}_t(\omega)\right)^{\top}\mathbf{h}_t(x) \rvert \le \lvert f(x) - \mathbf{f}_t^{\top}\mathbf{h}_t(x) \rvert + \lvert \mathbf{M}_t(\omega)^{\top}\mathbf{h}_t(x) \rvert \label{eq:bound_triangle}
    \end{align}
    with $\mathbf{h}_t(x) = \left(\mathbf{K}_t+\rho^2\mathbf{I}_t\right)^{-1}\mathbf{k}_t(x)$.
    The noise term $\lvert \mathbf{M}_t(\omega)^{\top}\mathbf{h}_t(x) \rvert$ is probabilistically bounded via \eqref{eq:noisebound_uniform}. Thus, it remains to show that $\lvert f(x) - \mathbf{f}^{\top}\mathbf{h}_t(x) \rvert \le B \tilde{\sigma}_t(x)$, with $\tilde{\sigma}_t(x) =\sqrt{\sigma^2_{t}(x) - \rho^2 \norm{\mathbf{h}_t(x)}_2^2}$ and $\sigma^2_{t}(x)$ from~\eqref{eq:kervar}.
    
    First, let us denote $\mathbf{h}_t(x) = [h_1(x), \dots, h_t(x)]^{\top}$. Due to Assumption~\ref{asm:system}, the reproducing property holds, i.e., for any $x \in \mathcal{X}$ and any $f \in \mathcal{H}_k$, the value $f(x)$ can be reproduced via $f(x) = \langle f, k(\cdot, x)\rangle_{\mathcal{H}_k}$. Thus, we can write $\mathbf{f}^{\top}\mathbf{h}_t(x) = \sum_{i=1}^t f(x_i) h_i(x) = \sum_{i=1}^t \inprod{f, k(\cdot,x_i)}_{\mathcal{H}_k} h_i(x)$. Thus, using the Cauchy-Schwarz inequality, we obtain
\begin{equation}
        \lvert f(x) - \mathbf{f}^{\top}\mathbf{h}_t(x)\rvert^2 = \lvert\inprod{f, k(\cdot, x)}_{\mathcal{H}_k} - \sum\limits_{i=1}^t \inprod{f, k(\cdot,x_i)}_{\mathcal{H}_k} h_i(x)\rvert^2 \le \norm{f}_{\mathcal{H}_k}^2 \norm{k(\cdot, x) - \textstyle\sum\limits_{i=1}^t k(\cdot,x_i) h_i(x)}^2_{\mathcal{H}_k}. \label{eq:helper_rkhsbound}
\end{equation}
Following \citep[Lemma~3]{molodchyk2025towards}, the second term on the right-hand side of \eqref{eq:helper_rkhsbound} can be expanded as
\begin{align}
        &\norm{k(\cdot, x) - \textstyle\sum\limits_{i=1}^t k(\cdot,x_i) h_i(x)}^2_{\mathcal{H}_k} \notag\\
        &\hspace{1cm}=\norm{k(\cdot, x)}^2_{\mathcal{H}_k} - 2 \textstyle\sum\limits_{i=1}^t \inprod{k(\cdot, x), k(\cdot,x_i)}_{\mathcal{H}_k} h_i(x) + \textstyle\sum\limits_{i,j=1}^t \inprod{k(\cdot,x_i), k(\cdot,x_j)}_{\mathcal{H}_k} h_i(x) h_j(x) \notag\\
        &\hspace{1cm}= k(x, x) - 2 \mathbf{k}_t(x)^\top \mathbf{h}_t(x) \, + \mathbf{h}_t(x)^\top \mathbf{K}_t \mathbf{h}_t(x) \notag\\
        &\hspace{1cm}= k(x, x) - 2 \mathbf{k}_t(x)^\top \left(\mathbf{K}_t + \rho^2 \mathbf{I}_t\right)^{-1} \mathbf{k}_t(x) + \mathbf{k}_t(x)^\top \left(\mathbf{K}_t + \rho^2 \mathbf{I}_t\right)^{-1}  \mathbf{K}_t \left(\mathbf{K}_t + \rho^2 \mathbf{I}_t\right)^{-1} \mathbf{k}_t(x) \notag\\
        &\hspace{1cm}\stackrel{\text{\eqref{eq:kervar}}}{=} \sigma_t^2(x) - \mathbf{k}_t(x)^\top \left(\mathbf{K}_t + \rho^2 \mathbf{I}_t\right)^{-1} \mathbf{k}_t(x) + \mathbf{k}_t(x)^\top \left(\mathbf{K}_t + \rho^2 \mathbf{I}_t\right)^{-1}  \mathbf{K}_t \left(\mathbf{K}_t + \rho^2 \mathbf{I}_t\right)^{-1} \mathbf{k}_t(x) \notag\\
        &\hspace{1cm}=\sigma_t^2(x) - \mathbf{k}_t(x)^\top \left(\mathbf{K}_t + \rho^2 \mathbf{I}_t\right)^{-1} \left(\mathbf{K}_t + \rho^2 \mathbf{I}_t\right) \left(\mathbf{K}_t + \rho^2 \mathbf{I}_t\right)^{-1} \mathbf{k}_t(x) \notag\\
        &\hspace{3cm}+ \mathbf{k}_t(x)^\top \left(\mathbf{K}_t + \rho^2 \mathbf{I}_t\right)^{-1}  \mathbf{K}_t \left(\mathbf{K}_t + \rho^2 \mathbf{I}_t\right)^{-1} \mathbf{k}_t(x) \notag\\
        &\hspace{1cm}= \sigma_t^2(x) - \mathbf{k}_t(x)^\top \left(\mathbf{K}_t + \rho^2 \mathbf{I}_t\right)^{-1} (\left(\mathbf{K}_t + \rho^2 \mathbf{I}_t\right) - \mathbf{K}_t) \left(\mathbf{K}_t + \rho^2 \mathbf{I}_t\right)^{-1} \mathbf{k}_t(x) \notag\\
        &\hspace{1cm}= \sigma_t^2(x) - \rho^2\mathbf{k}_t(x)^\top \left(\mathbf{K}_t + \rho^2 \mathbf{I}_t\right)^{-1} \mathbf{I}_t\left(\mathbf{K}_t + \rho^2 \mathbf{I}_t\right)^{-1} \mathbf{k}_t(x) \notag\\
        &\hspace{1cm} =\sigma_t^2(x) -  \rho^2 \lVert \mathbf{h}_t(x)\rVert_2^2. \label{eq:helper_rkhsbound_reform}
    \end{align}
    Hence, by leveraging \eqref{eq:helper_rkhsbound_reform} and $\norm{f}_{\mathcal{H}_k}\le B$ from Assumption~\ref{asm:system} and by taking the square root on both sides of \eqref{eq:helper_rkhsbound}, we obtain $\lvert f(x) - \mathbf{f}^{\top}\mathbf{h}_t(x) \rvert \le B \tilde{\sigma}_t(x)$,
    and combination with \eqref{eq:bound_triangle} yields
    \begin{equation}
        \lvert f(x) - \mu_t(x) \rvert \le B \tilde{\sigma}_{t}(x) + \lvert \mathbf{M}_t(\omega)^{\top}\mathbf{h}_t(x) \rvert. \label{eq:bound_deterministic}
    \end{equation}
    Lastly, the assertion follows combining \eqref{eq:bound_deterministic} and \eqref{eq:noisebound_uniform}.
\end{proof}

\subsection{Proof of Theorem \ref{thm:errorbounds_uniform}} \label{app:proof_uniform}
Before we prove Theorem \ref{thm:errorbounds_uniform}, we first present the corresponding result involving nonuniform error bounds as in~\eqref{eq:errorbound_nonuniform}.

\begin{theorem}[Nonuniform Error Bounds]\label{thm:errorbounds_nonuniform}
    Consider data \eqref{eq:dataset} generated via \eqref{eq:system} under Assumption~\ref{asm:system} with realizations $M_1(\omega), \dots,  M_t(\omega)$ of the noise $M_i$, $i \in \mathbb{N}_t$. 
    Then, for every $\rho > 0$ and $\delta \in (0,1)$, the regression error $\lvert f(x) - \mu_t(x) \rvert$ is bounded by the probabilistic error bound~\eqref{eq:errorbound_nonuniform} with $\tilde{\eta}_t(x) = B \tilde{\sigma}_t(x) + \tilde{\eta}^{M}_t(x)$ and $\tilde{\eta}^{M}_t(x)$ defined as follows:
        
    \paragraph{(a)} If $\mathbf{M}_t \in \mathcal{SG}(\mathbf{C}_t)$ for some $\mathbf{C}_t\succeq\bm{0}$, then $\tilde{\eta}^{M}_t(x) = \tilde{\eta}^{\mathcal{SG}}_t(x)$ with
    \begin{equation} \label{eq:errorbound_subG_nonuniform}
        \tilde{\eta}^{\mathcal{SG}}_t(x) \doteq \sqrt{2\ln(2/\delta)} \norm{\mathbf{h}_t(x)}_{\mathbf{C}_t}.
    \end{equation}
    
    \paragraph{(b)} If $M_i$, $i \in \mathbb{N}_t$, are i.i.d., $M_i \in \mathcal{L}^{\infty}(\overline{m})$, and $\var{M_i} = \overline{\sigma}^2$, then $\tilde{\eta}^{M}_t(x) = \min_{j\in\{1,2\}}\tilde{\eta}^\mathrm{bnd}_{j,t}(x)$ with 
    \begin{subequations}
        \begin{align} 
            \tilde{\eta}^\mathrm{bnd}_{1,t}(x) &\doteq  \sqrt{2\ln(2/\delta)}~ \overline{m} \lVert \mathbf{h}_t(x) \rVert_2,\label{eq:errorbound_bnd_hoeffding}
            \\
            \tilde{\eta}^\mathrm{bnd}_{2,t}(x) &\doteq \frac{2}{3}\ln(2/\delta)~ \overline{m} \lVert\mathbf{h}_t(x)\rVert_\infty + \sqrt{2\ln(2/\delta)} \overline{\sigma} \lVert \mathbf{h}_t(x) \rVert_2.\label{eq:errorbound_bnd_bernstein}
        \end{align}
    \end{subequations}
    
    \paragraph{(c)} If $M_i$, $i \in \mathbb{N}_t$, are i.i.d. and $M_i \in \mathcal{SE}(\nu_M^2,\alpha_M)$, then $\tilde{\eta}^{M}_t(x) = \tilde{\eta}^\mathcal{SE}_{t}(x)$ with
    \begin{align} \label{eq:errorbound_subE_nonuniform}
        \tilde{\eta}^\mathcal{SE}_{t}(x) \doteq \max&\left\{2\ln(2/\delta)\, \alpha_M \lVert \mathbf{h}_t(x)\rVert_\infty,~ \sqrt{2\ln(2/\delta)}\, \nu_M \lVert \mathbf{h}_t(x)\rVert_2\right\}.
    \end{align}
    
    \paragraph{(d)} If $M_i$, $i \in \mathbb{N}_t$, are i.i.d. and $M_i \in \mathcal{L}^2(\sigma_M^2)$, then $\tilde{\eta}^{M}_t(x) = \tilde{\eta}^{\mathcal{L}2}_{t}(x)$ with
    \begin{equation} \label{eq:errorbound_L2_nonuniform}
        \tilde{\eta}^{\mathcal{L}2}_{t}(x) \doteq  \sqrt{1/\delta}~ \sigma_M \lVert \mathbf{h}_t(x)\rVert_2.
    \end{equation}
\end{theorem}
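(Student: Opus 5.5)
The plan is to reduce everything to a pointwise tail bound for the scalar random variable $S_t(x) \doteq \mathbf{M}_t^{\top}\mathbf{h}_t(x) = \sum_{i=1}^t h_i(x) M_i$. For fixed $x$ and $t$, the vector $\mathbf{h}_t(x)$ is deterministic, since the inputs are deterministic. By the deterministic inequality \eqref{eq:bound_deterministic} from the proof of Lemma~\ref{lem:errorbound_general}, it then suffices to show $\prob{\lvert S_t(x)\rvert \le \tilde{\eta}^M_t(x)} \ge 1-\delta$ in each case. The event $\{\lvert S_t(x)\rvert \le \tilde{\eta}^M_t(x)\}$ implies $\lvert f(x)-\mu_t(x)\rvert \le B\tilde{\sigma}_t(x) + \tilde{\eta}^M_t(x)$, which is \eqref{eq:errorbound_nonuniform}.

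\textbf{(a)} From $\mathbf{M}_t \in \mathcal{SG}(\mathbf{C}_t)$, plug $\bm{\lambda} = \lambda \mathbf{h}_t(x)$ into the vector MGF condition. This gives $\mean{\exp{\lambda S_t(x)}} \le \exp{\lambda^2 \lVert \mathbf{h}_t(x)\rVert_{\mathbf{C}_t}^2/2}$, which is what \citep[Theorem~1a]{ao2025stochastic} states. I read the exponent in the paper's definition as the squared weighted norm. Chernoff's bound on both tails gives $\prob{\lvert S_t\rvert > s} \le 2\exp{-s^2/(2\lVert\mathbf{h}_t\rVert_{\mathbf{C}_t}^2)}$. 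Setting the right side equal to $\delta$ yields \eqref{eq:errorbound_subG_nonuniform}.

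\textbf{(b)} The terms $h_i M_i$ are independent, zero-mean and bounded by $\lvert h_i\rvert\overline{m}$. Hoeffding's inequality (equivalently, $M_i \in \mathcal{SG}(\overline{m}^2)$ combined with (a)) gives \eqref{eq:errorbound_bnd_hoeffding}. For \eqref{eq:errorbound_bnd_bernstein}, I would apply Bernstein's inequality with variance sum $\overline{\sigma}^2\lVert\mathbf{h}_t\rVert_2^2$ and uniform bound $b = \overline{m}\lVert\mathbf{h}_t\rVert_\infty$:
\[
\prob{\lvert S_t\rvert > s} \le 2\exp{-\frac{s^2/2}{\overline{\sigma}^2\lVert\mathbf{h}_t\rVert_2^2 + bs/3}}.
\]
Inverting this, i.e. solving the quadratic in $s$ and using $\sqrt{u+v}\le\sqrt u+\sqrt v$, gives $s \le \sqrt{2\ln(2/\delta)}\,\overline{\sigma}\lVert\mathbf{h}_t\rVert_2 + \frac{2}{3}\ln(2/\delta)\, b$. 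Both bounds hold on events of probability at least $1-\delta$. However, the minimum of the two thresholds is only valid on the intersection of the events. I would address this either by noting that each choice of $j$ is a valid bound separately, or by splitting $\delta$. The paper's statement suggests the former interpretation, with a fixed choice of $j$. This bookkeeping is the main subtlety.

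\textbf{(c)} By independence, $\mean{\exp{\lambda S_t}} = \prod_i \mean{\exp{\lambda h_i M_i}} \le \exp{\lambda^2\nu_M^2\lVert\mathbf{h}_t\rVert_2^2/2}$ whenever $\lvert\lambda\rvert < 1/(\alpha_M\lVert\mathbf{h}_t\rVert_\infty)$. Hence $S_t \in \mathcal{SE}(\nu_M^2\lVert\mathbf{h}_t\rVert_2^2, \alpha_M\lVert\mathbf{h}_t\rVert_\infty)$. The standard sub-exponential tail bound \citep{wainwright2019high} reads
\[
\prob{\lvert S_t\rvert > s} \le 2\exp{-\min\{s^2/(2\nu^2), s/(2\alpha)\}}.
\]
Setting this equal to $\delta$ means the threshold must satisfy both $s \ge \sqrt{2\ln(2/\delta)}\,\nu$ and $s \ge 2\ln(2/\delta)\,\alpha$. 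Taking the maximum of the two gives \eqref{eq:errorbound_subE_nonuniform}.

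\textbf{(d)} Independence gives $\var{S_t} \le \sigma_M^2\lVert\mathbf{h}_t\rVert_2^2$. Chebyshev's inequality with $s = \sigma_M\lVert\mathbf{h}_t\rVert_2/\sqrt{\delta}$ then yields \eqref{eq:errorbound_L2_nonuniform}.

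I expect the only real obstacles to be correctly inverting the Bernstein tail in (b) and handling the $\min$ over $j$ there; the remaining cases are direct applications of standard concentration inequalities.
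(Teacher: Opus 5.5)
Your proposal is correct and follows the paper's proof essentially step for step. Both reduce via \eqref{eq:bound_deterministic} to the weighted sum $\mathbf{M}_t^{\top}\mathbf{h}_t(x)$ and then apply, in turn, the sub-Gaussian Chernoff bound, Hoeffding and Bernstein (with the same quadratic inversion and $\sqrt{u+v}\le\sqrt{u}+\sqrt{v}$), the sub-exponential Bernstein bound, and Chebyshev.

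The worry you flag about the $\min$ in (b) is not a real subtlety. For fixed $x$ and $t$ both thresholds are deterministic, so $\min_{j}\tilde{\eta}^{\mathrm{bnd}}_{j,t}(x)$ simply equals one of them. The corresponding event already has probability at least $1-\delta$, with no intersection of events or splitting of $\delta$ needed. The intersection issue you describe would only arise in a uniform-in-$x$ version, where the minimizing index varies with $x$.
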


\begin{proof}
    Choose any regularization parameter $\rho > 0$, input $x \in \mathcal{X}$, time/number of data points $t \in \mathbb{N}$, and confidence parameter $\delta \in (0,1)$. Further, we denote $\mathbf{h}_t(x) = [h_1(x), \dots, h_t(x)]^{\top}$, which allows us to write $\mathbf{M}_t^{\top}\mathbf{h}_t(x) = \sum_{i=1}^t h_i(x) M_i$.\\
    For all cases \textbf{(a)}--\textbf{(d)}, the assertions follow from \eqref{eq:bound_deterministic} by deriving suitable functions $\tilde{\eta}_t^M$ satisfying the nonuniform bound
    \begin{equation}
        \forall x \in \mathcal{X},~t\in\mathbb{N}:~ \mathbb{P}\left[\lvert\mathbf{M}_t^{\top}\mathbf{h}_t(x)\rvert \le \tilde{\eta}_t^{M}(x)\right] \geq 1 - \delta. \label{eq:noisebound_nonuniform}
    \end{equation}
    
    \paragraph{(a)} Since $\mathbf{M}_t \in \mathcal{SG}(\mathbf{C}_t)$, we have $\mathbf{M}_t^{\top}\mathbf{h}_t(x) \in \mathcal{SG}(\norm{\mathbf{h}_t(x)}_{\mathbf{C}_t}^2$) due to the properties of vector-values sub-Gaussian random variables~\citep[Theorem~1a]{ao2025stochastic}. From the concentration inequality of sub-Gaussian random variables \citep[Sec.~2.1]{wainwright2019high}, it follows that 
    \begin{equation*}
        \mathbb{P}\left[\lvert\mathbf{M}_t^{\top}\mathbf{h}_t(x)\rvert \le c\right] \geq 1-2\exp{-\frac{c^2}{2\norm{\mathbf{h}_t(x)}_{\mathbf{C}_t}^2}}
    \end{equation*}
    for all $c > 0$. Introducing the confidence parameter $\delta \in (0,\,1)$ and setting $c = \tilde{\eta}_t^M(x) = \tilde{\eta}_t^{\mathcal{SG}}(x) = \sqrt{2\ln(2/\delta)} \norm{\mathbf{h}_t(x)}_{\mathbf{C}_t}$, we obtain the nonuniform probabilistic bound \eqref{eq:noisebound_nonuniform}. 
    
    \paragraph{(b)} 
    The sub-Gaussian bound~\eqref{eq:errorbound_bnd_hoeffding} follows from \textbf{(a)} by setting the variance proxy to $ \mathbf{C}_t = \overline{m}^2\mathbf{I}_t$, since $\mathcal{L}^\infty(\overline{m}) \subset \mathcal{SG}(\overline{m})$ \cite{wainwright2019high}, cf. Theorem~1 of Reed et al.~\yrcite{reed2025error}.
    
    Furthermore, from Bernstein's inequality for the sum of bounded random variables \citep[Theorem~2.8.4]{vershynin2018high}, we have that 
    \begin{equation*}
        \prob{\lvert \mathbf{M}_t^{\top}\mathbf{h}_t(x) \rvert \le c} \ge 1 - 2\exp{-\frac{c^2}{2\overline{\sigma}^2 \lVert\mathbf{h}_t(x)\rVert_\infty^2 +\frac{2}{3} \overline{m} \lVert\mathbf{h}_t(x)\rVert_\infty c}}
    \end{equation*}
    for all $c > 0$. Introducing the confidence parameter $\delta \in (0,1)$ and setting
    \begin{equation*}
        \delta = 2\exp{-\frac{c^2}{2\overline{\sigma}^2 \lVert\mathbf{h}_t(x)\rVert_\infty^2 +\frac{2}{3} \overline{m} \lVert\mathbf{h}_t(x)\rVert_\infty c}},
    \end{equation*}
    we obtain the condition
    \begin{equation}
        c^2 - \ln(2/\delta) \frac{2}{3} \overline{m} \lVert\mathbf{h}_t(x)\rVert_\infty c - \ln(2/\delta) 2 \overline{\sigma}^2 \lVert\mathbf{h}_t(x)\rVert_\infty^2 = 0. \label{eq:quadcond_bernstein}
    \end{equation}
    The nonnegative solution to \eqref{eq:quadcond_bernstein} is $c = \tilde{c}(x)$ with
    \begin{equation}
        \tilde{c}(x) \doteq \frac{1}{3}\ln(2/\delta)~ \overline{m} \lVert\mathbf{h}_t(x)\rVert_\infty + \sqrt{ \frac{1}{9}\ln(2/\delta)^2~ \overline{m}^2 \lVert\mathbf{h}_t(x)\rVert_\infty^2 + 2\ln(2/\delta)~\overline{\sigma}^2 \lVert \mathbf{h}_t(x) \rVert_2^2}, \label{eq:bernsteinbound_tight}
    \end{equation}
    thus yielding $\prob{\lvert \mathbf{M}_t^{\top}\mathbf{h}_t(x) \rvert \le \tilde{c}(x)} \ge 1 -\delta$. To simplify the expression of $\tilde{c}(x)$ in \eqref{eq:bernsteinbound_tight}, we apply the triangle inequality 
    $$\sqrt{ (1/9)\ln(2/\delta)^2~ \overline{m}^2 \lVert\mathbf{h}_t(x)\rVert_\infty^2 + 2\ln(2/\delta)~\overline{\sigma}^2 \lVert \mathbf{h}_t(x) \rVert_2^2} \le (1/3)\ln(2/\delta)~ \overline{m} \lVert\mathbf{h}_t(x)\rVert_\infty + \sqrt{2\ln(2/\delta)}~\overline{\sigma} \lVert \mathbf{h}_t(x) \rVert_2,$$
    yielding the bound $\tilde{\eta}_t^M(x) = \tilde{\eta}^{\mathrm{bnd}}_{2,t}(x) \ge \tilde{c}(x)$ from \eqref{eq:errorbound_bnd_bernstein} satisfying the nonuniform probabilistic bound \eqref{eq:noisebound_nonuniform}.

    Since both the Hoeffding-type bound~\eqref{eq:errorbound_bnd_hoeffding} and the Bernstein-type bound~\eqref{eq:errorbound_bnd_bernstein} are valid nonuniform probabilistic bounds \eqref{eq:noisebound_nonuniform}, we can employ $\tilde{\eta}^{M}_t(x) = \min_{j\in\{1,2\}}\tilde{\eta}^\mathrm{bnd}_{j,t}(x)$ to obtain the tightest bound.
        
    \paragraph{(c)} 
    From the Bernstein inequality for the sum of weighted sub-exponential random variables \citep[Theorem~2.8.2]{vershynin2018high}, \citep[Proposition~2.9]{wainwright2019high}, it holds for all $c > 0$ that
    \begin{align*}
        \prob{\lvert \mathbf{M}_t^{\top}\mathbf{h}_t(x) \rvert \le c} &\ge 
        \begin{cases}
            1 - 2 \exp{-\dfrac{c^2}{2 \nu^2 \lVert \mathbf{h}_t(x)\rVert_2^2}}  & \textnormal{for } 0 \le c < \dfrac{\nu^2\lVert\mathbf{h}_t(x)\rVert_2^2}{\alpha \lVert\mathbf{h}_t(x)\rVert_\infty}\\ 
            1 - 2 \exp{-\dfrac{c}{2 \alpha \lVert \mathbf{h}_t(x)\rVert_\infty}} & \textnormal{else}
        \end{cases},\\
        \Longleftrightarrow~~~\prob{\lvert \mathbf{M}_t^{\top}\mathbf{h}_t(x) \rvert \le c}& \ge 1-\max\left\{ 2 \exp{-\frac{c^2}{2 \nu^2 \lVert \mathbf{h}_t(x)\rVert_2^2}},~ 2 \exp{-\frac{c}{2 \alpha \lVert \mathbf{h}_t(x)\rVert_\infty}}\right\}.
    \end{align*}
    Introducing the confidence parameter $\delta \in (0,\,1)$ and setting 
    \begin{equation*}
        \delta = \begin{cases}
            2 \exp{-\dfrac{c^2}{2 \nu^2 \lVert \mathbf{h}_t(x)\rVert_2^2}}  & \textnormal{for } 0 \le c < \dfrac{\nu^2\lVert\mathbf{h}_t(x)\rVert_2^2}{\alpha \lVert\mathbf{h}_t(x)\rVert_\infty}\\ 
            2 \exp{-\dfrac{c}{2 \alpha \lVert \mathbf{h}_t(x)\rVert_\infty}} & \textnormal{else}
        \end{cases}
    \end{equation*}
    we obtain the condition
    \begin{equation*}
        c = \begin{cases}
            \sqrt{2\ln(2/\delta)}\, \nu_M \lVert \mathbf{h}_t(x)\rVert_2  & \textnormal{for } 0 \le \sqrt{2\ln(2/\delta)} < \dfrac{\nu\lVert\mathbf{h}_t(x)\rVert_2}{\alpha \lVert\mathbf{h}_t(x)\rVert_\infty}\\ 
            ~~~~2\ln(2/\delta)\, \alpha_M \lVert \mathbf{h}_t(x)\rVert_\infty & \textnormal{else}
        \end{cases},
    \end{equation*}
    which can be equivalently expressed as $c = \tilde{\eta}^{\mathcal{SE}}_t(x)$ from \eqref{eq:errorbound_subE_nonuniform}, thus yielding the nonuniform probabilistic bound \eqref{eq:noisebound_nonuniform}.
    
    \paragraph{(d)} 
    Since the noise sequence $M_1,\,\dots,\,M_t$ is i.i.d., we have $\mathbf{M}_t^{\top}\mathbf{h}_t(x) \in \mathcal{L}^2(\sigma^2_M\lVert \mathbf{h}_t(x)\rVert^2_2)$.
    From the Chebychev inequality~\citep[Corollary~1.2.5]{vershynin2018high}, it holds for all $c > 0$ that 
    \begin{equation*}
        \prob{\lvert \mathbf{M}_t^{\top}\mathbf{h}_t(x) \rvert \ge c} \ge 1-  \frac{\sigma^2_M\lVert \mathbf{h}_t(x)\rVert^2_2}{c^2}.
    \end{equation*}
    Introducing the confidence parameter $\delta \in (0,\,1)$ and setting $c = \tilde{\eta}_t^M(x) = \tilde{\eta}_t^{\mathcal{L}2}(x) = \sqrt{1/\delta}~ \sigma_M \lVert \mathbf{h}_t(x)\rVert_2$, we obtain the nonuniform probabilistic bound ~\eqref{eq:noisebound_nonuniform}.
\end{proof}

Notably, the error bounds from Theorem~\ref{thm:errorbounds_nonuniform} hold nonuniformly as in~\eqref{eq:errorbound_nonuniform}. In order to obtain uniform counterparts as in~\eqref{eq:errorbound_uniform}, we employ discretization methods from Gaussian Process regression literature~\cite{Srinivas2009,Lederer2019}. As an intermediate step, we require the following technical lemma that translates the Hölder continuity of the kernel (Assumption~\ref{asm:hoelder}) into the Hölder continuity of norms of the data- and decision-dependent vector $\mathbf{h}_t(x)$, which will later allow us to take into account the discretization error.

\begin{lemma} \label{lem:hoelder} 
    Consider data \eqref{eq:dataset}, a kernel $k$ that satisfies Assumption~\ref{asm:hoelder}, and a positive semi-definite matrix $\mathbf{C}\in\mathbb{R}^{t\times t}$, $\mathbf{C} \succeq \bm{0}$, with maximum singular value $\varsigma_{\mathbf{C}} = \lVert \mathbf{C} \rVert_2$. Then, the terms $\rho \lVert \mathbf{h}_t(x)\rVert_2$, $\rho \lVert \mathbf{h}_t(x)\rVert_\infty$, and $\rho \lVert \mathbf{h}_t(x)\rVert_{\mathbf{C}}$ are Hölder continuous with order $\frac{p}{2}$ and constants $\sqrt{\frac{L}{2}}$ and $\sqrt{\varsigma_{\mathbf{C}}\frac{L}{2}}$, i.e., $ \forall x, x' \in \mathcal{X}$, $\forall t \in \mathbb{N}$:
    \begin{subequations}
        \begin{align}
            \lvert \rho \lVert \mathbf{h}_t(x)\rVert_2 - \rho \lVert \mathbf{h}_t(x')\rVert_2\rvert &\le \sqrt{\frac{L}{2}}\lVert x - x' \rVert^{\frac{p}{2}}, \label{eq:hoelder_2norm}\\
            \lvert \rho \lVert \mathbf{h}_t(x)\rVert_\infty - \rho \lVert \mathbf{h}_t(x')\rVert_\infty\rvert &\le \sqrt{\frac{L}{2}}\lVert x - x' \rVert^{\frac{p}{2}},\label{eq:hoelder_infnorm}\\
            \lvert \rho \lVert \mathbf{h}_t(x)\rVert_{\mathbf{C}} - \rho \lVert \mathbf{h}_t(x')\rVert_{\mathbf{C}}\rvert &\le \sqrt{\varsigma_{\mathbf{C}} \frac{L}{2}}\lVert x - x' \rVert^{\frac{p}{2}}.\label{eq:hoelder_2norm_weight}
        \end{align}
    \end{subequations}
\end{lemma}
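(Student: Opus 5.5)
The plan is to reduce all three claims to one estimate on the vector difference $\rho(\mathbf{h}_t(x)-\mathbf{h}_t(x'))$, and then finish with reverse triangle inequalities. For any seminorm $\lVert\cdot\rVert_\star$ we have $\lvert \lVert \mathbf{a}\rVert_\star - \lVert \mathbf{b}\rVert_\star\rvert \le \lVert \mathbf{a}-\mathbf{b}\rVert_\star$. Moreover $\lVert\cdot\rVert_\infty \le \lVert\cdot\rVert_2$, and $\lVert \mathbf{s}\rVert_{\mathbf{C}} \le \sqrt{\varsigma_{\mathbf{C}}}\lVert \mathbf{s}\rVert_2$ because $\mathbf{C}\preceq \varsigma_{\mathbf{C}}\mathbf{I}_t$. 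Hence \eqref{eq:hoelder_2norm}--\eqref{eq:hoelder_2norm_weight} all follow from the single claim
\begin{equation*}
\rho^2\lVert \mathbf{h}_t(x)-\mathbf{h}_t(x')\rVert_2^2 \le \tfrac{L}{2}\lVert x-x'\rVert^{p}.
\end{equation*}

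To prove this claim, write $\mathbf{A}=\mathbf{K}_t+\rho^2\mathbf{I}_t$ and $\mathbf{d}=\mathbf{k}_t(x)-\mathbf{k}_t(x')$. Then $\rho^2\lVert \mathbf{h}_t(x)-\mathbf{h}_t(x')\rVert_2^2=\mathbf{d}^\top \rho^2\mathbf{A}^{-2}\mathbf{d}$.

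The matrices $\mathbf{A}$ and $\mathbf{K}_t$ share an eigenbasis. On an eigendirection with eigenvalue $\lambda\ge 0$ of $\mathbf{K}_t$, the quadratic form has weight $\rho^2/(\lambda+\rho^2)^2$. The AM--GM inequality $(\lambda+\rho^2)^2\ge 4\lambda\rho^2$ gives $\rho^2/(\lambda+\rho^2)^2 \le 1/(4\lambda)$ whenever $\lambda>0$. Therefore $\mathbf{d}^\top\rho^2\mathbf{A}^{-2}\mathbf{d}\le \tfrac14 \mathbf{d}^\top \mathbf{K}_t^{+}\mathbf{d}$, provided $\mathbf{d}$ has no component in $\ker\mathbf{K}_t$.

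To see that $\mathbf{d}$ lies in the range of $\mathbf{K}_t$, set $g\doteq k(\cdot,x)-k(\cdot,x')\in\mathcal{H}_k$. By the reproducing property $d_i=\langle g,k(\cdot,x_i)\rangle_{\mathcal{H}_k}$, so $\mathbf{d}=\Phi^{*}g$, where $\Phi:\mathbb{R}^t\to\mathcal{H}_k$ maps $\mathbf{a}\mapsto\sum_i a_i k(\cdot,x_i)$ and $\mathbf{K}_t=\Phi^*\Phi$. Consequently $\mathbf{d}\in\mathrm{range}(\Phi^*)=\mathrm{range}(\mathbf{K}_t)$.

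Next I would use the standard identity $\mathbf{d}^\top\mathbf{K}_t^{+}\mathbf{d}=\lVert P g\rVert_{\mathcal{H}_k}^2$, where $P$ is the orthogonal projection onto $\mathrm{span}\{k(\cdot,x_i)\}$. This gives $\mathbf{d}^\top\mathbf{K}_t^{+}\mathbf{d}\le\lVert g\rVert_{\mathcal{H}_k}^2$.

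Finally, expand $\lVert g\rVert_{\mathcal{H}_k}^2 = \bigl(k(x,x)-k(x,x')\bigr)+\bigl(k(x',x')-k(x',x)\bigr)$. By Assumption~\ref{asm:hoelder} and symmetry of $k$, each bracket is at most $L\lVert x-x'\rVert^p$. Combining the steps yields $\rho^2\lVert \mathbf{h}_t(x)-\mathbf{h}_t(x')\rVert_2^2\le \tfrac14\cdot 2L\lVert x-x'\rVert^p=\tfrac{L}{2}\lVert x-x'\rVert^p$, which is exactly the claim.

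The main obstacle is getting the constant $\sqrt{L/2}$ rather than a cruder one. The naive route $\rho^2\mathbf{A}^{-2}\preceq\mathbf{A}^{-1}\preceq\mathbf{K}_t^{+}$ only gives $\sqrt{2L}$. The factor $\tfrac14$ from AM--GM, together with the careful handling of the kernel of $\mathbf{K}_t$ through the range argument and the projection bound $\lVert Pg\rVert_{\mathcal{H}_k}\le\lVert g\rVert_{\mathcal{H}_k}$, is where the care is needed.
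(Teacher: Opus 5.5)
Your proposal is correct and is essentially the paper's proof. Both arguments use the reverse triangle inequality together with $\lVert\cdot\rVert_\infty\le\lVert\cdot\rVert_2$ and $\lVert\cdot\rVert_{\mathbf{C}}\le\sqrt{\varsigma_{\mathbf{C}}}\lVert\cdot\rVert_2$ to reduce all three inequalities to the single bound $\rho^2\lVert\mathbf{h}_t(x)-\mathbf{h}_t(x')\rVert_2^2\le\frac14\lVert k(\cdot,x)-k(\cdot,x')\rVert_{\mathcal{H}_k}^2\le\frac{L}{2}\lVert x-x'\rVert^p$, and in both the factor $\frac14$ comes from the same scalar inequality $\rho^2\lambda/(\rho^2+\lambda)^2\le\frac14$.

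The difference is only in presentation. You verify the bound in $\mathbb{R}^t$ using the eigenbasis of $\mathbf{K}_t$, a range argument, the pseudoinverse, and the projection onto $\mathrm{span}\{k(\cdot,x_i)\}$. The paper instead works with the feature-space operator $\tilde{\mathbf{A}}_t=\mathbf{A}_t-\mathbf{A}_t^2$ and shows $\mathrm{spec}(\tilde{\mathbf{A}}_t)\subset[0,0.25]$ on all of the feature space, so it never needs an explicit range condition.
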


\begin{proof}
    Consider any $x, x' \in \mathcal{X}$ and $t\in\mathbb{N}$. Recall that for any positive definite kernel $k: \mathcal{X} \times \mathcal{X} \to \mathbb{R}$ from Assumption~\ref{asm:system} there exists some (possibly infinite-dimensional) feature Hilbert space $\mathcal{H}$ and a feature map $\bm{\phi} : \mathcal{X} \to \mathcal{H}$ such that 
    \begin{equation} \label{eq:k_phi}
		k(x, x^\prime) = \langle \bm{\phi}(x), \bm{\phi}(x^\prime) \rangle_\mathcal{H},
	\end{equation}
    where $\langle \cdot, \cdot \rangle_\mathcal{H}$ is an inner product on $\mathcal{H}$. With slight abuse of notation, we can identify the Gramian $\mathbf{K}_t \in \mathbb{R}^{t \times t}$ with the linear operator $\mathbf{K}_t: \mathbb{R}^t \to \mathbb{R}^t$. Similarly, we define the identity operators $\mathbf{I}_t$ and $\mathbf{I}_\mathcal{H}$ on $\mathbb{R}^t$ and $\mathcal{H}$, respectively.
    With the data \eqref{eq:dataset}, we introduce the linear operator $\bm{\Phi}_t: \mathbb{R}^t \to \mathcal{H}$ as
    \begin{equation}
	\bm{\Phi}_t: \mathbb{R}^t \ni a\doteq[a_1 \ldots a_t]^\top \mapsto \mathbf{\Phi}_t a \doteq \sum_{i=1}^{t} a_i \phi(x_i) \in \mathcal{H}. \label{eq:phi_operator}
    \end{equation}
    Since $\mathbf{\Phi}_t$ is finite-rank, it is bounded and there exists a unique adjoint $\mathbf{\Phi}_t^\ast : \mathcal{H} \to \mathbb{R}^t$ satisfying
	\[
		\langle \mathbf{\Phi}_t a, b \rangle_\mathcal{H} = \langle a, \mathbf{\Phi}_t^\ast b \rangle_{\mathbb{R}^t} = a^\top \mathbf{\Phi}_t^\ast b, \qquad \forall \, a \in \mathbb{R}^t, b \in \mathcal{H},
	\]
	where we use $\langle \cdot, \cdot \rangle_{\mathbb{R}^t}$ to denote the dot product on $\mathbb{R}^t$.
	Note that $\mathbf{\Phi}_t^\ast$ is also bounded and satisfies $(\mathbf{\Phi}_t^\ast)^\ast = \mathbf{\Phi}_t$.

    Due to \eqref{eq:k_phi}, it holds that $\mathbf{k}_t(x) = \mathbf{\Phi}_t^\ast \phi(x)$ for all $x \in \mathcal{X}$ and $\mathbf{K}_t = \mathbf{\Phi}_t^\ast \mathbf{\Phi}_t$.
    Then, the term $\sigma^2_{t}(x)$ from \eqref{eq:kervar} can be equivalently written as
	\begin{align}
		\sigma^2_{t}(x) &= k(x,x) - \mathbf{k}_t(x)^\top (\rho^2 \mathbf{I}_t + \mathbf{K}_t)^{-1} \mathbf{k}_t(x)
		= \langle \phi(x), \phi(x) \rangle_\mathcal{H} - \langle \mathbf{k}_t(x), (\rho^2 \mathbf{I}_t + \mathbf{K}_t)^{-1} \mathbf{k}_t(x) \rangle_{\mathbb{R}^t} \notag \\
		& = \langle \phi(x), \phi(x) \rangle_\mathcal{H} - \langle \mathbf{\Phi}_t^\ast \phi(x), (\rho^2 \mathbf{I}_t + \mathbf{K}_t)^{-1} \mathbf{\Phi}_t^\ast \phi(x) \rangle_{\mathbb{R}^t} = \langle \phi(x), \phi(x) \rangle_\mathcal{H} - \langle \phi(x), \mathbf{\Phi}_t(\rho^2 \mathbf{I}_t + \mathbf{K}_t)^{-1} \mathbf{\Phi}_t^\ast \phi(x) \rangle_{\mathcal{H}} \notag\\
		& = \langle \phi(x), \left[\mathbf{I}_\mathcal{H} - \mathbf{\Phi}_t(\rho^2 \mathbf{I}_t + \mathbf{\Phi}_t^\ast \mathbf{\Phi}_t)^{-1} \mathbf{\Phi}_t^\ast \right] \phi(x) \rangle_{\mathcal{H}} \doteq \langle \phi(x), \mathbf{A}_t\phi(x) \rangle_{\mathcal{H}}. \label{eq:kervar_reform}
	\end{align}
    In the last equality, we have defined the (bounded self-adjoint) operator $\mathbf{A}_t: \mathcal{H} \to \mathcal{H}$ as
	$\mathbf{A}_t \doteq \mathbf{I}_\mathcal{H} - \mathbf{\Phi}_t(\rho^2 \mathbf{I}_t + \mathbf{\Phi}_t^\ast \mathbf{\Phi}_t)^{-1} \mathbf{\Phi}_t^\ast$.

    Similarly, the term $\rho^2 \lVert \mathbf{h}_t(x)\rVert_2^2$ (cf. \eqref{eq:helper_rkhsbound_reform}) is given as 
    \begin{align} \label{eq:hvar_reform}
		\rho^2 \lVert \mathbf{h}_t(x)\rVert_2^2 &= \rho^2 \mathbf{k}_t(x)^\top (\rho^2 \mathbf{I}_t + \mathbf{K}_t)^{-2} \mathbf{k}_t(x) = \rho^2 \langle \mathbf{k}_t(x), (\rho^2 \mathbf{I}_t + \mathbf{K}_t)^{-2} \mathbf{k}_t(x) \rangle_{\mathbb{R}^t} 
		\notag \\
        &= \rho^2 \langle \mathbf{\Phi}_t^\ast \phi(x), (\rho^2 \mathbf{I}_t + \mathbf{\Phi}_t^\ast\mathbf{\Phi}_t)^{-2} \mathbf{\Phi}_t^\ast \phi(x) \rangle_{\mathbb{R}^t} \notag \\
        &= \langle  \phi(x), \rho^2 \mathbf{\Phi}_t (\rho^2 \mathbf{I}_t + \mathbf{\Phi}_t^\ast\mathbf{\Phi}_t)^{-2} \mathbf{\Phi}_t^\ast \phi(x) \rangle_\mathcal{H} \doteq \langle \phi(x), \tilde{\mathbf{A}}_t \phi(x)\rangle_\mathcal{H}, 
	\end{align}
    where $\tilde{\mathbf{A}}_t: \mathcal{H} \to \mathcal{H}$ is defined as a compact self-adjoint operator $\tilde{\mathbf{A}}_t = \rho^2 \mathbf{\Phi}_t (\rho^2 \mathbf{I}_t + \mathbf{\Phi}_t^\ast \mathbf{\Phi}_t)^{-2}\mathbf{\Phi}_t^{\ast}$.
    
	Next, we recall the following identities, cf.~\citep[Appendix~D]{kirschner2018information}:
	\begin{itemize}
		\item[i)] $\rho^2 (\rho^2 \mathbf{I}_\mathcal{H} + \mathbf{\Phi}_t \mathbf{\Phi}_t^\ast)^{-1} = \mathbf{I}_\mathcal{H} - \mathbf{\Phi}_t \mathbf{\Phi}_t^\ast (\rho^2 \mathbf{I}_\mathcal{H} + \mathbf{\Phi}_t \mathbf{\Phi}_t^\ast)^{-1}$,
		\item[ii)] $\rho^2 (\rho^2 \mathbf{I}_\mathcal{H} + \mathbf{\Phi}_t \mathbf{\Phi}_t^\ast)^{-1} = \mathbf{I}_\mathcal{H} - \mathbf{\Phi}_t (\rho^2 \mathbf{I}_t + \mathbf{\Phi}_t^\ast\mathbf{\Phi}_t)^{-1} \mathbf{\Phi}_t^\ast$, and
		\item[iii)] $(\rho^2 \mathbf{I}_\mathcal{H} + \mathbf{\Phi}_t \mathbf{\Phi}_t^\ast)^{-1} \mathbf{\Phi}_t = \mathbf{\Phi}_t (\rho^2 \mathbf{I}_t + \mathbf{\Phi}_t^\ast\mathbf{\Phi}_t)^{-1}$.
	\end{itemize}
    By virtue of identities ii) and iii) from above, $\mathbf{A}_t$ and $ \tilde{\mathbf{A}}_t$ can be re-written respectively as
    \begin{subequations}\label{eq:kervar_operators} 
	\begin{align}
		\mathbf{A}_t &= \rho^2 (\rho^2 \mathbf{I}_\mathcal{H} + \mathbf{\Phi}_t \mathbf{\Phi}_t^\ast)^{-1} = (\mathbf{I}_\mathcal{H} + \rho^{-2}\mathbf{\Phi}_t \mathbf{\Phi}_t^\ast)^{-1}, \label{eq:kervar_weightmatrix}
        \\
        \tilde{\mathbf{A}}_t &= \rho^2 \mathbf{\Phi}_t (\rho^2 \mathbf{I}_t + \mathbf{\Phi}_t^\ast \mathbf{\Phi}_t)^{-2}\mathbf{\Phi}_t^{\ast} = \rho^2 (\rho^2 \mathbf{I}_\mathcal{H} + \mathbf{\Phi}_t \mathbf{\Phi}_t^\ast)^{-1} \mathbf{\Phi}_t \mathbf{\Phi}_t^\ast (\rho^2 \mathbf{I}_\mathcal{H} + \mathbf{\Phi}_t \mathbf{\Phi}_t^\ast)^{-1}.
	\end{align}
    \end{subequations}
    Utilizing the identity i) from above, we observe that
	\begin{multline*}
		\mathbf{A}_t - \tilde{\mathbf{A}}_t = \rho^2 (\rho^2 \mathbf{I}_\mathcal{H} + \mathbf{\Phi}_t \mathbf{\Phi}_t^\ast)^{-1} - \rho^2 (\rho^2 \mathbf{I}_\mathcal{H} + \mathbf{\Phi}_t \mathbf{\Phi}_t^\ast)^{-1} \mathbf{\Phi}_t \mathbf{\Phi}_t^\ast (\rho^2 \mathbf{I}_\mathcal{H} + \mathbf{\Phi}_t \mathbf{\Phi}_t^\ast)^{-1} \\
		= \rho^2 (\rho^2 \mathbf{I}_\mathcal{H} + \mathbf{\Phi}_t \mathbf{\Phi}_t^\ast)^{-1} \left[ \mathbf{I}_\mathcal{H} - \mathbf{\Phi}_t \mathbf{\Phi}_t^\ast (\rho^2 \mathbf{I}_\mathcal{H} + \mathbf{\Phi}_t \mathbf{\Phi}_t^\ast)^{-1} \right]  = \rho^2 (\rho^2 \mathbf{I}_\mathcal{H} + \mathbf{\Phi}_t \mathbf{\Phi}_t^\ast)^{-1} \cdot \rho^2 (\rho^2 \mathbf{I}_\mathcal{H} + \mathbf{\Phi}_t \mathbf{\Phi}_t^\ast)^{-1} = \mathbf{A}_t^2.
	\end{multline*}
    For the spectrum of $\mathbf{A}_t$, we have $\mathrm{spec}(\mathbf{A}_t) \subset (0,1]$ because $\mathrm{spec}(\mathbf{I}_\mathcal{H} + \rho^{-2}\mathbf{\Phi}_t \mathbf{\Phi}_t^\ast) \subset [1, \infty)$. Since $\tilde{\mathbf{A}}_t = \mathbf{A}_t -\mathbf{A}_t^2$, the spectrum of $\tilde{\mathbf{A}}_t$ can be bounded via
	\[
	\mathrm{spec}(\tilde{\mathbf{A}}_t) = \{\lambda - \lambda^2 \mid \lambda \in \mathrm{spec}(\mathbf{A}_t) \} \subset \{\lambda - \lambda^2 \mid \lambda \in (0,1] \}=[0,0.25].
	\]
    Similarly to \eqref{eq:hvar_reform} and by definition of $\mathbf{h}_t(x)$, it holds that
	\begin{multline*}
		\rho^2 \lVert \mathbf{h}_t(x) - \mathbf{h}_t(x')\rVert_2^2 = \rho^2 \left(\mathbf{k}_t(x)-\mathbf{k}_t(x')\right)^{\top} (\rho^2 \mathbf{I}_t + \mathbf{K}_t)^{-2}\left(\mathbf{k}_t(x)-\mathbf{k}_t(x')\right) = \\
		\rho^2 \langle \left(\mathbf{k}_t(x)-\mathbf{k}_t(x')\right), (\rho^2 \mathbf{I}_t + \mathbf{K}_t)^{-2}\left(\mathbf{k}_t(x)-\mathbf{k}_t(x')\right) \rangle_{\mathbb{R}_t} = \rho^2 \langle \mathbf{\Phi}_t^\ast \left(\phi(x)-\phi(x')\right), (\rho^2 \mathbf{I}_t + \mathbf{K}_t)^{-2}\mathbf{\Phi}_t^\ast \left(\phi(x)-\phi(x')\right) \rangle_{\mathbb{R}_t}\\
		= \rho^2 \langle \left(\phi(x)-\phi(x')\right), \mathbf{\Phi}_t(\rho^2 \mathbf{I}_t + \mathbf{\Phi}_t^\ast \mathbf{\Phi}_t)^{-2}\mathbf{\Phi}_t^\ast \left(\phi(x)-\phi(x')\right) \rangle_\mathcal{H} =\langle \left(\phi(x)-\phi(x')\right),  \tilde{\mathbf{A}}_t \left(\phi(x)-\phi(x')\right) \rangle_\mathcal{H}.
	\end{multline*}

    We now prove the inequality \eqref{eq:hoelder_2norm}. The reverse triangle inequality yields $\lvert \rho \lVert \mathbf{h}_t(x)\rVert_2 - \rho \lVert \mathbf{h}_t(x')\rVert_2\rvert \le \rho \lVert \mathbf{h}_t(x) - \mathbf{h}_t(x')\rVert_2$. 
    
    Exploiting the fact that the spectrum of $\tilde{\mathbf{A}}_t$ is contained in $[0,0.25]$, we obtain
    \begin{align}
        \lvert \rho \lVert \mathbf{h}_t(x)\rVert_2 - \rho \lVert \mathbf{h}_t(x')\rVert_2\rvert & \le \rho \lVert \mathbf{h}_t(x) - \mathbf{h}_t(x')\rVert_2 = \sqrt{\langle \left(\phi(x)-\phi(x')\right),  \tilde{\mathbf{A}}_t \left(\phi(x)-\phi(x')\right) \rangle_\mathcal{H}} \notag\\
        &\le \sqrt{0.25 \langle \phi(x)-\phi(x'), \phi(x)-\phi(x') \rangle_\mathcal{H}} = 0.5\sqrt{k(x,x) + k(x',x') - 2 k(x,x')} \notag\\
        &\le \sqrt{\frac{L}{2}}\lVert x - x' \rVert^{\frac{p}{2}}, \label{eq:hoelder_helper}
    \end{align}
    where the last inequality \eqref{eq:hoelder_helper} follows from Hölder continuity of the kernel $k(\cdot,\cdot)$ via Assumption~\ref{asm:hoelder}, cf. \citep[ Lemma~12]{curi2020efficient}, \citep[Lemma~2.3]{lederer2023gaussian}. 

    Furthermore, inequality \eqref{eq:hoelder_infnorm} then follows from leveraging the reverse triangle inequality and 
    $\lVert \cdot \rVert_\infty \le \lVert \cdot \rVert_2$, i.e., 
    \begin{equation*}
        \lvert \rho \lVert \mathbf{h}_t(x)\rVert_\infty - \rho \lVert \mathbf{h}_t(x')\rVert_\infty\rvert \le \rho \lVert \mathbf{h}_t(x) - \mathbf{h}_t(x')\rVert_\infty \le \rho \lVert \mathbf{h}_t(x) - \mathbf{h}_t(x')\rVert_2 \le \sqrt{\frac{L}{2}}\lVert x - x' \rVert^{\frac{p}{2}}.
    \end{equation*}

    Similarly, inequality \eqref{eq:hoelder_2norm_weight} follows from leveraging the reverse triangle inequality and $\lVert \cdot \rVert_{\mathbf{C}} \le \sqrt{\varsigma_{\mathbf{C}}} \lVert \cdot \rVert_2$, i.e., 
    \begin{equation*}
        \lvert \rho \lVert \mathbf{h}_t(x)\rVert_{\mathbf{C}} - \rho \lVert \mathbf{h}_t(x')\rVert_{\mathbf{C}}\rvert \le \rho \lVert \mathbf{h}_t(x) - \mathbf{h}_t(x')\rVert_{\mathbf{C}} \le \sqrt{\varsigma_{\mathbf{C}}} \rho \lVert \mathbf{h}_t(x) - \mathbf{h}_t(x')\rVert_2 \le \sqrt{\varsigma_{\mathbf{C}}\frac{L}{2}}\lVert x - x' \rVert^{\frac{p}{2}},
    \end{equation*}
    concluding the proof.
\end{proof}

Notably, Lemma~\ref{lem:hoelder} ensures Hölder continuity with a smaller order of $p/2$. In case of stationary kernels, this can be improved to order $p$, see \citep[Corrolary~2.6]{lederer2023gaussian}.

For the proof of Lemma~\ref{lem:hoelder}, we have used the fact that the eigenvalues $\tilde{\lambda}$ of $\tilde{\mathbf{A}}_t = \mathbf{A}_t-\mathbf{A}_t^2$ are bounded by $0 \le \tilde{\lambda} \le 0.25$. Using structural knowledge of $\mathbf{A}_t$, it can be shown that the eigenvalues depend on $\rho$ with $\tilde{\lambda} \to 0$ for $\rho \to 0$: By definition, we have $\mathbf{A}_t = (\mathbf{I}_\mathcal{H} + \rho^{-2}\mathbf{\Phi}_t \mathbf{\Phi}_t^\ast)^{-1}$; see \eqref{eq:kervar_weightmatrix}. Let us denote the $i$-th (non-zero) eigenvalue of $\mathbf{\Phi}_t \mathbf{\Phi}_t^\ast$ as $\lambda_i^\phi$ with $i \in \Lambda^\phi$, where $\Lambda^\phi \subseteq \mathbb{N}_{t}$ is the set we use for indexing\footnote{Since $\mathbf{\Phi}_t \mathbf{\Phi}_t^\ast$ is a compact self-adjoint and finite-rank operator, its spectrum only contains zero and at most $t$ points in $\mathbb{R}_{\geq 0}$ corresponding to the non-zero eigenvalues. Hence to index them, we can use a (finite) subset of $\mathbb{N}_t$.}. Then, the corresponding $i$-th eigenvalue of $\mathbf{A}_t$ is $\lambda_i = (1+\rho^{-2}\lambda_i^\phi)^{-1}$, and the corresponding $i$-th eigenvalue of $\tilde{\mathbf{A}}_t$ is 
\begin{equation*}
    \tilde{\lambda}_i = \lambda_i - \lambda_i^2 = (1+\rho^{-2}\lambda_i^\phi)^{-1} - (1+\rho^{-2}\lambda_i^\phi)^{-2} = \rho^{-2}\lambda_i^\phi(1+\rho^{-2}\lambda_i^\phi)^{-2} = \rho^{2}\lambda_i^\phi(\rho^{2} + \lambda_i^\phi)^{-2}.
\end{equation*} 
For fixed $\lambda_i^\phi>0$, we have $\tilde{\lambda}_i \to 0$ for both $\rho \to 0$ and $\rho \to \infty$, and the maximum value of $0.25$ is attained for $\rho = \sqrt{\lambda_i^\phi}$. This dependence on $\rho$ (or, equivalently, on the noise level $\sigma_M$ when opting for the common choice $\rho = \sigma_M$ \cite{kanagawa2025gaussian}) is not reflected in Lemma~\ref{lem:hoelder}. Thus, the bound on the right-hand side of \eqref{eq:hoelder_2norm}--\eqref{eq:hoelder_2norm_weight} can be improved by scaling it with the data-dependent factor $b_t = \max_{i\in\Lambda^\phi}\sqrt{4\rho^{2}\lambda_i^\phi(\rho^{2} + \lambda_i^\phi)^{-2}} \in [0,1]$, with $b_t = 0$ for $\rho = 0$.

\subsection*{Proof of Theorem~\ref{thm:errorbounds_uniform}:} 
Leveraging Theorem~\ref{thm:errorbounds_nonuniform} and Lemma~\ref{lem:hoelder}, we proceed the proof of Theorem~\ref{thm:errorbounds_uniform} as follows.
\begin{proof}
    Choose any regularization parameter $\rho > 0$, input $x \in \mathcal{X}$, time/number of data points $t \in \mathbb{N}$, confidence parameter $\delta \in (0,1)$, and grid constants $\zeta_t,\,\zeta_{t}>0$. For all cases \textbf{(a)}--\textbf{(d)}, the assertions follow from Lemma~\ref{lem:errorbound_general} by deriving suitable functions ${\eta}_t^M$ satisfying the probabilistic uniform bound $ \mathbb{P}\left[ \forall x \in \mathcal{X},\,t\in\mathbb{N}:\,\lvert\mathbf{M}_t^{\top}\mathbf{h}_t(x)\rvert \le {\eta}^{M}_t(x)\right] \geq 1 - \delta$ from \eqref{eq:noisebound_uniform}.
    
    \paragraph{(a)}
    From Theorem~\ref{thm:errorbounds_nonuniform}, the probabilistic error bound \eqref{eq:errorbound_subG_nonuniform} with $\mathbf{C}_t=\sigma_M^2\mathbf{I}_t$ holds for any confidence parameter $\delta_1 \in (0,1)$ and fixed $x \in \mathcal{X}$ and $t \in \mathbb{N}$. Setting $\delta_1 = \delta_2/\pi_t$ for some $\delta_2 \in (0,1)$ and $\{\pi_t\}_{t=1}^{\infty}$ satisfying $\sum_{t=1}^{\infty}\pi_t^{-1} = 1$ (e.g., $\pi_t = \pi^2 t^2 / 6$), and then taking the union bound over all times \citep{Srinivas2009}, we obtain the time-uniform counterpart of \eqref{eq:errorbound_subG_nonuniform}, i.e. 
    \begin{equation} \label{eq:errorbound_subG_timeuniform}
        \forall x \in \mathcal{X}:~~ \prob{\forall t \in \mathbb{N}:~~ \lvert \mathbf{M}_t^{\top}\mathbf{h}_t(x) \rvert \le \sqrt{2\ln(2\pi_t/\delta_2)} \sigma_M \norm{\mathbf{h}_t(x)}_{2}} \ge 1-\delta_2.
    \end{equation}
    
    In order to lift this bound~\eqref{eq:errorbound_subG_timeuniform} to hold uniformly over all $x \in \mathcal{X}$, we leverage a discretization approach \cite{Srinivas2009,Lederer2019}: Consider a discretized version $\mathcal{X}_\zeta$ of the input domain $\mathcal{X}$ with $\lvert \mathcal{X}_\zeta \rvert$ grid points and grid width $\zeta = \zeta_t$ such that ${\max_{x \in \mathcal{X}} \min_{x'\in\mathcal{X}_\zeta} \lVert x - x' \rVert} \le \zeta$. The minimum number $\lvert \mathcal{X}_\zeta \rvert$ of grid points such that this condition is satisfied is given by the $\zeta$-covering number $Z(\zeta,\mathcal{X})$ of $\mathcal{X}$. 
    We aim to extend the time-uniform bound \eqref{eq:errorbound_subG_timeuniform} such that it holds uniformly over $x \in \overline{\mathcal{X}} \doteq\{x \in \mathbb{R}^{n_x}~|~\lVert x - [x]_\zeta\rVert \le \zeta,~ [x]_\zeta \in \mathcal{X}_\zeta\}$, with $[x]_\zeta \in \mathcal{X}_\zeta$ denoting the closest point in $\mathcal{X}_\zeta$ to $x$. Then, since $\mathcal{X} \subseteq  \overline{\mathcal{X}}$ by definition of $ \overline{\mathcal{X}}$, the desired bound also holds uniformly over all $x \in \mathcal{X}$.
    
    To this end, setting $\delta_2 = \delta_3/Z(\zeta,\mathcal{X})$ for some $\delta_3 \in (0,1)$ and applying \eqref{eq:errorbound_subG_timeuniform} and the union bound over the discretized domain $\mathcal{X}_\zeta$, we obtain
    \begin{equation} \label{eq:bound_discretized}
        \prob{\forall x \in \mathcal{X}_\zeta,~~ \forall t \in \mathbb{N}:~~ \lvert \mathbf{M}_t^{\top}\mathbf{h}_t(x) \rvert \le \sqrt{2\ln(2\pi_t Z(\zeta,\mathcal{X})/\delta_3)} \sigma_M \norm{\mathbf{h}_t(x)}_{2}} \ge 1-\delta_3.
    \end{equation}     
    
    Using the triangle inequality and the Hölder inequality, we have 
    \begin{align}
        \lvert \mathbf{M}_t^{\top}\mathbf{h}_t(x) \rvert &\le \lvert \mathbf{M}_t^{\top}\mathbf{h}_t([x]_\zeta) \rvert + \lvert \mathbf{M}_t^{\top}\left(\mathbf{h}_t(x) - \mathbf{h}_t([x]_\zeta)\right)\rvert \notag\\
        &\le \lvert \mathbf{M}_t^{\top}\mathbf{h}_t([x]_\zeta) \rvert +  \norm{\mathbf{M}_t^{\top} \left(\mathbf{K}_t+\rho^2\mathbf{I}_t\right)^{-1}}_1 \norm{\mathbf{k}_t(x) - \mathbf{k}_t([x]_\zeta)}_{\infty} \notag\\
        &\le \lvert \mathbf{M}_t^{\top}\mathbf{h}_t([x]_\zeta) \rvert +  L \zeta^p \norm{\mathbf{M}_t^{\top} \left(\mathbf{K}_t+\rho^2\mathbf{I}_t\right)^{-1}}_1, \label{eq:bound_helper}
    \end{align}
    where the last inequality \eqref{eq:bound_helper} follows from $\norm{\mathbf{k}_t(x) - \mathbf{k}_t([x]_\zeta)}_{\infty} \le L\norm{x - [x]_\zeta}^p \le L\zeta^p$, leveraging Hölder continuity of the kernel $k(\cdot,\cdot)$ via Assumption~\ref{asm:hoelder}.
    The term $\lvert \mathbf{M}_t^{\top}\mathbf{h}_t([x]_\zeta) \rvert$ in \eqref{eq:bound_helper} obeys the uniform bound~\eqref{eq:bound_discretized}, which we can express in terms of $\sigma_M\norm{\mathbf{h}_t(x)}_{2}$ by leveraging Lemma~\ref{lem:hoelder} and $\norm{x - [x]_\zeta}^p \le \zeta^p$, i.e.,
    \begin{align}
       \sigma_M\norm{\mathbf{h}_t([x]_\zeta)}_{2} &= \sigma_M\norm{\mathbf{h}_t(x)}_{2} + \frac{\sigma_M}{\rho}\left(\rho \norm{\mathbf{h}_t([x]_\zeta)}_{2} - \rho \norm{\mathbf{h}_t(x)}_{2}\right) \notag\\
        &\overset{\eqref{eq:hoelder_2norm}}{\le} \sigma_M\norm{\mathbf{h}_t(x)}_{2} + \frac{\sigma_M}{\rho} \sqrt{\frac{L}{2}} \norm{x - [x]_\zeta}^{\frac{p}{2}} \notag\\
        &\le \sigma_M \norm{\mathbf{h}_t(x)}_{2} + \frac{\sigma_M}{\rho} \sqrt{\frac{L}{2}} \zeta^{\frac{p}{2}}\label{eq:WKV_discrbound}.
    \end{align}

    Note that inequalities similar to \eqref{eq:WKV_discrbound} can likewise be derived for the norms $\norm{\mathbf{h}_t([x]_\zeta)}_{\infty}$ and $\norm{\mathbf{h}_t([x]_\zeta)}_{\mathbf{C}}$ with $\mathbf{C}\succeq \bm{0}$ using \eqref{eq:hoelder_infnorm} and \eqref{eq:hoelder_2norm_weight} from Lemma~\ref{lem:hoelder}.
    
    We now derive a probabilistic bound for the term $\norm{\mathbf{M}_t^{\top} \left(\mathbf{K}_t+\rho^2\mathbf{I}_t\right)^{-1}}_1$ in \eqref{eq:bound_helper}. First, let us denote $\left(\mathbf{K}_t+\rho^2\mathbf{I}_t\right)^{-1} \doteq \mat{\mathbf{a}_1 & \dots & \mathbf{a}_t}$. Then, we can write $\norm{\mathbf{M}_t^{\top} \left(\mathbf{K}_t+\rho^2\mathbf{I}_t\right)^{-1}}_1 = \sum_{i=1}^t \lvert \mathbf{M}_t^{\top} \mathbf{a}_i \rvert $. By \citep[Lemma~2]{ao2025stochastic}, it holds that 
    \begin{equation*}
        \prob{\lvert \mathbf{M}_t^{\top} \mathbf{a}_i \rvert \le \sqrt{2\ln(2/\delta_4)} \sigma_M\norm{\mathbf{a}_i}_{2}} \ge 1-\delta_4
    \end{equation*}
    for all $\delta_4 \in (0,1)$, cf.~Theorem~\ref{thm:errorbounds_nonuniform}(a). Hence, choosing $\delta_4 = \delta_5/(\pi_t t)$ for some $\delta_5 \in (0,1)$ and applying the union bound over the sum of $t$ elements and all $t \in \mathbb{N}$, we obtain
    \begin{equation} \label{eq:noisediscbnd}
        \prob{\forall t \in \mathbb{N}:~~ \sum_{i=1}^t \lvert \mathbf{M}_t^{\top} \mathbf{a}_i \rvert \le \sqrt{2\ln(2\pi_t t/\delta_5)} \sigma_M t} \ge 1-\delta_5,
    \end{equation}
    leveraging $\sum_{i=1}^t \norm{\mathbf{a}_i}_2 \le \sum_{i=1}^t 1 = t$ since the matrix $\left(\mathbf{K}_t+\rho^2\mathbf{I}_t\right)^{-1}$ is positive definite with singular values upper-bounded by $1$. 

    Lastly, using \eqref{eq:bound_discretized} and \eqref{eq:noisediscbnd} with confidence $\delta_3 = \delta _5 = \delta/2$ and applying the union bound, via \eqref{eq:bound_helper}, \eqref{eq:WKV_discrbound}, and \eqref{eq:hoelder_2norm_weight}, we obtain
    \begin{equation} \label{eq:SGnoisebound}
        \prob{\forall x \in \mathcal{X},~~ \forall t \in \mathbb{N}:~~ \lvert \mathbf{M}_t^{\top}\mathbf{h}_t(x) \rvert \le  \eta_t^{\mathcal{SG}}(x) = \beta^{\mathcal{SG}}_{t} \norm{\mathbf{h}_t(x)}_{2}  + \Delta^{\mathcal{SG}}_{t} ~} \ge 1-\delta,
    \end{equation}
    with parameters $\beta^{\mathcal{SG}}_{t} = \sqrt{2\ln(4\pi_t Z(\zeta_{t},\mathcal{X})/\delta)}$ and $\Delta^{\mathcal{SG}}_{t} = \beta^{\mathcal{SG}}_{t}\sigma \sqrt{L/({2\rho^2})}\zeta_{t}^{p/2} + \sqrt{2\ln(4\pi_t t/\delta)} \sigma t L \zeta_{t}^{p}$, as in Table~\ref{tab:bounds}. The assertion then follows from Lemma~\ref{lem:errorbound_general} by combining \eqref{eq:bound_deterministic} and the derived probabilistic uniform bound \eqref{eq:SGnoisebound}.

    For the remaining cases \textbf{(a)}--\textbf{(d)}, the assertions follow similar arguments presented here by leveraging \eqref{eq:bound_helper}, \eqref{eq:WKV_discrbound}, Hölder continuity of norms of $\mathbf{h}_t(x)$ via Lemma~\ref{lem:hoelder}, and the union bound; however, we need to derive new probabilistic bounds similar to \eqref{eq:bound_discretized} and \eqref{eq:noisediscbnd} for the given distribution class.
    
    \paragraph{(b)}     
    The Hoeffding-type bound~\eqref{eq:errorbound_bnd_hoeffding_uniform} follows directly from part \textbf{(a)} using the variance proxy $\sigma_M = \overline{m}$, yielding
    \begin{equation} \label{eq:BNDnoisebound_hoeffding}
        \prob{\forall x \in \mathcal{X},~~ \forall t \in \mathbb{N}:~~ \lvert \mathbf{M}_t^{\top}\mathbf{h}_t(x) \rvert \le \eta_{1,t}^{\mathrm{bnd}}(x) = \beta^\mathrm{bnd}_{1,t}\,\overline{m} \lVert \mathbf{h}_t(x) \rVert_2 + \Delta^{\mathrm{bnd}}_{1,t} ~} \ge 1-\delta,
    \end{equation}
    with parameters $\beta^\mathrm{bnd}_{1,t}$ and $\Delta^{\mathrm{bnd}}_{1,t}$ as in Table~\ref{tab:bounds}(b).
    
    For the Berstein-type bound~\eqref{eq:errorbound_bnd_bernstein_uniform}, we adjust the following steps from part \textbf{(a)}: The uniform counterpart of \eqref{eq:errorbound_bnd_bernstein} over time $t\in\mathbb{N}$ and discretized input domain $\mathcal{X}_\zeta$ results in (cf. \eqref{eq:bound_discretized})
    \begin{equation} \label{eq:bound_discretized_bernstein}
        \prob{\forall x \in \mathcal{X}_\zeta,~~ \forall t \in \mathbb{N}:~~ \lvert \mathbf{M}_t^{\top}\mathbf{h}_t(x) \rvert \le \frac{1}{3}\hat{\beta}_{t}~ \overline{m} \lVert\mathbf{h}_t(x)\rVert_\infty + \sqrt{\hat{\beta}_{t}}~ \overline{\sigma} \lVert \mathbf{h}_t(x) \rVert_2} \ge 1-\delta_3,
    \end{equation}
    with $\delta_3 \in (0,1)$ and $\hat{\beta}_{t} = 2\ln(2\pi_tZ(\zeta_t,\mathcal{X})/\delta_3)$.

    Regarding the term $\norm{\mathbf{M}_t^{\top} \left(\mathbf{K}_t+\rho^2\mathbf{I}_t\right)^{-1}}_1 = \sum_{i=1}^t \lvert \mathbf{M}_t^{\top} \mathbf{a}_i \rvert$ in \eqref{eq:bound_helper} with $\left(\mathbf{K}_t+\rho^2\mathbf{I}_t\right)^{-1} = \mat{\mathbf{a}_1 & \dots & \mathbf{a}_t}$, we apply Bernstein's inequality for the sum of bounded random variables \citep[Theorem~2.8.4]{vershynin2018high} to obtain 
    \begin{equation*}
        \prob{\lvert \mathbf{M}_t^{\top} \mathbf{a}_i \rvert \le \frac{2}{3}\ln(2/\delta_4)~ \overline{m} \lVert\mathbf{a}_i\rVert_\infty + \sqrt{2\ln(2/\delta_4)}~ \overline{\sigma} \lVert \mathbf{a}_i \rVert_2 } \ge 1-\delta_4
    \end{equation*}
    for all $\delta_4 \in (0,1)$, cf.~Theorem~\ref{thm:errorbounds_nonuniform}(b). Hence, choosing $\delta_4 = \delta_5/(\pi_t t)$ for some $\delta_5 \in (0,1)$ and applying the union bound over the sum of $t$ elements and all $t \in \mathbb{N}$, we obtain (cf. \eqref{eq:noisediscbnd})
    \begin{equation} \label{eq:noisediscbnd_bernstein}
        \prob{\forall t \in \mathbb{N}:~~ \sum_{i=1}^t \lvert \mathbf{M}_t^{\top} \mathbf{a}_i \rvert \le \frac{2}{3}\ln(2\pi_t t/\delta_5)~ \overline{m} t + \sqrt{2\ln(2\pi_t t/\delta_5)}~ \overline{\sigma}t} \ge 1-\delta_5,
    \end{equation}
    leveraging $\sum_{i=1}^t \norm{\mathbf{a}_i}_{\infty} \le \sum_{i=1}^t \norm{\mathbf{a}_i}_2 \le \sum_{i=1}^t 1 = t$ since $\left(\mathbf{K}_t+\rho^2\mathbf{I}_t\right)^{-1}$ is positive definite with singular values upper-bounded by $1$. 

    Lastly, using \eqref{eq:bound_discretized_bernstein} and \eqref{eq:noisediscbnd_bernstein} with $\delta_3 = \delta _5 = \delta/2$ and applying the union bound, via \eqref{eq:bound_helper}, \eqref{eq:WKV_discrbound}, and Lemma~\ref{lem:hoelder}, we obtain (cf. \eqref{eq:SGnoisebound})
    \begin{equation} \label{eq:BNDnoisebound_bernstein}
        \prob{\forall x \in \mathcal{X},~~ \forall t \in \mathbb{N}:~~ \lvert \mathbf{M}_t^{\top}\mathbf{h}_t(x) \rvert \le \eta_{2,t}^{\mathrm{bnd}}(x) = \beta^\mathrm{bnd}_{1,t}\,\overline{\sigma} \lVert \mathbf{h}_t(x) \rVert_2 + \beta_{2,t}^{\mathrm{bnd}} \,\overline{m} \lVert\mathbf{h}_t(x)\rVert_\infty + \Delta^{\mathrm{bnd}}_{2,t} ~} \ge 1-\delta,
    \end{equation}
    with parameters $\beta^\mathrm{bnd}_{1,t}$, $\beta^\mathrm{bnd}_{2,t}$, and $\Delta^{\mathrm{bnd}}_{2,t}$ as in Table~\ref{tab:bounds}(b). The assertion then follows from Lemma~\ref{lem:errorbound_general} by combining \eqref{eq:bound_deterministic} and the derived probabilistic uniform bound \eqref{eq:BNDnoisebound_bernstein}.

    Since both the Hoeffding-type bound~\eqref{eq:BNDnoisebound_hoeffding} and the Bernstein-type bound~\eqref{eq:BNDnoisebound_bernstein} are valid probabilistic uniform bounds \eqref{eq:noisebound_uniform}, we can employ ${\eta}^{M}_t(x) = \min_{j\in\{1,2\}}{\eta}^\mathrm{bnd}_{j,t}(x)$ to obtain the tightest bound.
    
    \paragraph{(c)} 
    We adjust the following steps from part \textbf{(a)}: The uniform counterpart of \eqref{eq:errorbound_subE_nonuniform} over time $t\in\mathbb{N}$ and discretized input domain $\mathcal{X}_\zeta$ results in (cf. \eqref{eq:bound_discretized})
    \begin{equation} \label{eq:bound_discretized_subE}
        \prob{\forall x \in \mathcal{X}_\zeta,~~ \forall t \in \mathbb{N}:~~ \lvert \mathbf{M}_t^{\top}\mathbf{h}_t(x) \rvert \le \max\left\{\hat{\beta}_{t}\, \alpha_M \lVert \mathbf{h}_t(x)\rVert_\infty,~ \sqrt{\hat{\beta}_{t}}\, \nu_M \lVert \mathbf{h}_t(x)\rVert_2\right\}} \ge 1-\delta_3,
    \end{equation}
    with $\delta_3 \in (0,1)$ and $\hat{\beta}_{t} = 2\ln(2\pi_tZ(\zeta_t,\mathcal{X})/\delta_3)$.

    Regarding the term $\norm{\mathbf{M}_t^{\top} \left(\mathbf{K}_t+\rho^2\mathbf{I}_t\right)^{-1}}_1 = \sum_{i=1}^t \lvert \mathbf{M}_t^{\top} \mathbf{a}_i \rvert$ in \eqref{eq:bound_helper} with $\left(\mathbf{K}_t+\rho^2\mathbf{I}_t\right)^{-1} = \mat{\mathbf{a}_1 & \dots & \mathbf{a}_t}$, we apply Bernstein's inequality for the sum of weighted sub-exponential random variables \citep[Theorem~2.8.2]{vershynin2018high}, \citep[Proposition~2.9]{wainwright2019high} to obtain 
    \begin{equation*}
        \prob{\lvert \mathbf{M}_t^{\top} \mathbf{a}_i \rvert \le  \max\left\{2\ln(2/\delta_4)\, \alpha_M \lVert \mathbf{a}_i\rVert_\infty,~ \sqrt{2\ln(2/\delta_4)}\, \nu_M \lVert \mathbf{a}_i\rVert_2\right\}} \ge 1-\delta_4
    \end{equation*}
    for all $\delta_4 \in (0,1)$, cf.~Theorem~\ref{thm:errorbounds_nonuniform}(c). Hence, choosing $\delta_4 = \delta_5/(\pi_t t)$ for some $\delta_5 \in (0,1)$ and applying the union bound over the sum of $t$ elements and all $t \in \mathbb{N}$, we obtain (cf. \eqref{eq:noisediscbnd})
    \begin{equation} \label{eq:noisediscbnd_subE}
        \prob{\forall t \in \mathbb{N}:~~ \sum_{i=1}^t \lvert \mathbf{M}_t^{\top} \mathbf{a}_i \rvert \le \max\left\{2\ln(2\pi_t t/\delta_5)\, \alpha_M,~ \sqrt{2\ln(2\pi_t t/\delta_5)}\, \nu_M\right\} t} \ge 1-\delta_5,
    \end{equation}
    leveraging $\norm{\mathbf{a}_i}_{\infty} \le \norm{\mathbf{a}_i}_2 \le 1 $ and $\sum_{i=1}^t 1 = t$. 

    Now, using \eqref{eq:bound_discretized_subE} and \eqref{eq:noisediscbnd_subE} with $\delta_3 = \delta _5 = \delta/2$ and applying the union bound, via \eqref{eq:bound_helper}, \eqref{eq:WKV_discrbound}, and Lemma~\ref{lem:hoelder}, we obtain $\prob{\forall x \in \mathcal{X},~~ \forall t \in \mathbb{N}:~~ \lvert \mathbf{M}_t^{\top}\mathbf{h}_t(x) \rvert \le \hat{\eta}_{t}^{\mathcal{SE}}(x)} \ge 1-\delta$
    with
    \begin{align*}
        \hat{\eta}_{t}^{\mathcal{SE}}(x) \doteq&\max\left\{\beta^{\mathcal{SE}}_{t}\, \alpha_M \left(\lVert \mathbf{h}_t(x)\rVert_\infty + \frac{1}{\rho}\sqrt{\frac{L}{2}}\zeta^{\frac{p}{2}}_t\right), \sqrt{\beta^{\mathcal{SE}}_{t}}\, \nu_M \left(\lVert \mathbf{h}_t(x)\rVert_2 + \frac{1}{\rho}\sqrt{\frac{L}{2}}\zeta^{\frac{p}{2}}_t\right)\right\} \\ &~~~+ \max\left\{2\ln(4\pi_t t/\delta)\, \alpha_M,~ \sqrt{2\ln(4\pi_t t/\delta)}\, \nu_M\right\} t L \zeta_t^p,
    \end{align*}
    where $\beta^{\mathcal{SE}}_{t} = 2\ln(4\pi_tZ(\zeta_t,\mathcal{X})/\delta)$ as in Table~\ref{tab:bounds}(c).
    To simplify the expression, we employ the upper-bound ${\eta}_{t}^{\mathcal{SE}}(x) \ge \hat{\eta}_{t}^{\mathcal{SE}}(x)$ from \eqref{eq:errorbound_subE_uniform} such that (cf. \eqref{eq:SGnoisebound})
    \begin{equation} \label{eq:SEnoisebound}
        \prob{\forall x \in \mathcal{X},~~ \forall t \in \mathbb{N}:~~ \lvert \mathbf{M}_t^{\top}\mathbf{h}_t(x) \rvert \le {\eta}^{\mathcal{SE}}_t(x) = \max\{\beta^{\mathcal{SE}}_{t}\, \alpha_M \lVert \mathbf{h}_t(x)\rVert_\infty ,~ \sqrt{\beta^{\mathcal{SE}}_{t}}\, \nu_M \norm{\mathbf{h}_t(x)}_{2}\} + \Delta^{\mathcal{SE}}_{t}} \ge 1-\delta,
    \end{equation}    
    with parameter $\Delta^{\mathcal{SE}}_{t}$ as in Table~\ref{tab:bounds}(c). The assertion then follows from Lemma~\ref{lem:errorbound_general} by combining \eqref{eq:bound_deterministic} and the derived probabilistic uniform bound \eqref{eq:SEnoisebound}.
    
    \paragraph{(d)} 
    We adjust the following steps from part \textbf{(a)}: The uniform counterpart of \eqref{eq:errorbound_L2_nonuniform} over time $t\in\mathbb{N}$ and discretized input domain $\mathcal{X}_\zeta$ results in (cf. \eqref{eq:bound_discretized})
    \begin{equation} \label{eq:bound_discretized_L2}
        \prob{\forall x \in \mathcal{X}_\zeta,~~ \forall t \in \mathbb{N}:~~ \lvert \mathbf{M}_t^{\top}\mathbf{h}_t(x) \rvert \le \sqrt{\pi_t Z(\zeta_t,\mathcal{X})/\delta_3}~ \sigma_M \lVert \mathbf{h}_t(x)\rVert_2} \ge 1-\delta_3,
    \end{equation}
    with $\delta_3 \in (0,1)$.

    Since the noise sequence $M_1,\,\dots,\,M_t$ is i.i.d., we have $\mathbf{M}_t^{\top}\mathbf{a} \in \mathcal{L}^2(\sigma^2_M\lVert \mathbf{a}\rVert^2_2)$ for all $\mathbf{a} \in \mathbb{R}^t$. Thus, regarding the term $\norm{\mathbf{M}_t^{\top} \left(\mathbf{K}_t+\rho^2\mathbf{I}_t\right)^{-1}}_1 = \sum_{i=1}^t \lvert \mathbf{M}_t^{\top} \mathbf{a}_i \rvert$ in \eqref{eq:bound_helper} with $\left(\mathbf{K}_t+\rho^2\mathbf{I}_t\right)^{-1} = \mat{\mathbf{a}_1 & \dots & \mathbf{a}_t}$, we apply the Chebychev inequality~\citep[Corollary~1.2.5]{vershynin2018high} to obtain 
    \begin{equation*}
        \prob{\lvert \mathbf{M}_t^{\top} \mathbf{a}_i \rvert \le \sqrt{1/\delta_4}~ \sigma_M \lVert \mathbf{a}_i \rVert_2 } \ge 1-\delta_4
    \end{equation*}
    for all $\delta_4 \in (0,1)$, cf.~Theorem~\ref{thm:errorbounds_nonuniform}(d). Hence, choosing $\delta_4 = \delta_5/(\pi_t t)$ for some $\delta_5 \in (0,1)$ and applying the union bound over the sum of $t$ elements and all $t \in \mathbb{N}$, we obtain (cf. \eqref{eq:noisediscbnd})
    \begin{equation} \label{eq:noisediscbnd_L2}
        \prob{\forall t \in \mathbb{N}:~~ \sum_{i=1}^t \lvert \mathbf{M}_t^{\top} \mathbf{a}_i \rvert \le \sqrt{\pi_t t/\delta_5}~ \sigma_Mt} \ge 1-\delta_5,
    \end{equation}
    leveraging $\sum_{i=1}^t \norm{\mathbf{a}_i}_2 \le \sum_{i=1}^t 1 = t$ since $\left(\mathbf{K}_t+\rho^2\mathbf{I}_t\right)^{-1}$ is positive definite with singular values upper-bounded by $1$. 

    Lastly, using \eqref{eq:bound_discretized_L2} and \eqref{eq:noisediscbnd_L2} with $\delta_3 = \delta _5 = \delta/2$ and applying the union bound, via \eqref{eq:bound_helper}, \eqref{eq:WKV_discrbound}, and Lemma~\ref{lem:hoelder}, we obtain (cf. \eqref{eq:SGnoisebound})
    \begin{equation} \label{eq:L2noisebound}
        \prob{\forall x \in \mathcal{X},~~ \forall t \in \mathbb{N}:~~ \lvert \mathbf{M}_t^{\top}\mathbf{h}_t(x) \rvert \le {\eta}^{\mathcal{L}2}_{t}(x) =  \beta^{\mathcal{L}2}_{t}~ \sigma_M \lVert \mathbf{h}_t(x)\rVert_2 + \Delta^{\mathcal{L}2}_{t}} \ge 1-\delta,
    \end{equation}
    with parameters $\beta^{\mathcal{L}2}_{t}$ and $\Delta^{\mathcal{L}2}_{t}$ as in Table~\ref{tab:bounds}(d). The assertion then follows from Lemma~\ref{lem:errorbound_general} by combining \eqref{eq:bound_deterministic} and the derived probabilistic uniform bound \eqref{eq:L2noisebound}.
\end{proof}

\subsection{Proof of Proposition~\ref{prop:cor_subG}} \label{app:proof_cor-subG}
\begin{proof}
    The proof follows the arguments presented in the proof of Theorem~\ref{thm:errorbounds_uniform}(a) (see Appendix~\ref{app:proof_uniform}), specifically by leveraging \eqref{eq:bound_helper}, \eqref{eq:WKV_discrbound}, Hölder continuity of the norm $\lVert \mathbf{h}_t(x)\rVert_{\mathbf{C}_t}$ via \eqref{eq:hoelder_2norm_weight} from Lemma~\ref{lem:hoelder}, and the union bound. However, we need to derive new probabilistic bounds similar to \eqref{eq:bound_discretized} and \eqref{eq:noisediscbnd} considering $\mathbf{M}_t \in \mathcal{SG}(\mathbf{C}_t)$ for some matrix variance proxy $\mathbf{C}_t\succeq\bm{0}$ instead of $\mathbf{C}_t = \mathbf{I}_t$, cf. Theorem~\ref{thm:errorbounds_nonuniform}(a).

    The uniform counterpart of \eqref{eq:errorbound_subG_nonuniform} over time $t\in\mathbb{N}$ and discretized input domain $\mathcal{X}_\zeta$ for general $\mathbf{C}_t\succeq\bm{0}$ results in (cf. \eqref{eq:bound_discretized})
    \begin{equation} \label{eq:bound_discretized_subG_cor}
        \prob{\forall x \in \mathcal{X}_\zeta,~~ \forall t \in \mathbb{N}:~~ \lvert \mathbf{M}_t^{\top}\mathbf{h}_t(x) \rvert \le \sqrt{2\ln(2\pi_t Z(\zeta,\mathcal{X})/\delta_3)} \norm{\mathbf{h}_t(x)}_{\mathbf{C}_t}} \ge 1-\delta_3,
    \end{equation}
    with $\delta_3 \in (0,1)$.

    Regarding the term $\norm{\mathbf{M}_t^{\top} \left(\mathbf{K}_t+\rho^2\mathbf{I}_t\right)^{-1}}_1 = \sum_{i=1}^t \lvert \mathbf{M}_t^{\top} \mathbf{a}_i \rvert$ in \eqref{eq:bound_helper} with $\left(\mathbf{K}_t+\rho^2\mathbf{I}_t\right)^{-1} = \mat{\mathbf{a}_1 & \dots & \mathbf{a}_t}$, we apply \citep[Lemma~2]{ao2025stochastic} to obtain 
    \begin{equation*}
        \prob{\lvert \mathbf{M}_t^{\top} \mathbf{a}_i \rvert \le \sqrt{2\ln(2/\delta_4)} \norm{\mathbf{a}_i}_{\mathbf{C}_t} } \ge 1-\delta_4
    \end{equation*}
    for all $\delta_4 \in (0,1)$, cf.~Theorem~\ref{thm:errorbounds_uniform}(a). Hence, choosing $\delta_4 = \delta_5/(\pi_t t)$ for some $\delta_5 \in (0,1)$ and applying the union bound over the sum of $t$ elements and all $t \in \mathbb{N}$, we obtain (cf. \eqref{eq:noisediscbnd})
    \begin{equation} \label{eq:noisediscbnd_subG_cor}
        \prob{\forall t \in \mathbb{N}:~~ \sum_{i=1}^t \lvert \mathbf{M}_t^{\top} \mathbf{a}_i \rvert \le \sqrt{2\ln(2\pi_t t/\delta_5)} \sqrt{\varsigma_{\mathbf{C}_t}} \,t} \ge 1-\delta_5,
    \end{equation}
    leveraging $\sum_{i=1}^t \norm{\mathbf{a}_i}_{\mathbf{C}_t} \le \sum_{i=1}^t \sqrt{\varsigma_{\mathbf{C}_t}} \norm{\mathbf{a}_i}_{2} \le \sum_{i=1}^t \sqrt{\varsigma_{\mathbf{C}_t}} = \sqrt{\varsigma_{\mathbf{C}_t}}\, t$, where $\varsigma_{\mathbf{C}_t}$ is the maximum singular value of $\mathbf{C}_t$ and since $\left(\mathbf{K}_t+\rho^2\mathbf{I}_t\right)^{-1}$ is positive definite with singular values upper-bounded by $1$. 

    Lastly, using \eqref{eq:bound_discretized_subG_cor} and \eqref{eq:noisediscbnd_L2} with $\delta_3 = \delta _5 = \delta/2$ and applying the union bound, via \eqref{eq:bound_helper}, \eqref{eq:WKV_discrbound}, and Lemma~\ref{lem:hoelder}, we obtain (cf. \eqref{eq:SGnoisebound})
    \begin{equation} \label{eq:SGnoisebound_cor}
        \prob{\forall x \in \mathcal{X},~~ \forall t \in \mathbb{N}:~~ \lvert \mathbf{M}_t^{\top}\mathbf{h}_t(x) \rvert \le \overline{\eta}^{\mathcal{SG}}_t(x) \doteq  \beta^{\mathcal{SG}}_{t}\, \norm{\mathbf{h}_t(x)}_{\mathbf{C}_t} + \Delta_{t}(\sqrt{\varsigma_{\mathbf{C}_t}})} \ge 1-\delta,
    \end{equation}
    with parameters $\beta^{\mathcal{SG}}_{t}$ and $\Delta_{t}(\cdot)$ as in Table~\ref{tab:bounds}. The assertion then follows from Lemma~\ref{lem:errorbound_general} by combining \eqref{eq:bound_deterministic} and the derived probabilistic uniform bound \eqref{eq:L2noisebound}.
\end{proof}

\subsection{Proof of Corollary~\ref{cor:cond_subG}} \label{app:proof_cond-subG}
\begin{proof}
    The claim follows from the proof of Theorem~\ref{thm:errorbounds_uniform}(a) (see Appendix~\ref{app:proof_uniform}) by showing that the noise term $\mathbf{M}_t^{\top}\mathbf{h}_t(x)$ satisfies $\mathbf{M}_t^{\top}\mathbf{h}_t(x) \in \mathcal{SG}(\sigma_M^2\lVert\mathbf{h}_t(x)\rVert_2^2)$ in case of conditionally sub-Gaussian $M_1,\dots,M_t$:

    Choose any $\rho > 0$, $x \in \mathcal{X}$, and $t \in \mathbb{N}$. By denoting $\mathbf{h}_t(x) = [h_1(x), \dots, h_t(x)]^{\top}$, we can write $\mathbf{M}_t^{\top}\mathbf{h}_t(x) = \sum_{i=1}^t h_i(x) M_i$. For any $a\in \mathbb{R}/\{0\}$, the $\sigma$-algebra of $aM_i$ and $M_i$ are the same, whereas for $a = 0$, the $\sigma$-algebra of $0 \cdot M_i$ is trivially contained in the $\sigma$-algebra of $M_i$ for any $i \in \mathbb{N}_t$. With $M_1 \in \mathcal{SG}(\sigma_M^2)$, we have $h_1(x)M_1 \in \mathcal{SG}(\sigma_M^2h_1(x)^2)$ due to \citep[Theorem~1a]{ao2025stochastic}. Furthermore, since $M_2$ is conditionally $\sigma_M$-sub-Gaussian conditioned on the $\sigma$-algebra of $M_1$, we have $\sum_{i=1}^2 h_i(x) M_i \in \mathcal{SG}(\sigma_M^2(h_1(x)^2+h_2(x)^2))$ due to \citep[Theorem~1b]{ao2025stochastic}. After recursive application of \citep[Theorem~1]{ao2025stochastic} on all partial sums of $\mathbf{M}_{j}^{\top}\mathbf{h}_{j}(x) = \sum_{i=1}^{j} h_i(x) M_i$ with $j\in\mathbb{N}_t$, we obtain $\mathbf{M}_t^{\top}\mathbf{h}_t(x) \in \mathcal{SG}(\sigma_M^2\lVert\mathbf{h}_t(x)\rVert_2^2)$. The claim then follows from the proof of Theorem~\ref{thm:errorbounds_uniform}(a).
\end{proof}

\subsection{Proof of Proposition~\ref{prop:errorbound_HT}} \label{app:proof_heavytail}
\begin{proof}
    Let us first decompose the truncated output data $\hat{\mathbf{y}}_t$ into noise-free component $\mathbf{f}_t$ and truncated noise component $\hat{\mathbf{M}}_t(\omega)$, i.e., $\hat{\mathbf{y}}_t = \mathbf{f}_t + \hat{\mathbf{M}}_t(\omega)$. By the triangle inequality, the regression error $\lvert f(x) - \hat{\mu}_t(x) \rvert$ is bounded by
    \begin{align}
         \lvert f(x) - \hat{\mu}_t(x) \rvert =  \lvert f(x)  - \left(\mathbf{f}_t + \hat{\mathbf{M}}_t(\omega)\right)^{\top}\mathbf{h}_t(x) \rvert &\le \lvert f(x) - \mathbf{f}^{\top}\mathbf{h}_t(x) \rvert + \lvert \hat{\mathbf{M}}_t(\omega)^{\top}\mathbf{h}_t(x)\rvert \notag \\
         &\overset{\eqref{eq:bound_deterministic}}{\le} B \tilde{\sigma}_t(x) + \lvert \hat{\mathbf{M}}_t(\omega)^{\top}\mathbf{h}_t(x)\rvert. \label{eq:l2bound_step1}
    \end{align}
    Moreover, applying the triangle inequality on the noise term $\lvert \hat{\mathbf{M}}_t(\omega)^{\top}\mathbf{h}_t(x)\rvert =  \lvert (\hat{\mathbf{M}}_t(\omega) - \mean{\hat{\mathbf{M}}_t}+ \mean{\hat{\mathbf{M}}_t})^{\top}\mathbf{h}_t(x)\rvert$, we obtain
    \begin{align}
         \lvert \hat{\mathbf{M}}_t(\omega)^{\top}\mathbf{h}_t(x)\rvert \le \lvert (\hat{\mathbf{M}}_t(\omega) - \mean{\hat{\mathbf{M}}_t})^{\top}\mathbf{h}_t(x)\rvert + \lvert\mean{\hat{\mathbf{M}}_t}^{\top}\mathbf{h}_t(x)\rvert. \label{eq:l2bound_step2}
    \end{align}
    Since $\hat{\mathbf{M}}_t - \mean{\hat{\mathbf{M}}_t} = \hat{\mathbf{Y}}_t - \mean{\hat{\mathbf{Y}}_t}$ is zero-mean and $\lvert\hat{Y}_i-\mean{\hat{y}_i}\rvert \le 2 b_t$ holds for all $i\in\mathbb{N}_t$ by design, we have $\hat{\mathbf{M}}_t - \mean{\hat{\mathbf{M}}_t} \in \mathcal{SG}(4b_t^2)$. Thus, Theorem~\ref{thm:errorbounds_uniform}(a) yields
    \begin{equation} 
        \prob{\forall x \in \mathcal{X},~~ \forall t \in \mathbb{N}:~~ \lvert (\hat{\mathbf{M}}_t - \mean{\hat{\mathbf{M}}_t})^{\top}\mathbf{h}_t(x) \rvert \le  \sqrt{2\ln(4\pi_t Z(\zeta_{t},\mathcal{X})/\delta)} 2b_t\norm{\mathbf{h}_t(x)}_{2}  + \Delta^{\mathrm{HT}}_{t} ~} \ge 1-\delta. \label{eq:l2bound_step3}
    \end{equation}
    Furthermore, note that $\mean{\hat{M}_i} = \mean{\hat{Y}_i} - f(x_i) = \mean{Y_i \mathbbm{1}_{\lvert Y_i\rvert \le b_i}} - f(x_i)  = - \mean{Y_i \mathbbm{1}_{\lvert Y_i\rvert > b_i}}$ for all $i\in\mathbb{N}_t$ \cite{Chowdhury2019bayesian}. Thus, applying the Cauchy-Schwarz inequality  yields
    \begin{align}
        \lvert\mean{\hat{\mathbf{M}}_t}^{\top}\mathbf{h}_t(x)\rvert \le \lVert \mean{\hat{\mathbf{M}}_t} \rVert_2 ~ \lVert \mathbf{h}_t(x)\rVert_2 &= \sqrt{\sum_{i=1}^t\mean{Y_i \mathbbm{1}_{\lvert Y_i\rvert > b_i}}^2}~ \lVert \mathbf{h}_t(x)\rVert_2 \notag \\
        & \le \sqrt{\sum_{i=1}^t\frac{1}{b_i^{2a}}\mean{\lvert Y_i  \rvert^{1+a}}^2}~ \lVert \mathbf{h}_t(x)\rVert_2 \le \ol{v}\sqrt{\sum_{i=1}^t\frac{1}{b_i^{2a}}}~ \lVert \mathbf{h}_t(x)\rVert_2\label{eq:l2bound_step4},
    \end{align}
    where \eqref{eq:l2bound_step4} follows from $\lvert Y_i\rvert\mathbbm{1}_{\lvert Y_i\rvert > b_i} \le \lvert Y_i\rvert \frac{\lvert Y_i\rvert^a}{b_i^a}$ for all $a > 0$, and from $\mean{\lvert Y_i\rvert^{1+a}} \le \ol{v}$. Using $b_t = \ol{v}^{\frac{1}{1+a}} t^{\frac{1}{2(1+a)}}$, we further obtain (cf. \citep[Equation~12]{Chowdhury2019bayesian})
    \begin{align}
       \lvert\mean{\hat{\mathbf{M}}_t}^{\top}\mathbf{h}_t(x)\rvert &\le \ol{v}^{1-\frac{a}{1+a}}\sqrt{\sum_{i=1}^t i^{-\frac{a}{1+a}}}~ \lVert \mathbf{h}_t(x)\rVert_2 \le \ol{v}^{\frac{1}{1+a}}\sqrt{\int_{0}^t \tau^{-\frac{a}{1+a}}\mathrm{d}\tau}~ \lVert \mathbf{h}_t(x)\rVert_2 = \sqrt{1+a} ~b_t~ \lVert \mathbf{h}_t(x)\rVert_2\label{eq:l2bound_step5}
    \end{align}    
    The assertion then follows by combining \eqref{eq:l2bound_step1}--\eqref{eq:l2bound_step3} and \eqref{eq:l2bound_step5}.
\end{proof}

\section{Parameter estimation error bound} \label{app:parambounds}
Consider the case in which a finite-dimensional representation $f(x) = \bm{\theta}^\top \bm{\phi}(x)$ of the function $f$ exists, with unknown parameter vector $\bm{\theta} \in \mathbb{R}^{n_\phi}$ and known vector of basis functions $\bm{\phi}: \mathcal{X} \to \mathbb{R}^{n_\phi}$. For this special case, the corresponding kernel function \eqref{eq:k_phi} results in $k(x,x') = \bm{\phi}(x)^{\top}\bm{\phi}(x')$, whereas $\bm{\Phi}_t$ from \eqref{eq:phi_operator} and $\mathbf{A}_t$, $\tilde{\mathbf{A}}_t$ from \eqref{eq:kervar_operators} result in the matrices
\begin{align*}
    \bm{\Phi}_t &= \left[\bm{\phi}(x_1) \, \ldots \, \bm{\phi}(x_t)\right] \in \mathbb{R}^{n_\phi \times t},\\
    \mathbf{A}_t &= \mathbf{I}_{n_\phi} - \mathbf{\Phi}_t  (\rho^2 \mathbf{I}_{t} + \mathbf{\Phi}_t ^\top\mathbf{\Phi}_t)^{-1}\mathbf{\Phi}_t^\top = \rho^{2}(\rho^{2}\mathbf{I}_{n_{\phi}} + \bm{\Phi}_t \bm{\Phi}_t^\top)^{-1} = (\mathbf{I}_{n_{\phi}} + \rho^{-2}\bm{\Phi}_t \bm{\Phi}_t^\top)^{-1}  \in \mathbb{R}^{n_\phi \times n_\phi},\\
    \tilde{\mathbf{A}}_t &= \mathbf{A}_t - \mathbf{A}_t^2 = \rho^2 \mathbf{\Phi}_t (\rho^2 \mathbf{I}_t + \mathbf{\Phi}_t^\top \mathbf{\Phi}_t)^{-2}\mathbf{\Phi}_t^{\top} = \rho^2 (\rho^2 \mathbf{I}_{n_\phi} + \mathbf{\Phi}_t \mathbf{\Phi}_t^\top)^{-1} \mathbf{\Phi}_t \mathbf{\Phi}_t^\top (\rho^2 \mathbf{I}_{n_\phi} + \mathbf{\Phi}_t \mathbf{\Phi}_t^\top)^{-1} \in \mathbb{R}^{n_\phi \times n_\phi},
\end{align*}
based on the data \eqref{eq:dataset} at time $t$. This allows us to express the kernel-based estimate \eqref{eq:kermean} as $\mu_t(x) = \hat{\bm{\theta}}_t^\top \bm{\phi}(x)$, with the parameter estimate $\hat{\bm{\theta}}_t = \bm{\Phi}_t(\bm{\Phi}_t^\top \bm{\Phi}_t+\rho^2 I_t)^{-1}\mathbf{y}_t \in \mathbb{R}^{n_\phi}$. For this finite-dimensional case, beyond probabilistic uniform error bounds of the form~\eqref{eq:errorbound_uniform}, one might be interested in parameter estimation error bounds of the form
\begin{equation}\label{eq:errorbound_params}
          \mathbb{P}\left[ \forall x \in \mathcal{X},\,t\in\mathbb{N}:\,\lVert\bm{\theta} - \hat{\bm{\theta}}_t\rVert_2 \le {\eta}^{\theta}_t\right] \geq 1 - \delta,
\end{equation}
with an error bound ${\eta}^{\theta}_t \in \mathbb{R}_{\ge0}$ and confidence $\delta \in (0,1)$. The following result yields such parameter estimation error bounds under non-Gaussian distributions of the noise $M_i$, $i \in \mathbb{N}_{t}$ (see Section~\ref{sec:randvar}), as corresponding to Theorem~\ref{thm:errorbounds_uniform}.
\begin{theorem} \label{thm:errorbounds_params}
    Consider data \eqref{eq:dataset} generated via \eqref{eq:system} under Assumption~\ref{asm:system} and $f(x) = \bm{\theta}^\top \bm{\phi}(x)$ with realizations $M_1(\omega), \dots,  M_t(\omega)$ of i.i.d. noise $M_i$, $i \in \mathbb{N}_t$ and some $\{\pi_t\}_{t=1}^{\infty}$ that satisfies $\sum_{t=1}^{\infty}\pi_t^{-1} = 1$ (e.g., $\pi_t = \pi^2 t^2 / 6$).\\     
    Then, for every $\rho > 0$ and $\delta \in (0,1)$, the parameter estimation error $\lVert\bm{\theta} - \hat{\bm{\theta}}_t\rVert_2$ is bounded by~\eqref{eq:errorbound_params} with 
    $${\eta}^\theta_t = B \lVert \mathbf{A}_t \rVert_2 + \frac{\sqrt{n_\phi}}{\rho} \gamma^M_t(\delta) \sqrt{\lVert \tilde{\mathbf{A}}_t \rVert_2},$$ 
    where $\gamma^{M}_t(\delta)$ defined as follows:
    
    \paragraph{(a)} If $M_i \in \mathcal{SG}(\sigma_M^2)$, then $\gamma^M_t(\delta) = \gamma^{\mathcal{SG}}_t(\delta) \doteq  \sqrt{2\ln(2\pi_t n_\phi/\delta)}~\sigma_M$.
   
    \paragraph{(b)} If $M_i \in \mathcal{L}^{\infty}(\overline{m})$ and $\var{M_i} \le \overline{\sigma}^2$, then $\gamma^M_t(\delta) = \gamma^{\mathrm{bnd}}_t(\delta) \doteq \frac{2}{3}\ln(2\pi_t n_\phi/\delta)~ \overline{m} + \sqrt{2\ln(2\pi_t n_\phi/\delta)}~ \overline{\sigma}$ .
   
    \paragraph{(c)} If $M_i \in \mathcal{SE}(\nu_M^2,\alpha_M)$, then $\gamma^{M}_t(\delta) =  \gamma^{\mathcal{SE}}_t(\delta)\doteq \max\left\{2\ln(2\pi_t n_\phi/\delta)\, \alpha_M,~ \sqrt{2\ln(2\pi_t n_\phi/\delta)}~ \nu_M\right\}$.
    
    \paragraph{(d)} If $M_i \in \mathcal{L}^2(\sigma_M^2)$, then $\gamma^{M}_t(\delta) = \gamma^{\mathcal{L}2}_{t}(\delta) \doteq \sqrt{\pi_t n_\phi/\delta}~ \sigma_M$.
\end{theorem}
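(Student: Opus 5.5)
The plan is to decompose the parameter error exactly into a deterministic bias term and a linear noise term. Each coordinate of the noise term is then bounded with the scalar concentration inequalities already used in Theorem~\ref{thm:errorbounds_nonuniform}. No discretization is needed, since the target quantity $\lVert\bm{\theta}-\hat{\bm{\theta}}_t\rVert_2$ does not depend on $x$; uniformity over $x$ in \eqref{eq:errorbound_params} is therefore automatic.

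\textbf{Step 1: decomposition.} Write $\mathbf{y}_t = \bm{\Phi}_t^\top\bm{\theta} + \mathbf{M}_t$ and insert this into $\hat{\bm{\theta}}_t = \bm{\Phi}_t(\bm{\Phi}_t^\top\bm{\Phi}_t+\rho^2\mathbf{I}_t)^{-1}\mathbf{y}_t$. Identity ii) from the proof of Lemma~\ref{lem:hoelder}, in its matrix form, then gives
\begin{equation*}
\bm{\theta}-\hat{\bm{\theta}}_t = \mathbf{A}_t\bm{\theta} - \mathbf{v}_t, \qquad \mathbf{v}_t \doteq \bm{\Phi}_t(\rho^2\mathbf{I}_t+\mathbf{K}_t)^{-1}\mathbf{M}_t .
\end{equation*}
For the bias term, $\lVert\mathbf{A}_t\bm{\theta}\rVert_2 \le \lVert\mathbf{A}_t\rVert_2\lVert\bm{\theta}\rVert_2$.

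\textbf{Step 2: bounding $\lVert\bm{\theta}\rVert_2$ by $B$.} In the finite-dimensional feature case, $\norm{f}_{\mathcal{H}_k}$ equals the minimal Euclidean norm over all representing parameter vectors. So $\lVert\bm{\theta}\rVert_2 = \norm{f}_{\mathcal{H}_k}\le B$ holds when the basis functions are linearly independent, or when $\bm{\theta}$ is taken to be the minimal-norm representative. I expect this to be the only delicate point: it is an implicit identifiability assumption that the argument must state explicitly.

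\textbf{Step 3: coordinatewise noise bound.} For the noise term, write the $j$-th component as $v_j = \mathbf{M}_t^\top\mathbf{g}_j$ with $\mathbf{g}_j \doteq (\rho^2\mathbf{I}_t+\mathbf{K}_t)^{-1}\bm{\Phi}_t^\top\mathbf{e}_j$. This mirrors $\mathbf{M}_t^\top\mathbf{h}_t(x)$, with $\bm{\phi}(x)$ replaced by $\mathbf{e}_j$. By the computation leading to \eqref{eq:hvar_reform},
\begin{equation*}
\lVert\mathbf{g}_j\rVert_2^2 = \rho^{-2}\,\mathbf{e}_j^\top\tilde{\mathbf{A}}_t\mathbf{e}_j \le \rho^{-2}\lVert\tilde{\mathbf{A}}_t\rVert_2, \qquad \lVert\mathbf{g}_j\rVert_\infty\le\lVert\mathbf{g}_j\rVert_2 .
\end{equation*}
Apply the nonuniform bounds of Theorem~\ref{thm:errorbounds_nonuniform}(a)--(d) to each $\mathbf{M}_t^\top\mathbf{g}_j$ at confidence level $\delta/(\pi_t n_\phi)$:
\begin{itemize}
\item in (a), the sub-Gaussian tail with $\mathbf{C}_t=\sigma_M^2\mathbf{I}_t$;
\item in (b), the Bernstein form \eqref{eq:errorbound_bnd_bernstein} before the min, with both norms bounded by $\lVert\mathbf{g}_j\rVert_2$;
\item in (c), the max form \eqref{eq:errorbound_subE_nonuniform};
\item in (d), Chebyshev.
\end{itemize}
In each case the result has the form $\lvert v_j\rvert\le\gamma^M_t(\delta)\,\rho^{-1}\sqrt{\lVert\tilde{\mathbf{A}}_t\rVert_2}$, and $\ln(2\pi_t n_\phi/\delta)$, respectively $\pi_t n_\phi/\delta$, reproduces exactly the $\gamma^M_t$ listed in the theorem.

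\textbf{Step 4: union bound and assembly.} A union bound over $j\in\mathbb{N}_{n_\phi}$ and over $t\in\mathbb{N}$, using $\sum_t\pi_t^{-1}=1$, gives simultaneous validity with probability at least $1-\delta$. Then $\lVert\mathbf{v}_t\rVert_2\le\sqrt{n_\phi}\max_j\lvert v_j\rvert$, and the triangle inequality combines this with Steps 1--2 to yield $\eta^\theta_t$.
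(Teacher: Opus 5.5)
Your proposal is correct and follows essentially the same route as the paper. Both arguments use the split $\bm{\theta}-\hat{\bm{\theta}}_t=\mathbf{A}_t\bm{\theta}-\mathbf{H}_t\mathbf{M}_t$, the row bound $\lVert\mathbf{b}_j\rVert_2\le\rho^{-1}\sqrt{\lVert\tilde{\mathbf{A}}_t\rVert_2}$ (your $\mathbf{g}_j$), class-specific concentration at level $\delta/(\pi_t n_\phi)$, a union bound over $j$ and $t$, and $\lVert\cdot\rVert_2\le\sqrt{n_\phi}\lVert\cdot\rVert_\infty$.

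Your caveat in Step 2 is well taken. The paper simply writes $\lVert\bm{\theta}\rVert_2=\lVert\bm{\theta}^\top\bm{\phi}\rVert_{\mathcal{H}_k}\le B$, which holds only for linearly independent basis functions or the minimal-norm representer. Otherwise, any component of $\bm{\theta}$ annihilated by all $\bm{\phi}(x)$ is fixed by $\mathbf{A}_t$, is never estimated, and is not controlled by $B$.
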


\begin{proof}
    Consider any $\rho > 0$. First, let us decompose the available output data $\mathbf{y}_t$ (generated via \eqref{eq:system}) into noise-free component $\mathbf{f}_t$ and noise component $\mathbf{M}_t(\omega)$, i.e., $\mathbf{y}_t = \mathbf{f}_t + \mathbf{M}_t(\omega)$. By the triangle inequality, the parameter estimation error is bounded by
    \begin{align}
         \lVert\bm{\theta} - \hat{\bm{\theta}}_t\rVert_2 &=  \lVert \bm{\theta}  - \mathbf{H}_t\left(\mathbf{f}_t + \mathbf{M}_t(\omega)\right) \rVert_2 
         \le \lVert \bm{\theta}  - \mathbf{H}_t\mathbf{f}_t \rVert_2 + \lVert \mathbf{H}_t\mathbf{M}_t(\omega)\rVert_2, \label{eq:bound_triangle_param}
    \end{align}
    with $\mathbf{H}_t \doteq \bm{\Phi}_t(\rho^2 I_t + \bm{\Phi}_t^\top \bm{\Phi}_t)^{-1} \in \mathbb{R}^{n_\phi \times t}$. Note that we can write $\mathbf{A}_t = \mathbf{I}_{n_\phi} - \mathbf{H}_t\bm{\Phi}_t^\top$ and $\tilde{\mathbf{A}}_t = \rho^2\mathbf{H}_t \mathbf{H}_t^\top$. 
    In order to prove the assertion, we will derive individual bounds for the two terms on the right-hand side of \eqref{eq:bound_triangle_param}. 
    
    Using $\mathbf{f}_t = \bm{\Phi}_t^\top \bm{\theta}$, the Cauchy-Schwarz inequality, and $\lVert \bm{\theta}\rVert_2 = \lVert \bm{\theta}^\top\bm{\phi}(x)\rVert_{\mathcal{H}_k} \le B$ via Assumption~\ref{asm:system}, we obtain
    \begin{align}
        \lVert \bm{\theta}  - \mathbf{H}_t\mathbf{f}_t \rVert_2 = \lVert\left(\mathbf{I}_{n_\phi} - \mathbf{H}_t\bm{\Phi}_t^\top \right) \bm{\theta}\rVert_2 \le \lVert \left(\mathbf{I}_{n_\phi} - \mathbf{H}_t\bm{\Phi}_t^\top \right)\rVert_2 ~ \lVert \bm{\theta} \rVert_2 \le B \lVert \mathbf{A}_t\rVert_2.\label{eq:bound_rkhs_param}
    \end{align}
    To analyze the noise term $\lVert \mathbf{H}_t\mathbf{M}_t(\omega)\rVert_2$, let us denote $\mathbf{H}_t \doteq \mat{\mathbf{b}_1,\dots,\mathbf{b}_{n_\phi}}^\top$, and note that $\lVert\mathbf{b}_j\rVert_2 \le \sqrt{\lVert \mathbf{H}_t \mathbf{H}_t^\top \rVert_2} = (1/\rho) \sqrt{ \lVert \tilde{\mathbf{A}}_t \rVert_2}$ by the definition of the spectral norm. By the properties of the $2$-norm and $\infty$-norm, we can then write 
    \begin{align}
        \lVert \mathbf{H}_t\mathbf{M}_t(\omega)\rVert_2 \le \sqrt{n_\phi}\lVert\mathbf{H}_t\mathbf{M}_t(\omega)\rVert_\infty = \sqrt{n_\phi} \max_{j\in\mathbb{N}_{n_\phi}}\{\lvert\mathbf{b}_j^\top \mathbf{M}_t(\omega)\rvert\} .\label{eq:bound_noise_param_helper}
    \end{align}
    Thus, the assertion follows from \eqref{eq:bound_triangle_param}, \eqref{eq:bound_rkhs_param}, and \eqref{eq:bound_noise_param_helper} in conjunction with a probabilistic bound
    \begin{align}
        \mathbb{P}\left[ \forall x \in \mathcal{X},\,t\in\mathbb{N},\,j\in\mathbb{N}_{n_\phi}:\,\lvert\mathbf{b}_j^\top \mathbf{M}_t\rvert \le \frac{\gamma^M_t(\delta)}{\rho}  \sqrt{\lVert \tilde{\mathbf{A}}_t \rVert_2}\right] \geq 1 - \delta,\label{eq:bound_noise_param}
    \end{align}
    for some $\delta \in (0,1)$ and a scaling factor $\gamma^M_t(\delta)$ that depends on the noise class. In the remainder of the proof, we will derive appropriate $\gamma^M_t(\delta)$ considering the noise classes from (a)--(d). Note that uniformity of \eqref{eq:bound_noise_param} over all $x \in \mathcal{X}$ holds automatically since all terms are independent of the input $x$ (cf. discussion in Appendix~\ref{app:compare_errorbounds}).
    
    \paragraph{(a)} If $M_i \in \mathcal{SG}(\sigma_M^2)$ for all $i \in \mathbb{N}_t$, then, by \citep[Lemma~2]{ao2025stochastic}, it holds that 
    \begin{equation*}
        \prob{\lvert\mathbf{b}_j^\top \mathbf{M}_t\rvert \le \sqrt{2\ln(2/\tilde{\delta})} \sigma_M \lVert\mathbf{b}_j\rVert_2} \ge 1-\tilde{\delta}
    \end{equation*}
    for all $\tilde{\delta} \in (0,1)$, cf.~Theorem~\ref{thm:errorbounds_nonuniform}(a). Hence, leveraging $\lVert\mathbf{b}_j\rVert_2 \le (1/\rho) \sqrt{ \lVert \tilde{\mathbf{A}}_t \rVert_2}$, choosing $\tilde{\delta} = \delta/(\pi_t n_\phi)$, and applying the union bound over all $j\in\mathbb{N}_{n_\phi}$ and all $t \in \mathbb{N}$, we obtain \eqref{eq:bound_noise_param} with $\gamma^M_t(\delta) = \gamma^{\mathcal{SG}}_t(\delta) = \sqrt{2\ln(2\pi_t n_\phi/\delta)}~ \sigma_M$.
    
    \paragraph{(b)} If $M_i \in \mathcal{L}^{\infty}(\overline{m})$ and $\var{M_i} \le \overline{\sigma}^2$ for all $i \in \mathbb{N}_t$, then, by Bernstein's inequality for the sum of bounded random variables \citep[Theorem~2.8.4]{vershynin2018high}, it holds that
    \begin{equation*}
        \prob{\lvert\mathbf{b}_j^\top \mathbf{M}_t\rvert \le \frac{2}{3}\ln(2/\tilde{\delta})~ \overline{m} \lVert\mathbf{b}_j\rVert_\infty + \sqrt{2\ln(2/\tilde{\delta})} \overline{\sigma} \lVert \mathbf{b}_j \rVert_2 } \ge 1-\tilde{\delta}
    \end{equation*}
    for all $\tilde{\delta} \in (0,1)$, cf.~Theorem~\ref{thm:errorbounds_nonuniform}(b). Hence, leveraging $\lVert\mathbf{b}_j\rVert_{\infty} \le \lVert\mathbf{b}_j\rVert_2 \le (1/\rho) \sqrt{ \lVert \tilde{\mathbf{A}}_t \rVert}$, choosing $\tilde{\delta} = \delta/(\pi_t n_\phi)$, and applying the union bound over all $j\in\mathbb{N}_{n_\phi}$ and all $t \in \mathbb{N}$, we obtain \eqref{eq:bound_noise_param} with $\gamma^M_t(\delta) = \gamma^{\mathrm{bnd}}_t(\delta) = \frac{2}{3}\ln(2\pi_t n_\phi/\delta)~ \overline{m} + \sqrt{2\ln(2\pi_t n_\phi/\delta)}~\overline{\sigma}$.
    
    \paragraph{(c)} If $M_i \in \mathcal{SE}(\nu_M^2,\alpha_M)$ for all $i \in \mathbb{N}_t$, then, by Bernstein's inequality for the sum of weighted sub-exponential random variables \citep[Theorem~2.8.2]{vershynin2018high}, \citep[Proposition~2.9]{wainwright2019high}, it holds that
    \begin{equation*}
        \prob{\lvert\mathbf{b}_j^\top \mathbf{M}_t\rvert \le  \max\left\{2\ln(2/\tilde{\delta})\, \alpha_M \lVert \mathbf{b}_j\rVert_\infty,~ \sqrt{2\ln(2/\tilde{\delta})}\, \nu_M \lVert \mathbf{b}_j\rVert_2\right\}} \ge 1-\tilde{\delta}
    \end{equation*}
    for all $\tilde{\delta} \in (0,1)$, cf.~Theorem~\ref{thm:errorbounds_nonuniform}(c). Hence, leveraging $\lVert\mathbf{b}_j\rVert_{\infty} \le \lVert\mathbf{b}_j\rVert_2 \le (1/\rho) \sqrt{ \lVert \tilde{\mathbf{A}}_t\rVert_2}$, choosing $\tilde{\delta} = \delta/(\pi_t n_\phi)$, and applying the union bound over all $j\in\mathbb{N}_{n_\phi}$ and all $t \in \mathbb{N}$, we obtain \eqref{eq:bound_noise_param} with $\gamma^{M}_t(\delta) = \gamma^{\mathcal{SE}}_t(\delta) = \max\left\{2\ln(2\pi_t n_\phi/\delta)\, \alpha_M,~ \sqrt{2\ln(2\pi_t n_\phi/\delta)}~ \nu_M\right\}$.
   
    \paragraph{(d)} If $M_i \in \mathcal{L}^2(\sigma_M^2)$ for all $i \in \mathbb{N}_t$, then, by the Chebychev inequality~\citep[Corollary~1.2.5]{vershynin2018high}, it holds that
    \begin{equation*}
        \prob{\lvert\mathbf{b}_j^\top \mathbf{M}_t\rvert \le \sqrt{1/\tilde{\delta}}~ \sigma_M \lVert \mathbf{b}_j \rVert_2 } \ge 1-\tilde{\delta}
    \end{equation*}
    for all $\tilde{\delta} \in (0,1)$, cf.~Theorem~\ref{thm:errorbounds_nonuniform}(d). Hence, leveraging $\lVert\mathbf{b}_j\rVert_2 \le (1/\rho) \sqrt{ \lVert \tilde{\mathbf{A}}_t \rVert_2}$, choosing $\tilde{\delta} = \delta/(\pi_t n_\phi)$, and applying the union bound over all $j\in\mathbb{N}_{n_\phi}$ and all $t \in \mathbb{N}$, we obtain \eqref{eq:bound_noise_param} with $\gamma^{M}_t(\delta) = \gamma^{\mathcal{L}2}_{t}(\delta) = \sqrt{\pi_t n_\phi/\delta}~ \sigma_M$.
\end{proof}

Notably, in this work, we do not make assumptions on the richness of the available data \eqref{eq:dataset}, which hinders the convergence analysis of the parameter estimation error from Theorem~\ref{thm:errorbounds_params}. For such an analysis, a persistent excitation assumption is required, such as $\bm{\Phi}_t \bm{\Phi}_t^\top \succ ct\mathbf{I}_{n_\phi}$ for some $c>0$, cf. \citep[Definition~3]{musavi2024identification}. Then, it follows that the terms $\lVert\mathbf{A}_t\rVert_2$ and $\sqrt{\lVert\tilde{\mathbf{A}}_t\rVert_2} \le \sqrt{\lVert\mathbf{A}_t\rVert_2}$ decay in the rates of $\mathcal{O}(1/t)$ and $\mathcal{O}(1/\sqrt{t})$, respectively, using $\mathbf{A}_t \succ\tilde{\mathbf{A}}_t=\mathbf{A}_t-\mathbf{A}_t^2$. For the case of $\pi_t = 1$ (i.e., the parameter estimation error bound from Theorem~\ref{thm:errorbounds_params} holds nonuniform in time), we recover common convergence rates reported in the literature \cite{simchowitz2018learning,musavi2024identification}.

\section{Comparison of Derivation of Error Bounds}\label{app:compare_errorbounds}

This appendix provides a more technical discussion on the differences between the derivation of the proposed probabilistic uniform error bounds (Theorem~\ref{thm:errorbounds_uniform}) and related bounds from the literature \cite{abbasi2013online,Fiedler2021}. 
In particular, under conditionally $\sigma_M$-sub-Gaussian noise $M_1,\dots,M_t$, \citep[Theorem~3.11]{abbasi2013online} yields the probabilistic uniform error bound (cf. \citep[Equation~(7)]{fiedler2024safety}
\begin{equation} \label{eq:abbasi-yadkori}
    \prob{\forall x \in \mathcal{X},~ t \in \mathbb{N}:~\lvert f(x) - \mu_t(x)\rvert \le B\sigma_t(x) + \frac{\sigma_M}{\rho}\sqrt{2\ln\left(\det\left(\mathbf{I}_t + \rho^{-2}\mathbf{K}_t\right)\right) + 2\ln(1/\delta)}~\sigma_t(x) } \ge 1-\delta,
\end{equation}
valid for any $\rho >0$ and $\delta \in (0,1)$, with $\sigma_t(x)$ from \eqref{eq:kervar}. In contrast, for i.i.d. sub-Gaussian noise $M_1,\dots,M_t$ with $M_i \in \mathcal{SG}(\sigma_M)$, $i\in\mathbb{N}_t$, \citep[Proposition~2]{Fiedler2021} yields the probabilistic uniform error bound
\begin{equation} \label{eq:fiedler}
    \forall t \in \mathbb{N}:~\prob{\forall x \in \mathcal{X}:~ \lvert f(x) - \mu_t(x)\rvert \le B\sigma_t(x) + \sqrt{t + 2\sqrt{t\ln(1/\delta)} + 2\ln(1/\delta)}~\sigma_M \lVert \mathbf{h}_t(x)\rVert_2 } \ge 1-\delta,
\end{equation}
valid for any $\rho >0$ and $\delta \in (0,1)$, where $\mathbf{h}_t(x) = (\mathbf{K}_t+\rho^2\mathbf{I}_t)^{-1}\mathbf{k}_t(x)$. Note that a time-uniform version of \eqref{eq:fiedler} is straightforwardly obtained from the union bound by setting $\delta \leftarrow \delta/\pi_t$ with some $\{\pi_t\}_{t=1}^{\infty}$ such that $\sum_{t=1}^{\infty}\pi_t^{-1} = 1$, e.g., $\pi_t = \pi^2 t^2 / 6$ \cite{Srinivas2009}.

For the proposed bounds from Theorem~\ref{thm:errorbounds_uniform} as well as for the comparison bounds, the derivation starts with the triangle inequality~\eqref{eq:bound_triangle}, i.e.,
\begin{equation*}
    \lvert f(x) - \mu_t(x) \rvert \le \lvert f(x) - \mathbf{f}^{\top}\mathbf{h}_t(x) \rvert + \lvert \mathbf{M}_t(\omega)^{\top}\mathbf{h}_t(x) \rvert,
\end{equation*}
where the noise-free component $\mathbf{f}_t$ and the noise component $\mathbf{M}_t(\omega)$ stem from the decomposition of the output data $\mathbf{y}_t$, i.e., $\mathbf{y}_t = \mathbf{f}_t + \mathbf{M}_t(\omega)$ (see Appendix~\ref{app:proof_generalbound}). In order to upper-bound the term $\lvert f(x) - \mathbf{f}^{\top}\mathbf{h}_t(x) \rvert$, representing the uncertainty that stems from lack of exploration of the function space, the comparison methods \cite{abbasi2013online,Fiedler2021} employ $\lvert f(x) - \mathbf{f}^{\top}\mathbf{h}_t(x) \rvert \le B \sigma_t(x)$ \citep[Theorem~2]{Chowdhury2017}, where $B\ge\norm{f}_{\mathcal{H}_k}$ is an upper-bound on the RKHS norm of the unknown function, see Assumption~\ref{asm:system}. In contrast, we employ the sharper bound  $\lvert f(x) - \mathbf{f}^{\top}\mathbf{h}_t(x) \rvert \le B \tilde{\sigma}_t(x) = \sqrt{\sigma^2_{t}(x) - \rho^2 \norm{\mathbf{h}_t(x)}_2^2}$, see Appendix~\ref{app:proof_generalbound}. Via Lemma~\ref{lem:errorbound_general}, probabilistic uniform error bounds are then obtained when a bound~\eqref{eq:noisebound_uniform} on the noise term $\lvert\mathbf{M}_t^{\top}\mathbf{h}_t(x)\rvert$ is available, i.e., 
\begin{equation*}
    \mathbb{P}\left[ \forall x \in \mathcal{X},\,t\in\mathbb{N}:\,\lvert\mathbf{M}_t^{\top}\mathbf{h}_t(x)\rvert \le {\eta}^{M}_t(x)\right] \geq 1 - \delta.
\end{equation*}

Notably, the crucial difference between the proposed approach and the comparison methods lies in the derivation of this probabilistic bound~\eqref{eq:noisebound_uniform} on the noise term. For instance, the Cauchy-Schwarz inequality is applied in \cite{abbasi2013online} to separate the noise term $\lvert\mathbf{M}_t(\omega)^{\top}\mathbf{h}_t(x)\rvert$ as
\begin{equation}
    \lvert\mathbf{M}_t(\omega)^{\top}\mathbf{h}_t(x)\rvert = \lvert\mathbf{M}_t(\omega)^{\top}(\mathbf{K}_t+\rho^2\mathbf{I}_t)^{-1}\mathbf{k}_t(x)\rvert \le \sqrt{\mathbf{M}_t(\omega)^{\top}\mathbf{K}_t(\mathbf{K}_t+\rho^2\mathbf{I}_t)^{-1}\mathbf{M}_t(\omega)}~\sigma_t(x).\label{eq:cauchy_abbasi}
\end{equation}
Furthermore, via a self-normalized martingale inequality (cf. \cite{Chowdhury2017}), Abbasi-Yadkori~\yrcite{abbasi2013online} obtains the probabilistic bound
\begin{equation}
    \prob{\forall t \in \mathbb{N}:~\sqrt{\mathbf{M}_t^{\top}\mathbf{K}_t(\mathbf{K}_t+\rho^2\mathbf{I}_t)^{-1}\mathbf{M}_t} \le \frac{\sigma_M}{\rho}\sqrt{2\ln\left(\det\left(\mathbf{I}_t + \rho^{-2}\mathbf{K}_t\right)\right) + 2\ln(1/\delta)}} \ge 1-\delta. \label{eq:prob_abbasi}
\end{equation}
Importantly, since the noise term $\sqrt{\mathbf{M}_t^{\top}\mathbf{K}_t(\mathbf{K}_t+\rho^2\mathbf{I}_t)^{-1}\mathbf{M}_t}$ is independent of the decision $x$, the probabilistic bound~\eqref{eq:prob_abbasi} automatically holds uniformly over all $x \in \mathcal{X}$. Thus, the combination of the Cauchy-Schwarz-based inequality~\eqref{eq:cauchy_abbasi} and the probabilistic bound~\eqref{eq:prob_abbasi} yields the probabilistic uniform error bound~\eqref{eq:abbasi-yadkori}.

Similarly, Fiedler et al.~\yrcite{Fiedler2021} apply the Cauchy-Schwartz inequality to separate the noise term $\lvert\mathbf{M}_t(\omega)^{\top}\mathbf{h}_t(x)\rvert$ as
\begin{equation}
    \lvert\mathbf{M}_t(\omega)^{\top}\mathbf{h}_t(x)\rvert \le \lVert \mathbf{M}_t(\omega) \rVert_2~ \lVert \mathbf{h}_t(x) \rVert_2.\label{eq:cauchy_fiedler}
\end{equation}
Then, the concentration inequality from \citep[Theorem~2.1]{hsu2012tail} is leveraged to obtain the probabilistic bound
\begin{equation}
    \forall t \in \mathbb{N}:~\prob{\lVert \mathbf{M}_t \rVert_2 \le \sqrt{t + 2\sqrt{t\ln(1/\delta)} + 2\ln(1/\delta)}~\sigma_M} \ge 1-\delta. \label{eq:prob_fiedler}
\end{equation}
Again, the noise term $\lVert \mathbf{M}_t \rVert_2$ is independent of the decision $x$, thus, the probabilistic bound~\eqref{eq:prob_fiedler} holds uniformly over all $x \in \mathcal{X}$. Therefore, the combination of the Cauchy-Schwarz-based inequality~\eqref{eq:cauchy_fiedler} and the probabilistic bound~\eqref{eq:prob_abbasi} yields the probabilistic uniform error bound~\eqref{eq:fiedler}.

In contrast, for the proposed bounds in Theorem~\ref{thm:errorbounds_uniform}, we do \textit{not} apply the Cauchy-Schwarz inequality to obtain inequalities of the form \eqref{eq:cauchy_abbasi} or \eqref{eq:cauchy_fiedler} to separate the decision-dependent term $\mathbf{h}_t(x)$ and the noise $\mathbf{M}_t$. Instead, we treat the noise term $\lvert\mathbf{M}_t^{\top}\mathbf{h}_t(x)\rvert$ as a (decision-dependent) weighted sum of random variables $M_1,\dots,M_t$ (equivalently, a projection of the vector-valued random variable $\mathbf{M}_t$), and thus directly apply distribution-specific concentration inequalities \cite{vershynin2018high,wainwright2019high,ao2025stochastic}. 
While this avoids the use of the conservative Cauchy-Schwarz inequality, the dependence of the noise term $\lvert\mathbf{M}_t^{\top}\mathbf{h}_t(x)\rvert$ on the decision $x \in \mathcal{X}$ renders the probabilistic noise bounds \textit{nonuniform} as in~\eqref{eq:noisebound_nonuniform}, i.e., 
\begin{equation*}
    \forall x \in \mathcal{X}~,t\in\mathbb{N}:~ \mathbb{P}\left[\lvert\mathbf{M}_t^{\top}\mathbf{h}_t(x)\rvert \le \tilde{\eta}_t^{M}(x)\right] \geq 1 - \delta,
\end{equation*}
for some $\tilde{\eta}^M_t$ and $\delta\in(0,1)$; see Theorem~\ref{thm:errorbounds_nonuniform}. 
Therefore, we employ a discretization approach to derive uniform bounds as in \eqref{eq:errorbound_uniform}; see the proof of Theorem~\ref{thm:errorbounds_uniform} in Appendix~\ref{app:proof_uniform} for details. We remark that the error bounds proposed by \cite{reed2025error,molodchyk2025towards} follow a similar strategy regarding the treatment of the noise term $\lvert\mathbf{M}_t^{\top}\mathbf{h}_t(x)\rvert$ and concentration inequalities; however, both works yield \textit{nonuniform}\footnote{Note that, contrary to its name, \citep[Corollary~1]{reed2025error} provides a nonuniform probabilistic error bound similar to \eqref{eq:errorbound_nonuniform}, instead of the desired probabilistic uniform error bound of the form~\eqref{eq:errorbound_uniform}.} probabilistic error bounds as in \eqref{eq:errorbound_nonuniform}, thus failing to provide strong safety guarantees as for uniform error bounds~\eqref{eq:errorbound_uniform}.

Numerical experiments (see Section~\ref{sec:eval} and Appendix~\ref{app:addeval}) suggest that the proposed approach introduces less conservatism than the comparison approaches discussed above that directly apply the Cauchy-Schwarz inequality to the noise term $\lvert\mathbf{M}_t^{\top}\mathbf{h}_t(x)\rvert$ as in \eqref{eq:cauchy_abbasi} and \eqref{eq:cauchy_fiedler}. Additionally, the proposed approach enables the user to adjust the error bounds to the application at hand, particularly by offering flexibility in the noise model and by allowing the user to shape the bounds via the grid constant $\zeta_t$.

\section{Additional Material for Numerical Evaluation} \label{app:addeval}

This appendix provides additional information and implementation details on the numerical experiments conducted in Section~\ref{sec:eval}.

\paragraph{Bounded Noise}
In Sections~\ref{sec:eval_confidence}--\ref{sec:eval_control}, we consider bounded noise. Thus, we compare our proposed error bounds from Theorem~\ref{thm:errorbounds_uniform}(a)--(b) with the following two related bounds from the literatue: i) the bound from \citep[Theorem~3.11]{abbasi2013online} as given in \eqref{eq:abbasi-yadkori}, i.e., 
\begin{equation*}
    \prob{\forall x \in \mathcal{X},~ t \in \mathbb{N}:~\lvert f(x) - \mu_t(x)\rvert \le B\sigma_t(x) + \frac{\sigma_M}{\rho}\sqrt{2\ln\left(\det\left(\mathbf{I}_t + \rho^{-2}\mathbf{K}_t\right)\right) + 2\ln(1/\delta)}~\sigma_t(x) } \ge 1-\delta,
\end{equation*}
and ii) the bound from \citep[Proposition~2]{Fiedler2021}, 
adjusted to hold uniformly in both $x\in\mathcal{X}$ and in $t\in\mathbb{N}$ via the union bound using $\delta\leftarrow \delta/\pi_t$ with $\pi_t = \pi^2 t^2 / 6$ \cite{Srinivas2009}, resulting in
\begin{equation*}
    \prob{\forall x \in \mathcal{X},~t \in \mathbb{N}:~ \lvert f(x) - \mu_t(x)\rvert \le B\sigma_t(x) + \sqrt{t + 2\sqrt{t\ln(\pi_t/\delta)} + 2\ln(\pi_t/\delta)}~\sigma_M \lVert \mathbf{h}_t(x)\rVert_2 } \ge 1-\delta.
\end{equation*}
This modification is implemented to ensure a fair comparison between all methods, since the focus lies on uniformity in both $x\in\mathcal{X}$ and $t \in\mathbb{N}$.

\begin{figure*}[!t]
    \includegraphics[page=5, clip, trim=2cm 7.5cm 2cm 8cm, width=\textwidth]{plots.pdf}
    \caption{Comparison of the size of the uncertainty region via the integral of the probabilistic uniform error bound $\overline{\eta}_t$ \eqref{eq:errorbound_uniform} over the input domain $\mathcal{X} \subset \mathbb{R}^{n_x}$ for the kernel $k(x,x') = \exp{-\lVert x-x'\rVert_2^2/l_{\mathrm{SE}}^2}$ and increasing number of data points $t$ with input dimension $n_x \in\{1,\,2,\,3\}$ and lengthscale $l_{\mathrm{SE}} \in \{1,\,2,\,3\}$. The shaded areas show the $5\%$- to $95\%$ percentile range over $100$ Monte Carlo data collection runs.}
    \label{fig:size_errbnd}
\end{figure*}

Figure~\ref{fig:size_errbnd} extends Figure~\ref{fig:size_errbnd_nx} by also showing the comparative behavior of the discussed bounds for varying lengthscales $l_{\mathrm{SE}} \in \{1,2,3\}$ of the employed squared exponential kernel. The case of small lengthscales is particularly interesting since choosing smaller lengthscales can compensate for model misspecifications (see \citep[Section 3.2]{Fiedler2021}). As evident from Figure~\ref{fig:size_errbnd}, the relative improvement of the proposed bounds over Abbasi-Yadkori's bound increases with decreasing lengthscale. The reason for this is that Abbasi-Yadkori's bound~\eqref{eq:abbasi-yadkori} expresses the uncertainty solely via the GP posterior variance $\sigma_t(x)$ from~\eqref{eq:kervar}, thus overestimating the uncertainty induced by the noise corruption in the data in yet unexplored regions; see Figure~\ref{fig:variance_comparison}. In contrast, the proposed bounds leverage the favorable kernel- and data-dependent terms $\rho^2 \lVert \mathbf{h}_t\rVert^2_2$ and $\rho^2 \lVert \mathbf{h}_t\rVert^2_{\infty}$ to quantify the noise-induced uncertainty in the data.

Additionally, the proposed bound from Theorem~\ref{thm:errorbounds_uniform}(b) improves upon the comparison bounds specifically for a large amount of data and for $n_x=1$. 
This observation can be explained as follows: 
In contrast to the Hoeffding-type bound~\eqref{eq:errorbound_bnd_hoeffding_uniform} that is solely based on the term $\rho \lVert \mathbf{h}_t\rVert_2$, the Bernstein-type bound~\eqref{eq:errorbound_bnd_bernstein_uniform} consists of a weighted sum of the terms $\rho \lVert \mathbf{h}_t\rVert_\infty$ and $\rho \lVert \mathbf{h}_t\rVert_2$ depending on the absolute bound $\overline{m}$ and the standard deviation $\overline{\sigma} < \overline{m}$, respectively. When there are lots of similar entries in the vector $\mathbf{h}_t(x)$ (e.g., due to dense sampling of the input domain $\mathcal{X}$), the $\infty$-norm of $\mathbf{h}_t(x)$ is substantially smaller than its $2$-norm. 
This leads to $\rho \lVert \mathbf{h}_t\rVert_\infty \ll \rho \lVert \mathbf{h}_t\rVert_2$ if $t$ is large, see Figure~\ref{fig:variance_comparison}, resulting in more emphasis on the standard deviation $\overline{\sigma}$ instead of the noise bound $\overline{m}$ in the Bernstein-type bound \eqref{eq:errorbound_bnd_bernstein_uniform} that thus outperforms the Hoeffding-type bound~\eqref{eq:errorbound_bnd_hoeffding_uniform} for $ \overline{\sigma} \ll \overline{m}$. Notably, for increasing input dimension $n_x$, the considered maximum number of $1\ 000$ samples is likely not sufficient for dense sampling of the input domain; thus, this improvement of the Bernstein-type bound~\eqref{eq:errorbound_bnd_bernstein_uniform} over the Hoeffding-type bound~\eqref{eq:errorbound_bnd_hoeffding_uniform} is not visible in Figure~\ref{fig:size_errbnd} for $n_x \in \{2,\,3\}$. A similar observation is made when varying the size $r$ of the input domain $\mathcal{X} = [0,\,r]$, as depicted in Figure~\ref{fig:size_errbnd_size}.
 
\begin{figure*}[!t]
    \includegraphics[page=6, clip, trim=2cm 5.8cm 2cm 17cm, width=\textwidth]{plots.pdf}
    \caption{Comparison of the size of the uncertainty region via the integral of the probabilistic uniform error bound ${\eta}_t$ \eqref{eq:errorbound_uniform} over the input domain $\mathcal{X} = [0,\,r]$ for the kernel $k(x,x') = \exp{-(x-x')^2}$ and increasing number of data points $t$, with varying size $r$ of the input domain. The shaded areas show the $5\%$- to $95\%$ percentile range over $100$ Monte Carlo data collection runs.}
    \label{fig:size_errbnd_size}
\end{figure*}

Since the results presented thus far have considered only the squared exponential kernel, we now investigate the comparative behavior across different kernel types. In particular, we consider the linear kernel $k_{\mathrm{lin}}(\cdot,\cdot)$ and the Matérn kernels $k_\nu(\cdot,\cdot)$ with $\nu\in\{1/2,~3/2\}$ with parameter $l_\nu>0$, defined as
\begin{subequations}\label{eq:kernels}
    \begin{align}
        k_{\mathrm{lin}}(x,x') &= x^{\top} x',\\
        k_{1/2}(x,x') &= \exp{-\frac{\lVert x-x'\rVert_1}{l_{\nu}}},\\
        k_{3/2}(x,x') &= \left(1+\sqrt{3}\frac{\lVert x-x'\rVert_1}{l_{\nu}}\right)\exp{-\sqrt{3}\frac{\lVert x-x'\rVert_1}{l_{\nu}}},
    \end{align}
\end{subequations}
Results are shown in Figure~\ref{fig:size_errbnd_ker} for $l_\nu = 1$ and $n_x = 1$, yielding similar behaviour as for the previously considered scenarios. In particular, for the linear kernel, the proposed bound from Theorem~\ref{thm:errorbounds_uniform}(a) performs comparably to Abbasi-Yadkori's bound. However, as before, the proposed bound from Theorem~\ref{thm:errorbounds_uniform}(b) outperforms all other bounds in the considered scenario when a large amount of data is available.

\begin{figure*}[!t]
    \includegraphics[page=6, clip, trim=2cm 17.3cm 2cm 5.6cm, width=\textwidth]{plots.pdf}
    \caption{Comparison of the size of the uncertainty region via the integral of the probabilistic uniform error bound ${\eta}_t$ \eqref{eq:errorbound_uniform} over the input domain $\mathcal{X} [0,\,r]$ and increasing number of data points $t$, with varying kernel $k(x,x')$ from \eqref{eq:kernels} with $l_\nu = 1$ and $n_x = 1$. The shaded areas show the $5\%$- to $95\%$ percentile range over $100$ Monte Carlo data collection runs.}
    \label{fig:size_errbnd_ker}
\end{figure*}

Finally, we comment on the choice of the grid constant $\zeta_t$, which affects the scaling factors and discretization terms of the proposed bounds; see Table~\ref{tab:bounds}. Thus far, we have chosen $\zeta_t$ such that $\Delta_t^{\mathcal{SG}} = \Delta_{1,t}^{\mathrm{bnd}} = \Delta_{2,t}^{\mathrm{bnd}} = 0.001$ for all times $t \in \mathbb{N}_{1\,000}$. In contrast, the GPR literature suggests that the discretization terms decay over time, specifically when regret bounds are of interest \cite{Srinivas2009}.  Figure~\ref{fig:discerr} shows the mean size of the uncertainty region over the learning steps resulting from the proposed sub-Gaussian bound from Theorem~\ref{thm:errorbounds_uniform}(a) for different choices of the discretization terms $\Delta_t^{\mathcal{SG}}$. 
The corresponding grid constants $\zeta_t$ are computed using MATLAB's \texttt{fzero} solver. 
As can be seen, for the considered scenario, the choice of $\Delta^{\mathcal{SG}}_t$ has no strong effect on the mean uncertainty size, as long as $\Delta^{\mathcal{SG}}_t$ is chosen sufficiently small. However, we remark that choosing $\Delta^{\mathcal{SG}}_t \approx 0$ (and thus $\zeta_t \approx 0$) right from the start is not feasible since this would lead to a blow up of the scaling factor $\beta_t^{\mathcal{SG}} \to \infty$ due to the dependence on the covering number.

\begin{figure}[!t]
\centering
    \includegraphics[page=7, clip, trim=6.6cm 11.5cm 6.9cm 11.3cm, width=0.5\textwidth]{plots.pdf}
    \caption{Comparison of the mean size of the uncertainty region via the integral of the probabilistic uniform error bound ${\eta}_t = \eta_t^{\mathcal{SG}}$ (see Theorem~\ref{thm:errorbounds_uniform}(a)) over the input domain $\mathcal{X} =[0,\,10]$ for the kernel $k(x,x') = \exp{-(x-x')^2}$ and increasing number of data points $t$ with varying discretization terms $\Delta_t^{\mathcal{SG}}$ (see Table~\ref{tab:bounds}).}
    \label{fig:discerr}
\end{figure}

\paragraph{Sub-exponential Noise}
In Section~\ref{sec:eval_subE}, we consider sub-exponential noise. Thus, we compare our proposed error bounds from Theorem~\ref{thm:errorbounds_uniform}(c)--(d) and Proposition~\ref{prop:errorbound_HT} with the bound from \citep[Lemma 8]{Chowdhury2019bayesian}. In particular, if $k(x,x)\le 1$ for all $x\in\mathcal{X}$, then \citep[Lemma 8]{Chowdhury2019bayesian} yields
\begin{equation} \label{eq:chowdhury}
    \prob{\forall x \in \mathcal{X},~ t \in \mathbb{N}:~\lvert f(x) - \hat{\mu}_t(x)\rvert \le B\sigma_t(x) + \frac{3}{\rho}\,b_t\,\sqrt{2\ln\left(\det\left(\mathbf{I}_t + \rho^{-2}\mathbf{K}_t\right)\right) + 2\ln(1/\delta)}~\sigma_t(x) } \ge 1-\delta
\end{equation}
 for all $\delta\in(0,1)$, where $\hat{\mu}_t(x) = \hat{\mathbf{y}}_t^\top\left(\mathbf{K}_t+\rho^2\mathbf{I}_t\right)^{-1}\mathbf{k}_t(x)$ is the truncated mean predictor using the truncated outputs $\hat{\mathbf{y}}_t = [\hat{y}_1,\dots,\hat{y}_t]^{\top}$ with $\hat{y}_i \doteq y_i \mathbbm{1}_{\lvert y_i\rvert \le b_i}$ for $i\in\mathbb{N}_t$, the indicator function $\mathbbm{1}_{(\cdot)}$, and the truncation level $b_t = \overline{v}\, t^{\frac{1}{4}}$ with some $\overline{v}>0$ such that $\mean{\lvert Y_t\rvert^2} \le \overline{v}^2$ for all $t \in\mathbb{N}$, where $Y_t = f(x_t) + M_t$ (cf. Proposition~\ref{prop:errorbound_HT}). Notably, the error bound~\eqref{eq:chowdhury} differs from the proposed error bounds in Theorem~\ref{thm:errorbounds_uniform} since i) a truncation of the output data is performed for the predictor $\hat{\mu}_t(x)$, and ii) the bound~\eqref{eq:chowdhury} relies on the statistics of the output $Y_t$, which depends on the unknown function $f(x_t)$. Thus, in order to obtain the parameter $\overline{v}$ that bounds the second moment of the ouput $Y_t$, additional knowledge of the function $f(x_t)$ is required, cf.~\eqref{eq:momentbound}. Considering $M_i~\in \mathcal{SE}(4\sigma_M^2,\, 4\sigma_M^2)$ with $\var{M_i} = 2\sigma_M^4$ and $\sigma_M=0.1$, as well as the additional knowledge $\lvert f(x_t)\rvert <2$, we obtain $\ol{v}=2.01$.

Although the truncation approach from Chowdhury \& Gopalan~\yrcite{Chowdhury2019bayesian} allows to employ an error bound similar to \eqref{eq:abbasi-yadkori} with logarithmic dependence on the confidence $\delta$, the bound is additionally scaled by the factor $b_t = \overline{v} t^{\frac{1}{4}}$, which has an unfavorable dependence on the number of data points, similar to Fiedler et al.'s bound~\eqref{eq:fiedler} and Proposition~\ref{prop:errorbound_HT}.

Regarding the proposed Chebyshev-based bound from Theorem~\ref{thm:errorbounds_uniform}(d), we remark that the discretization term $\Delta_t^{\mathcal{L}2}$ from Table~\ref{tab:bounds} cannot be chosen arbitrarily small via the grid constant $\zeta_t$ since the discretization term depends on the square root of the covering number. This results in a blow-up of $\Delta_t^{\mathcal{L}2}$ for an increasing number of data points, yielding the rapid increase in uncertainty size as depicted in Figure~\ref{fig:size_errbnd_subE}. For the simulations in Section~\ref{sec:eval_subE}, we choose the grid constant $\zeta_t$ for the Chebyshev-based bound from Theorem~\ref{thm:errorbounds_uniform}(d) by minimizing the weighted sum $\beta_t^{\mathcal{L}2}$ + 100 $\Delta_t^{\mathcal{L}2}$ (see Table~\ref{tab:bounds}) via MATLAB's \textit{fmincon} solver.

\end{document}